\documentclass{article}

\usepackage{microtype}
\usepackage{graphicx}
\usepackage{subcaption}
\usepackage{booktabs} 

\usepackage{hyperref}

\usepackage[preprint]{icml_arxiv}
\usepackage[numbers]{natbib}

\usepackage{amsmath}
\usepackage{amssymb}
\usepackage{mathtools}
\usepackage{amsthm}

\usepackage{array}

\usepackage[capitalize,noabbrev]{cleveref}

\theoremstyle{plain}
\newtheorem{theorem}{Theorem}[section]

\newtheorem{lemma}[theorem]{Lemma}
\newtheorem{corollary}[theorem]{Corollary}
\theoremstyle{definition}
\newtheorem{definition}[theorem]{Definition}
\newtheorem{assumption}[theorem]{Assumption}
\theoremstyle{remark}
\newtheorem{remark}[theorem]{Remark}

\newcommand{\E}{\mathbb{E}}
\newcommand{\iid}{\overset{\text{iid}}{\sim}}

\newcommand{\R}{\mathbb{R}}

\newcommand{\Zin}{\underline{Z}}           
\newcommand{\Zout}{\bar{Z}}

\newcommand{\BigO}{\mathcal{O}}

\DeclareMathOperator{\sign}{sign}
\newcommand{\Loss}{\mathcal{L}}
\newcommand{\Var}{\mathrm{Var}}
\newcommand{\Normal}{\mathcal{N}}  

\newcommand{\abs}[1]{\left|#1\right|}
\usepackage{xcolor}
\usepackage{enumitem}

\newcommand{\initA}{Init[A]}
\newcommand{\initB}{Init[B]}

\usepackage{colortbl}

\usepackage[textsize=tiny]{todonotes}

\icmltitlerunning{Learning Rate Scaling across LoRA Ranks and Transfer to Full Finetuning}

\begin{document}
\twocolumn[
    \icmltitle{Learning Rate Scaling across LoRA Ranks and Transfer to Full Finetuning}


  \icmlsetsymbol{equal}{*}

  \begin{icmlauthorlist}
        \icmlauthor{Nan Chen}{jhu}
    \icmlauthor{Soledad Villar}{jhu}
    \icmlauthor{Soufiane Hayou}{jhu}
  \end{icmlauthorlist}

  \icmlaffiliation{jhu}{Department of Applied Mathematics and Statistics, Johns Hopkins University, Maryland, United States of America.}

  \icmlcorrespondingauthor{Soledad Villar}{svillar3@jhu.edu}
  \icmlcorrespondingauthor{Soufiane Hayou}{hayou@jhu.edu}

  \icmlkeywords{Machine Learning, ICML}

  \vskip 0.3in
]



\printAffiliationsAndNotice{}  



\begin{abstract}
Low-Rank Adaptation (LoRA) is a standard tool for parameter-efficient finetuning of large models. While it induces a small memory footprint, its training dynamics can be surprisingly complex as they depend on several hyperparameters such as initialization, adapter rank, and learning rate. In particular, it is unclear \emph{how the optimal learning rate scales with adapter rank}, which forces practitioners to re-tune the learning rate whenever the rank is changed. In this paper, we introduce \emph{Maximal-Update Adaptation} ($\mu$A), a theoretical framework that characterizes how the ``optimal'' learning rate should scale with model width and adapter rank to produce stable, non-vanishing feature updates under standard configurations. $\mu$A is inspired from the Maximal-Update Parametrization ($\mu$P) in pretraining.
Our analysis leverages techniques from hyperparameter transfer and reveals that the optimal learning rate exhibits different scaling patterns depending on initialization and LoRA scaling factor.
Specifically, we identify two regimes: one where the optimal learning rate remains roughly invariant across ranks, and another where it scales inversely with rank.
We further identify a configuration that allows learning rate transfer from LoRA to full finetuning, drastically reducing the cost of learning rate tuning for full finetuning.
Experiments across language, vision, vision–language, image generation, and reinforcement learning tasks validate our scaling rules and show that learning rates tuned on LoRA transfer reliably to full finetuning.
\end{abstract}



    

\section{Introduction}

\begin{figure}[ht]
    \centering
    \includegraphics[width=0.95\columnwidth]{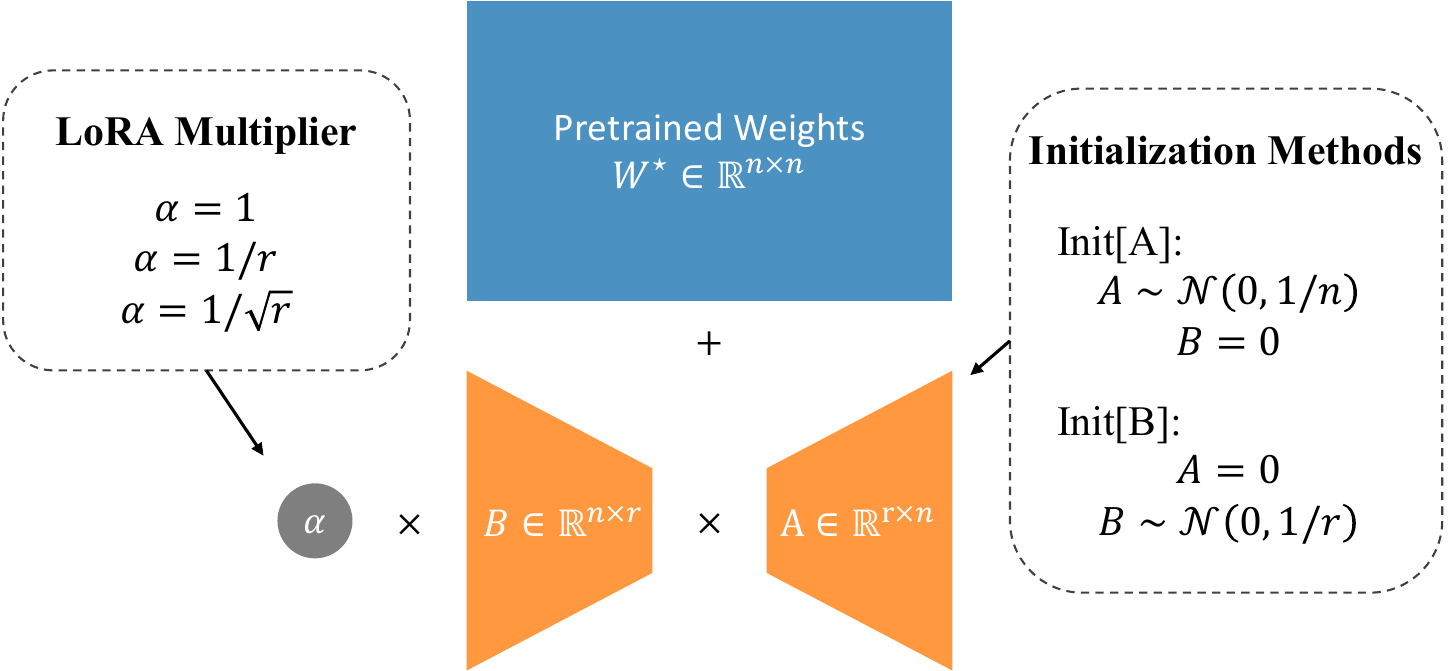}
    \caption{LoRA finetuning with standard design choices.}
    \label{img-design_choice}
\end{figure}

The widespread availability of large pretrained models has made finetuning the standard approach to adapt a general-purpose model to a specific downstream task~\cite{radford2018improving,devlin2019bert}. 
At the same time, model capability typically improves with scale, with state-of-the-art language and vision models now reaching hundreds of billions of parameters~\cite{liu2024deepseek,yang2025qwen3,team2025gemma}.
The improvement in performance with scale makes larger models attractive to practitioners, but it also makes full finetuning (FFT) increasingly expensive. While FFT remains the standard for tasks such as instruction-tuning~\cite{biderman2024lora}, parameter-efficient finetuning (PEFT) has become a default approach in many settings~\cite{han2024parameter}.
\begin{figure*}[t]
  \centering
  \begin{subfigure}[t]{0.33\textwidth}
    \centering
    \includegraphics[width=\linewidth]{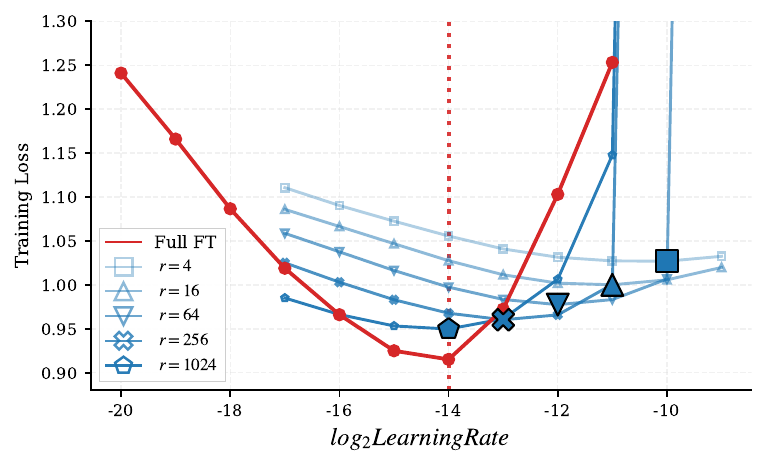}
    \caption{\initA{}, $\alpha=1$}
    \label{fig:llama3-tutu3-demonstration:a}
  \end{subfigure}\hfill
  \begin{subfigure}[t]{0.33\textwidth}
    \centering
    \includegraphics[width=\linewidth]{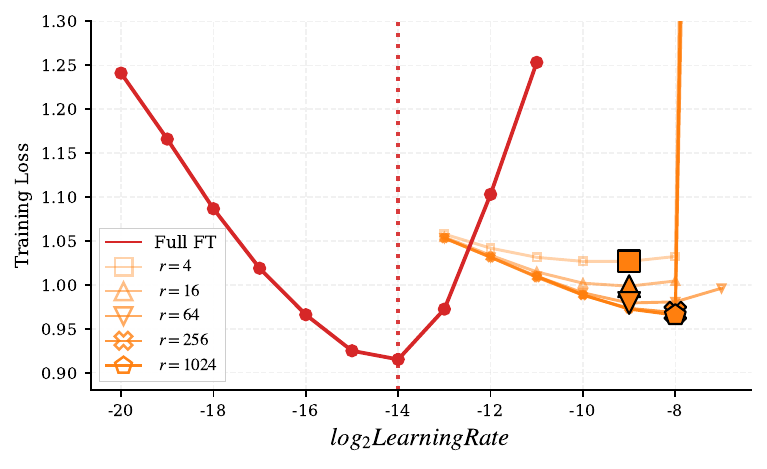}
    \caption{\initA{}, $\alpha=r^{-1}$}
    \label{fig:llama3-tutu3-demonstration:b}
  \end{subfigure}\hfill
  \begin{subfigure}[t]{0.33\textwidth}
    \centering
    \includegraphics[width=\linewidth]{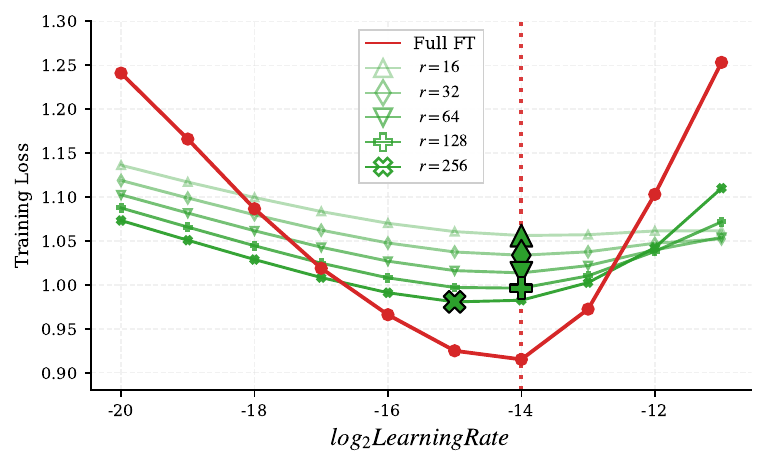}
    \caption{\initB{}, $\alpha=1$}
    \label{fig:llama3-tutu3-demonstration:c}
  \end{subfigure}
\caption{Learning rate sweeps for three standard LoRA configurations on Llama-3.2-1B (Tulu3). Each panel plots final training loss (EMA-smoothed) versus log-scale learning rate $\log_2(\eta)$ for multiple ranks (colored) and FFT (\textcolor[HTML]{d62728}{red}). Large markers indicate the optimal learning rate for each rank; the dashed red line marks the optimal FFT learning rate. The y-axis is clipped for readability.}
  \label{fig:llama3-tutu3-demonstration}
\end{figure*}
Low-Rank Adaptation (LoRA)~\cite{hu2022lora} is arguably the most popular parameter-efficient finetuning method. It adds a low-rank factorization $\alpha BA$ to the (frozen) pretrained weight matrix $W^\star\in\mathbb{R}^{n\times n}$, where $B\in\mathbb{R}^{n\times r}$, $A\in\mathbb{R}^{r\times n}$, $r$ is the LoRA rank, and $\alpha$ is a scaling multiplier. Only $B$ and $A$ are trainable in this setting.

Deploying LoRA in practice requires several design choices beyond the rank $r$.
To start from a no-op adapter, practitioners typically initialize one factor to random values and the other to zero; we denote these two common choices as \initA{} (random $A$, $B=0$) and \initB{} (random $B$, $A=0$)~\cite{hayou2024impact}.
In addition, common implementations use different rank-dependent multipliers, typically $\alpha\in\{1,\,r^{-1},\,r^{-1/2}\}$~\cite{kalajdzievski2023rank,biderman2024lora}; \cref{img-design_choice} summarizes these standard choices.
Because LoRA update is bilinear in $A$ and $B$, the initialization, multiplier $\alpha$, rank $r$, and learning rate $\eta$ are coupled and jointly determine the effective update magnitude. This coupling makes hyperparameter tuning brittle in practice.
Changing the rank or switching between initialization and multiplier choices can render a previously optimal learning rate suboptimal, forcing costly re-tuning.
\Cref{fig:llama3-tutu3-demonstration} provides a representative example on Llama-3.2-1B finetuned on the Tulu3 dataset, showing end-of-training loss as a function of learning rate for multiple ranks under three common configurations.
Under \initA{} with constant $\alpha=1$, the best learning rate decreases with rank $r$ (\cref{fig:llama3-tutu3-demonstration:a}).
Setting $\alpha=r^{-1}$ while keeping \initA{} yields a different pattern: the optimal learning rate becomes approximately rank-invariant, but the stable range is narrow—small increases above the optimum cause divergence (\cref{fig:llama3-tutu3-demonstration:b}).
With \initB{} and constant $\alpha=1$, the optimal learning rate is again rank-invariant and now aligns with the FFT optimum (\cref{fig:llama3-tutu3-demonstration:c}).
Notably, this optimal rate is more than an order of magnitude smaller than under \initA{} with $\alpha=r^{-1}$.
Taken together, these experiments show that the learning rate--rank relationship changes qualitatively across initialization and multiplier choices.
This raises a natural question: \emph{how does the optimal learning rate $\eta$ depend on rank $r$?}

To answer this question, we analyze LoRA finetuning in the joint limit $n, r \to \infty$.
Building on prior works analyzing infinite-width networks~\cite{yang2020feature,yang2021tensor,hayou2024impact,hayou2024lora+}, we study how initialization, multiplier $\alpha$, and learning rate $\eta$ jointly govern the magnitude of LoRA updates under a tractable theoretical setup.
This allows us to characterize when LoRA updates remain bounded yet non-vanishing in the large-$n$, large-$r$ regime—avoiding instability while ensuring learning does not stall. We call this regime the Maximal-Update Adaptation ($\mu$A).
From this analysis, we derive explicit scaling prescriptions for how the learning rate should depend on model width and LoRA rank under each configuration.
These prescriptions explain why some configurations (e.g., \initA{} with constant $\alpha$) require the learning rate to decrease with rank, while others (e.g., \initB{} with constant $\alpha$) admit rank-invariant learning rates.
Our analysis also connects LoRA to FFT.
For \initB{} with constant $\alpha$, the learning rate scaling that ensures stable feature learning matches that of FFT under the same assumptions.
This matching is a necessary condition for hyperparameter transfer between the two regimes.

We verify our theory through extensive experiments spanning diverse tasks, modalities, and training paradigms. 
Across all settings, the observed learning rate–rank patterns consistently match our derived scaling rules, confirming that $\mu$A scaling rules generalize across architectures and domains. Our contributions are summarized as follows:
\begin{itemize}
    \item  We introduce $\mu$A (Maximal-Update Adaptation), a theoretical framework for LoRA finetuning dynamics in the joint limit $n, r \to \infty$ and characterize how the optimal learning rate scales with model width and adapter rank to ensure maximal (stable) adaptation.
    \item Our analysis reveals two distinct regimes: one where the optimal learning rate scales inversely with rank, and one in which it remains rank-invariant. We identify a LoRA configuration where the optimal learning rate matches that of FFT (transfer from LoRA to FFT).
    
    \item Extensive experiments demonstrate that  $\mu$A scaling rules guarantee learning rate transfer across LoRA ranks and from LoRA and to FFT.
\end{itemize}

\section{Related Works}
\paragraph{Parameter-efficient finetuning.}
PEFT techniques fall into three main categories~\citep{han2024parameter, ji2025overview}: (i)~selectively updating a subset of parameters~\citep{collobert2008unified,guo2021parameter, liao2023parameter}; (ii)~adding lightweight adapters~\citep{rebuffi2017learning,houlsby2019parameter, lin2020exploring,he2022towards,hu2022lora}; or (iii)~tuning continuous prompts~\citep{li2021prefix, lester2021power, liu2024gpt}.
LoRA~\citep{hu2022lora} exemplifies the additive method and has inspired numerous variants targeting efficiency~\citep{valipour2023dylora,zhang2023adalora,dettmers2023qlora,kopiczko2024vera,ke2024unveiling,li2024vb,liu2024dora,hayou2025plop} and stability~\citep{hayou2024lora+,hayou2024impact,yen2025lora}.
Recent studies compare LoRA and FFT across gradient alignment~\citep{wang2024lora,wang2025lorapro}, expressive power~\citep{zeng2024the}, and spectral structure of weight updates~\citep{shuttleworth2024lora}.
Complementing these studies, we identify a LoRA configuration where the optimal learning rate matches that of FFT, guaranteeing learning rate transfer between the two methods.

\paragraph{Design choices of LoRA.}
Standard schemes randomly initialize either $A$ or $B$ using established techniques~\citep{glorot2010understanding,he2015delving}, setting the other to zero.
Alternative methods initialize weights from pretrained weights~\citep{buyukakyuz2024olora,meng2024pissa}, deterministic orthogonal bases~\citep{zhang2025the}, downstream data~\citep{yang2024corda,paischer2024one,wang2024lora}, or quantized models~\citep{li2024loftq}.
Prior theoretical analyses of initialization~\citep{hayou2024impact,libeyond} focus on the infinite-width regime and treat the rank as constant; in contrast, we study the joint limit where both width $n$ and rank $r$ grow.
For the LoRA multiplier $\alpha$, the original LoRA~\citep{hu2022lora} uses $\alpha=r^{-1}$ to keep update magnitudes controlled, rsLoRA~\citep{kalajdzievski2023rank} proposes $\alpha=r^{-1/2}$ for stability at larger ranks, and recent works increasingly adopt $\alpha=1$~\citep{gabrielsson2024compress,biderman2024lora,charakorn2025text}. \citet{schulman2025lora} empirically studies LoRA design choices but provides only heuristic rules for learning rate transfer between LoRA and FFT (e.g. dividing LoRA's learning rate by 10 to obtain FFT's learning rate). Another empirical work by \citet{milsom2025functionspacelearningrates} introduced a general rule for learning rate transfer across LoRA ranks but did not provide any explanation to the observed scaling rules. None of these works provide a theoretical analysis of learning rate transfer with LoRA finetuning. In this paper, we propose a principled approach to learning rate transfer across LoRA ranks and from LoRA to FFT.

\paragraph{Asymptotic scaling analysis.}
Our analysis builds on prior works analyzing the asymptotic behavior of neural networks as width approaches infinity~\citep{yang2020feature,yang2021tensor,yang2021tuning,hayou2024impact,hayou2024lora+}.
Related works apply similar frameworks to analyze depth scaling~\citep{hayou2019impact,hayou2021stable,he2023deep,hayou2023width,noci2023shaped,yang2024tensor}.
Most closely related to ours are~\citet{hayou2024impact,hayou2024lora+}, which analyze LoRA finetuning in the limit $n\to\infty$ but treat the rank $r$ as constant.
This analysis does not reveal how training dynamics depend on the rank $r$.
Here, we analyze the joint limit $n,r\to\infty$, yielding scaling prescriptions that depend on both network width and LoRA rank.
Empirically, the optimal learning rate closely match derived scaling rules.

\section{Setup and Definitions}
\label{sec:theory}
We introduce definitions and notation for our theoretical analysis, following \citet{hayou2024impact}.
\subsection{LoRA Layer and Features}
\begin{definition}[Low-Rank Adaptation (LoRA)]
\label{def.low-rank-adaptation}
Let $W^{\star}\in\R^{n\times n}$ be a pretrained weight matrix that remains fixed during finetuning. 
LoRA parameterizes the weights as
\[
W \;=\; W^{\star} + \alpha BA,
\]
where $B\in\R^{n\times r}$ and $A\in\R^{r\times n}$ are trainable factors with rank $r\ll n$ and
$\alpha\in\R$ is a scalar multiplier.
\end{definition}

For a generic LoRA-augmented linear layer, we denote the input by $\Zin\in\R^n$ and the output by $\Zout\in\R^n$,
\[
\Zout \;=\; W^\star \Zin + \alpha BA\Zin.
\]

LoRA factors are usually initialized with $BA=0$ so that finetuning starts from the pretrained model. There are two straightforward schemes that satisfy this property.
\begin{definition}[Initialization schemes for LoRA]
\label{def.init-definition}
To initialize the LoRA update as a no-op, we consider:\footnote{The choice of the Gaussian distribution here is for simplification purpose only. It can be replaced with any centred distribution with finite second moment.}
\begin{itemize}
\item \textbf{\initA{}:} $A_0$ has i.i.d.\ entries $\mathcal{N}(0,1/n)$ and $B_0=0$.
\item \textbf{\initB{}:} $B_0$ has i.i.d.\ entries $\mathcal{N}(0,1/r)$ and $A_0=0$.
\end{itemize}
\end{definition}

We now define the intermediate and output LoRA features.

\begin{definition}[LoRA features]
\label{def.lora-feature}
For a layer input $\Zin$, define the intermediate and output LoRA features
\[
Z_A := A\Zin \in \R^{r},
\qquad
Z_B := \alpha BZ_A = \alpha BA\Zin \in \R^{n}.
\]
\end{definition}

We consider a fixed number $T$ of finetuning steps, where $T$ is independent of $n$ and $r$.
For $t\le T$, we use superscripts to denote LoRA features at step $t$, i.e. $Z_A^t$, $Z_B^t$ and subscripts to denote weights at step $t$, i.e. $A_t$, $B_t$.

To isolate the contribution of a single LoRA module to feature learning, we assume that only one LoRA layer is trainable while all other network components remain frozen.
Under this assumption, the layer input $\Zin$ (corresponding to a given model input $x$) remains constant during finetuning.

At any step $t$, recall that $Z_A^t = A_t \Zin$ and $Z_B^t = \alpha B_t Z_A^t$.
Expanding the increment $\Delta Z_B^t := Z_B^t - Z_B^{t-1}$ yields:
\begin{equation}
\label{eq.z_b_update}
    \Delta Z_B^t = \underbrace{\alpha B_{t-1} \Delta Z_A^t}_{\delta_t^1} + \underbrace{\alpha \Delta B_t Z_A^{t-1}}_{\delta_t^2} + \underbrace{\alpha \Delta B_t \Delta Z_A^t}_{\delta_t^3}
\end{equation}
We refer to the three contributions in~\cref{eq.z_b_update} as $\delta_t^1$, $\delta_t^2$, and $\delta_t^3$. 
As previsouly explained in~\citet{hayou2024lora+}, the terms $\delta_t^1$ and $\delta_t^2$ correspond to \emph{linear} updates in which one LoRA factor is trained while the other is held fixed, whereas $\delta_t^3$ corresponds to the \emph{multiplicative updates} of $A$ and $B$. The next section discusses this decomposition and how it relates to stable feature learning.

\subsection{Stable Feature Learning}

We aim to understand how feature updates behave as a function of model width $n$ and LoRA rank $r$. For this purpose, we consider the setting where both $n$ and $r$ grow, subject to $r \leq n$.
To formalize how features and updates scale with $(n,r)$, we adopt the following asymptotic notation.

\begin{definition}[Asymptotic notation]
\label{def.asymptotic_notion}
For sequences $c_{n,r}\in\R$ and $d_{n,r}\in\R_{+}$, we write $c_{n,r}=\BigO(d_{n,r})$
(resp.\ $c_{n,r}=\Omega(d_{n,r})$) if there exist constants $\kappa>0$ and $N\in\mathbb{N}$ such that for all
$n,r> N$ with $r\leq n$, we have $|c_{n,r}|< \kappa d_{n,r}$ (resp.\ $|c_{n,r}|> \kappa d_{n,r}$).
We write $c_{n,r}=\Theta(d_{n,r})$ if both bounds hold.
This notation extends element-wise to vectors: for $c_{n,r}=(c_{n,r}^i)_{i=1}^k\in\R^k$ and $d_{n,r}=(d_{n,r}^i)_{i=1}^k\in\R_+^k$, we write
$c_{n,r}=\BigO(d_{n,r})$ when $c_{n,r}^i=\BigO(d_{n,r}^i)$ for all $i\in[k]$, and similarly for $\Omega$ and $\Theta$.
For random variables, the notation is understood in the second-moment sense:
$c_{n,r}=\BigO(d_{n,r})$ means $\bigl(\E|c_{n,r}|^2\bigr)^{1/2}=\BigO(d_{n,r})$.
\end{definition}

Our stability objective requires that the LoRA-induced feature contribution $Z_B^t$ remains bounded over $T$ finetuning steps (where $T$ is fixed), so that activations remain bounded.

\begin{definition}[Feature stability]
\label{def.feat_stability}
Fix a finite number of finetuning steps $T$ independent of $(n,r)$. We say LoRA finetuning is \emph{stable} if for all steps $t\le T$, we have $Z_B^{t} = \BigO(1)$ in the joint limit $n,r\to\infty$ with $r\leq n$.
\end{definition}

Since LoRA is typically initialized as a no-op ($Z_B^0=0$), Lemma~\ref{lemma:bounded-features} below implies that bounding the per-step increments $\Delta Z_B^t = \BigO(1)$ is sufficient for stability.

\begin{lemma}[Telescoping bound]
\label{lemma:bounded-features}
Fix $t\le T$. If $Z_B^0 = \BigO(1)$ and $\Delta Z_B^s = \BigO(1)$ for all $s\le t$, then $Z_B^t = \BigO(1)$.
\end{lemma}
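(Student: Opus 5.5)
The plan is to write $Z_B^t$ as a telescoping sum and apply the triangle inequality in the second-moment sense of \cref{def.asymptotic_notion}. By the definition of the increments $\Delta Z_B^s := Z_B^s - Z_B^{s-1}$, we have
\[
Z_B^t \;=\; Z_B^0 + \sum_{s=1}^{t} \Delta Z_B^s .
\]
Since the notation is element-wise for vectors, it suffices to fix one coordinate $i\in[n]$ and bound $\bigl(\E|(Z_B^t)_i|^2\bigr)^{1/2}$. The map $X\mapsto (\E|X|^2)^{1/2}$ is the $L^2$ norm, so Minkowski's inequality gives
\[
\bigl(\E|(Z_B^t)_i|^2\bigr)^{1/2} \;\le\; \bigl(\E|(Z_B^0)_i|^2\bigr)^{1/2} + \sum_{s=1}^{t}\bigl(\E|(\Delta Z_B^s)_i|^2\bigr)^{1/2}.
\]

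Next I will invoke the hypotheses. Each of the $t+1$ terms on the right is $\BigO(1)$, so there are constants $\kappa_s>0$ and thresholds $N_s$ such that for $n,r>N_s$ with $r\le n$, the $s$-th term is below $\kappa_s$. Set $N=\max_s N_s$ and $\kappa=\sum_{s=0}^{t}\kappa_s$. Then for all $n,r>N$ with $r\le n$, the left side is below $\kappa$. This is exactly $(Z_B^t)_i=\BigO(1)$. Because this holds for each coordinate, we obtain $Z_B^t=\BigO(1)$.

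The only point that needs care, and it is the step I expect to be the (mild) obstacle, is that the constants must not depend on $n$ or $r$. This is guaranteed because $t\le T$ and $T$ is fixed independently of $(n,r)$. The sum therefore has a bounded number of terms, and the maxima and sums of finitely many constants stay finite. A secondary subtlety is that the $\BigO$ constants should be uniform across coordinates $i$ if one wants a vector-level statement. The element-wise definition only asks for a per-coordinate bound, so I will simply carry the argument out coordinatewise. If uniformity is desired, it is inherited from the hypotheses by the same computation.
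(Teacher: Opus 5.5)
Your proof is correct and follows the same telescoping argument as the paper, which writes $Z_B^t = Z_B^0 + \sum_{s=1}^{t}\Delta Z_B^s$ and notes that for fixed $t$ this is a finite sum of $\BigO(1)$ terms. Your use of Minkowski's inequality for the second-moment sense of $\BigO$, together with the explicit bookkeeping of constants (finitely many because $t\le T$), simply spells out what the paper leaves implicit.
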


However, stability alone permits trivial solutions—for instance, initializing both $A$ and $B$ to zero yields vanishing updates.
We therefore require that the LoRA feature increment remains order-one.

\begin{definition}[Stable feature learning with LoRA]
\label{def.feat_learning}
We say LoRA induces \emph{stable feature learning} if the dynamics are stable (Definition~\ref{def.feat_stability}), and $\Delta Z_B^t = \Theta(1)$ for all $t \le T$.
\end{definition}

Intuitively, having $\Delta Z_B^t = \Theta(1)$ is associated with ``maximal'' adaptation in a similar fashion to $\mu$P for pretraining, see \cite{yang2020feature}. By enforcing this condition, we derived \emph{Maximal-Update Adaptation} scaling rules in the next section.
Beyond stability, it is useful to understand \emph{which} components of the update drive learning.
Ideally, each of $\delta_t^1$, $\delta_t^2$, and $\delta_t^3$ should be $\Theta(1)$, indicating that both factors $A$ and $B$ contribute meaningfully within each step.
Accordingly, in our main results we track the term-wise scalings as a diagnostic: if a term vanishes in the large-$(n,r)$ limit, the corresponding factor becomes effectively inactive.

Recall from~\cref{eq.z_b_update} that the feature update $\Delta Z_B^t$ decomposes into three terms $(\delta_t^i)_{i\in\{1,2,3\}}$.
A sufficient condition for stability is to bound each term individually:
\[
\delta_t^1=\BigO(1),\qquad \delta_t^2=\BigO(1),\qquad \delta_t^3=\BigO(1).
\]
This term-wise control avoids relying on cancellations among the three contributions.
Stable feature learning additionally requires $\Delta Z_B^t=\Theta(1)$, which implies that at least one term must be $\Omega(1)$ under the chosen scaling, while the remaining terms may be lower-order.

\section{Maximal-Update Adaptation ($\mu$A)}

We now state the learning rate scaling rules that guarantee stable feature learning under \initA{} and \initB{}.
Full proofs and derivations (including the scaling of each $\delta_t^i$) appear in Appendix~\ref{app:theory}.
Our analysis relies on two assumptions: (i)~bounded forward and backward signals at each step, and (ii)~a SignSGD update rule that uses the sign of the gradient, serving as a tractable proxy for the Adam optimizer.

\begin{assumption}[Bounded forward/backward signals]
\label{assumption:in_out}
For any fixed step $t\le T$, the layer input and the backpropagated output gradient satisfy
\[
\Zin = \Theta(1),
\qquad
d\Zout := \frac{\partial \mathcal{L}}{\partial \Zout} = \Theta(1).
\]
This ensures that features and gradients neither explode nor vanish at any step as $(n,r)\to\infty$.
\end{assumption}

\begin{assumption}[Optimizer abstraction]
\label{assumption:signsgd}
We use a simplified variant of Adam~\citep{Kingma14Adam} without momentum, reducing it to SignSGD~\citep{Bernstein18SIGNSGD}, which gives element-wise sign updates:
\[
g_{A}^t := \operatorname{sign}\!\Big(\frac{\partial \mathcal{L}_t}{\partial A_{t-1}}\Big),
\qquad
g_{B}^t := \operatorname{sign}\!\Big(\frac{\partial \mathcal{L}_t}{\partial B_{t-1}}\Big),
\]
and parameter updates
\[
A_t = A_{t-1} - \eta\, g_{A}^t,
\qquad
B_t = B_{t-1} - \eta\, g_{B}^t,
\]
where $\operatorname{sign}(\cdot)$ is applied element-wise (with $\operatorname{sign}(0)=0$) and $\eta$ denotes the learning rate.
\end{assumption}

Following \cite{hayou2024impact}, we consider the setting where the loss is computed on a single sample. This simplifies the theoretical analysis and yields closed-form expressions. 
In what follows, we characterize the asymptotic behavior of feature updates as a function of $(n,r)$ and show that it depends on initialization schemes. We present the results for \initA{} and \initB{} and compare the differences.

\subsection{Init[A] Analysis}
We begin with \initA{}, where $A_0$ is randomly initialized and $B_0=0$. 
Theorem~\ref{thm:initA} characterizes the scale of the LoRA feature increment $\Delta Z_B^t$. Corollary~\ref{cor:unified-initA} then yields the learning rate scaling that achieves stable feature learning.

\begin{theorem}[\initA{}]
\label{thm:initA}
Under Assumptions~\ref{assumption:in_out} and~\ref{assumption:signsgd}, for any fixed $t\le T$ with \initA{},
\[
\Delta Z_B^t
\;=\;
\max\{\Theta(\eta \alpha r),\ \Theta(\eta^2 \alpha r n)\}.
\]
\end{theorem}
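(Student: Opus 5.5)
We track the scales of the factors $A_t$, $B_t$ and of the features $Z_A^t$ by induction on $t$, and then bound the three terms $\delta_t^1,\delta_t^2,\delta_t^3$ of \cref{eq.z_b_update} separately. Throughout we use the gradients of the single-sample loss,
\[
\frac{\partial \Loss_t}{\partial B_{t-1}} = \alpha\, d\Zout\,(Z_A^{t-1})^\top,
\qquad
\frac{\partial \Loss_t}{\partial A_{t-1}} = \alpha\, B_{t-1}^\top d\Zout\, \Zin^\top .
\]
Under SignSGD (\cref{assumption:signsgd}) these become rank-one sign matrices:
\[
g_B^t = \sign(d\Zout)\,\sign(Z_A^{t-1})^\top,
\qquad
g_A^t = \sign(B_{t-1}^\top d\Zout)\,\sign(\Zin)^\top .
\]
So every update is $\eta$ times an outer product of $\pm1$ vectors, and the key quantities are inner products of sign vectors with vectors of known scale.

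\textbf{Step 1: scales of $Z_A$ and $\Delta Z_A$.} At $t=1$ we have $B_0=0$, so $g_A^1=0$ and $\Delta Z_A^1=0$. Also $Z_A^0=A_0\Zin$ has i.i.d. coordinates of order $\|\Zin\|/\sqrt n$. We read \cref{assumption:in_out} coordinatewise, so $\|\Zin\|=\Theta(\sqrt n)$, and then each coordinate of $Z_A^0$ is $\Theta(1)$.

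For $t\ge2$, $\Delta Z_A^t=-\eta\,\sign(B_{t-1}^\top d\Zout)\,(\sign(\Zin)^\top \Zin)$. Since $\sign(\Zin)^\top\Zin=\|\Zin\|_1=\Theta(n)$, each coordinate of $\Delta Z_A^t$ is $\Theta(\eta n)$. This alignment, coming from the sign of the \emph{same} vector, is the source of the factor $n$.

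Inductively, each coordinate of $Z_A^{t}$ is $\Theta(1)+\BigO(\eta n)$, in particular $\Theta(\max\{1,\eta n\})$. We also need the sign of $Z_A^{t-1}$ to stay coherent with the quantities it multiplies, which is the subtle point treated in the last paragraph.

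\textbf{Step 2: the three terms.} For $\delta_t^2=\alpha\Delta B_t Z_A^{t-1}=-\alpha\eta\,\sign(d\Zout)\,\|Z_A^{t-1}\|_1$, we get $\|Z_A^{t-1}\|_1=\Theta(r)$ whenever $Z_A^{t-1}$ is dominated by its $\Theta(1)$ initialization part. This gives the $\Theta(\eta\alpha r)$ term, which is present already at $t=1$.

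Similarly $\delta_t^3=\alpha\Delta B_t\Delta Z_A^t=\alpha\eta\,\sign(d\Zout)\,\sign(Z_A^{t-1})^\top\Delta Z_A^t$. This is a sum of $r$ terms, each of size $\eta n$, giving $\Theta(\eta^2\alpha r n)$.

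For $\delta_t^1=\alpha B_{t-1}\Delta Z_A^t$, note that $B_{t-1}=-\eta\sum_{s<t}\sign(d\Zout^{s})\sign(Z_A^{s-1})^\top$. Hence $\delta_t^1$ is again $\alpha\eta$ times $r$-sums of products of $\pm1$ entries with $\Theta(\eta n)$ entries, so it is also of order $\BigO(\eta^2\alpha rn)$.

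For the $\Theta$ statement on the last term, the lower bound should come from the constructive alignment in $\delta_t^1$: its inner sum is $\sign(Z_A^{s-1})^\top\sign(B_{t-1}^\top d\Zout)$ times $\eta n$. With $B_{t-1}$ of rank one, $\sign(B_{t-1}^\top d\Zout)=\pm\sign(Z_A^{t-2})$, so the inner product is $\pm r$ with a consistent sign. Summing, $\Delta Z_B^t$ is the sum of a $\Theta(\eta\alpha r)$ contribution and a $\Theta(\eta^2\alpha rn)$ contribution. We then obtain the stated $\max\{\cdot,\cdot\}$ using \cref{lemma:bounded-features}-style bookkeeping over the fixed number of steps $T$.

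\textbf{Main obstacle.} The delicate step is ruling out cancellations, both among the summands of each inner product and between $\delta_t^1$, $\delta_t^2$ and $\delta_t^3$. The concern is that the signs of $Z_A^{t-1}$ and $\sign(B^\top d\Zout)$ could be incoherent, which would produce only $\sqrt r$ growth instead of $r$.

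We would handle this in two ways. First, we exploit that all updates are rank-one, so every sign vector appearing in the recursion equals $\pm\sign(Z_A^{0})$ or its updated analogue; this makes the inner products exactly $\pm r$ up to $\Theta(1)$ fractions. Second, we argue that for generic (non-degenerate) $d\Zout$ the leading coefficients do not cancel, interpreting $\Theta$ as in \cref{def.asymptotic_notion} through second moments over the Gaussian $A_0$.

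If exact non-cancellation is hard to prove, we would settle for upper bounds termwise plus a lower bound from the dominant term in each regime ($\eta n\ll1$ versus $\eta n\gtrsim1$). That is all the $\max$ formula requires.
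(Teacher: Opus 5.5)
Your argument reaches the same $\max$ formula, but it attributes the $\eta^2\alpha rn$ contribution to different terms than the paper does.

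\textbf{The paper's route.} The paper propagates generic scales through Lemma~\ref{lem:app:recursive}, taking $Z_A^{t-1}=\max(\Theta(1),\Theta(\eta n))$ and $B_{t-1}=\Theta(\eta)$ for all $t$. The $\eta^2\alpha rn$ term therefore comes from $\delta_t^2$ via $\|Z_A^{t-1}\|_1$. There $\delta_t^1$ and $\delta_t^3$ are only $\Theta(\eta^2\alpha n r^{1/2})$. This is because Lemmas~\ref{lem:app:delta1} and~\ref{lem:app:delta3} model $B_{t-1}$ as an i.i.d.\ Gaussian matrix independent of $d\Zout$, and treat the entries of $\sign(Z_A^{t-1})$ and $\sign(\alpha B_{t-1}^\top d\Zout^{t-1})$ as independent. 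The relevant sums over the $r$ rank indices then grow only like $\sqrt r$.

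\textbf{Your route.} You instead follow the actual rank-one SignSGD updates. $B_{t-1}$ is a sum of outer products $\sign(d\Zout^{s-1})\sign(Z_A^{s-1})^\top$, and each $\Delta Z_A^s$ lies along $\pm\sign(Z_A^0)$ (as long as no coordinate of $Z_A$ flips sign). Hence $\sign(B_{t-1}^\top d\Zout)=\pm\sign(Z_A^0)$, the inner products are $\pm r$, and $\delta_t^1,\delta_t^3=\Theta(\eta^2\alpha rn)$.

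\textbf{Comparison.} For \initA{} your account is the more faithful one, because the paper's independence assumptions fail here. Moreover, since $B_0=0$ forces $Z_A^1=Z_A^0$, the term $\delta_2^2$ carries no factor $\eta n$. So at $t=2$ the $\eta^2\alpha rn$ piece can only come from $\delta_2^1+\delta_2^3$, exactly as in your bookkeeping. What the paper's route buys is uniformity (the same three lemmas serve \initA{} and \initB{}) and the term-wise diagnostics of Corollary~\ref{cor:app:initA-scaling}. Be aware that your accounting contradicts those diagnostics: at the $\mu$A learning rate it gives $\delta_t^1,\delta_t^3=\Theta(1)$ rather than $\Theta(r^{-1/2})$.

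\textbf{Caveats for both routes.} First, the lower bounds genuinely need a non-degeneracy condition. If $\sign(d\Zout^{1})=-\sign(d\Zout^{0})$, then $B_2=0$ and $\delta_2^1+\delta_2^3=\alpha B_2\Delta Z_A^2=0$. Second, your own observation $\Delta Z_A^1=0$ gives $\Delta Z_B^1=\delta_1^2=\Theta(\eta\alpha r)$. The stated $\max$ therefore fails at $t=1$ whenever $\eta n\to\infty$. The paper's proof glosses over this by writing $Z_A^t=\max(\Theta(1),\Theta(\eta n))$ for all $t$.
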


Imposing the stable feature learning condition $\Delta Z_B^t=\Theta(1)$ yields the following scaling rule for learning rate $\eta$.

\begin{corollary}[$\mu$A scaling rules for \initA{}]
\label{cor:unified-initA}
Let $\alpha = r^{-\gamma}$ for $\gamma\in[0,1]$. Under \initA{}, achieving stable feature learning
(i.e., $\Delta Z_B^t=\Theta(1)$ for fixed $t\le T$) requires
\[
\eta = \Theta(n^{-1/2} r^{-{(1-\gamma)}/{2}}).
\]
With this choice,
\[
\delta_t^1=\Theta(r^{-1/2}),
\
\delta_t^2=\Theta(1),
\
\delta_t^3=\Theta(r^{-1/2}),
\]
and the intermediate feature scales as
\[
Z_A^t=\Theta(n^{1/2}\, r^{-{(1-\gamma)}/{2}}).
\]
\end{corollary}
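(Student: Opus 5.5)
Throughout, I take Theorem~\ref{thm:initA} as given, i.e. $\Delta Z_B^t=\max\{\Theta(\eta\alpha r),\Theta(\eta^2\alpha r n)\}$. I will first derive the scaling of $\eta$ by solving $\Delta Z_B^t=\Theta(1)$. I will then redo the term-wise scaling of $\delta_t^1,\delta_t^2,\delta_t^3$ and $Z_A^t$ at that $\eta$.

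\textbf{Step 1: solving for $\eta$.} Substituting $\alpha=r^{-\gamma}$ gives
\[
\Delta Z_B^t=\max\{\Theta(\eta r^{1-\gamma}),\Theta(\eta^2 r^{1-\gamma} n)\}.
\]
Stable feature learning requires both terms to be $\BigO(1)$ and at least one to be $\Theta(1)$.
\begin{itemize}
\item If the linear term were $\Theta(1)$, then $\eta=\Theta(r^{\gamma-1})$. The quadratic term would become $\Theta(n r^{\gamma-1})$, which diverges because $\gamma\le 1$ and $r\le n$ with $n\to\infty$. This is only borderline at $\gamma=1$, where it equals $n$, which still diverges.
\item So the quadratic term must be $\Theta(1)$. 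This forces $\eta^2=\Theta(n^{-1}r^{\gamma-1})$, i.e. $\eta=\Theta(n^{-1/2}r^{-(1-\gamma)/2})$.
\item At this $\eta$ the linear term is $\eta r^{1-\gamma}=\Theta(n^{-1/2}r^{(1-\gamma)/2})$. This is $\BigO(1)$ since $r\le n$ and $(1-\gamma)/2\le 1/2$, which confirms consistency.
\end{itemize}

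\textbf{Step 2: the term-wise scalings.} These rely on the ingredients behind Theorem~\ref{thm:initA}.
\begin{itemize}
\item \emph{$A$ changes negligibly.} Since $A_0$ has entries of size $n^{-1/2}$ and $\eta\le n^{-1/2}$, we get $Z_A^0=A_0\Zin=\Theta(1)$ per coordinate by the CLT.
\item \emph{$\Delta Z_A^t$.} It equals $-\eta g_A^t\Zin$. The gradient $dA=\alpha B^\top d\Zout\,\Zin^\top$ is rank one, so its sign factorizes as $\sign(B^\top d\Zout)\sign(\Zin)^\top$. Hence $g_A^t\Zin$ sums $n$ aligned terms, giving $\Delta Z_A^t=\Theta(\eta n)$ per coordinate whenever $B_{t-1}\neq 0$, i.e. for $t\ge2$.
\item \emph{$Z_A^t$.} Accumulating over fixed $T$ steps gives $Z_A^t=\Theta(1)+\Theta(\eta n)$. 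With the chosen $\eta$, $\eta n=n^{1/2}r^{-(1-\gamma)/2}\ge1$, so $Z_A^t=\Theta(n^{1/2}r^{-(1-\gamma)/2})$.
\item \emph{$\delta_t^2$.} $\Delta B_t=-\eta\,\sign(d\Zout)\sign(Z_A^{t-1})^\top$, so $\Delta B_tZ_A^{t-1}$ has coordinates $\eta\|Z_A^{t-1}\|_1$. Since $\|Z_A^{t-1}\|_1=\Theta(r\,\eta n)$ (or $r$ at $t=1$), we get
\[
\delta_t^2=\Theta(\alpha\eta^2 nr)=\Theta(1).
\]
\item \emph{$\delta_t^3$.} $\alpha\Delta B_t\Delta Z_A^t$ has coordinates $\alpha\eta\cdot r\cdot\eta n$, which is also of order $\alpha\eta^2nr$. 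This is not $r^{-1/2}$, so to get the claimed $\Theta(r^{-1/2})$ I would need to track sign correlations between $\sign(Z_A^{t-1})$ and $\Delta Z_A^t$. If $\Delta Z_A^t$ has signs independent of $\sign(Z_A^{t-1})$ across the $r$ coordinates, the sum over $r$ gives $\sqrt r$ instead of $r$, which yields $\Theta(r^{-1/2})$.
\item \emph{$\delta_t^1$.} $B_{t-1}$ consists of $\BigO(1)$ rank-one sign matrices of size $\eta$. The same CLT-over-$r$ cancellation gives $\delta_t^1=\alpha\eta\sqrt r\,\eta n=\Theta(r^{-1/2})$.
\end{itemize}

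\textbf{Main obstacle.} The hard part is justifying this $\sqrt r$ (cancellation) versus $r$ (alignment) distinction for $\delta^1$ and $\delta^3$, as opposed to the fully aligned sum in $\delta^2$. I expect to handle it by writing $\sign(B_{t-1}^\top d\Zout)$ explicitly in terms of $\sign(Z_A^{s})$ for $s<t$. I would then argue that inner products of such sign vectors with the fresh update directions are sums of $r$ weakly correlated $\pm1$ terms, whose second moment is $\Theta(r)$. I would first check the case $t=2$ by hand and then extend by induction over the finitely many steps $t\le T$.
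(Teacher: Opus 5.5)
\textbf{Step 1 and the easy terms.} Your Step 1 and your computations of $Z_A^t$ and $\delta_t^2$ follow the paper's route. The paper writes $\eta=\min\{\Theta(r^{-(1-\gamma)}),\Theta(n^{-1/2}r^{-(1-\gamma)/2})\}$ and uses $r^{1-\gamma}\le n$. Your claim that the quadratic term then ``diverges'' is false for $\gamma=0$, $r=\Theta(n)$, where it is $\Theta(1)$. Nothing breaks, though, because the two rates coincide there.

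\textbf{The gap: the $\sqrt r$ cancellation.} The genuine gap is the $\sqrt r$ cancellation for $\delta_t^1$ and $\delta_t^3$. The induction you propose would not establish it; it would refute it. Do $t=2$ by hand.
\begin{itemize}
\item Because $B_0=0$, the step-1 gradient in $A$ vanishes. So $A_1=A_0$ and $B_1=-\eta\,u_0v_0^\top$, with $u_0=\sign(d\Zout^0)$ and $v_0=\sign(Z_A^0)$.
\item Then $B_1^\top d\Zout^1=-\eta\,(u_0^\top d\Zout^1)\,v_0$, so $\sign(B_1^\top d\Zout^1)=\pm v_0=\pm\sign(Z_A^1)$. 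The two sign vectors agree up to a global sign; they are not independent or weakly correlated.
\item Hence $\Delta Z_A^2=\pm\eta\|\Zin\|_1v_0$, and
\[
\delta_2^1=\alpha B_1\Delta Z_A^2=\pm\alpha\eta^2\|\Zin\|_1\,r\,u_0,
\qquad
\delta_2^3=\alpha\Delta B_2\Delta Z_A^2=\pm\alpha\eta^2\|\Zin\|_1\,r\,u_1 .
\]
Their entries are of order $\alpha\eta^2nr=\Theta(1)$ at the $\mu$A learning rate, not $r^{-1/2}$.
\end{itemize}
The same persists for later $t$. $B_{t-1}$ stays essentially of the form $w\,v_0^\top$. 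When $\eta n\gg1$, $\sign(Z_A^{t-1})=\pm v_0$ up to a vanishing fraction of coordinates. So every sum over the $r$ coordinates is of order $r$, which is exactly the ``aligned'' count you first obtained.

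\textbf{What the paper actually does.} The paper does not derive the $\sqrt r$ from the dynamics either.
\begin{itemize}
\item In Lemma~\ref{lem:app:delta1} it replaces $B_{t-1}$ by an i.i.d.\ Gaussian matrix independent of $h=\alpha d\Zout^{t-1}$. This yields $\E[X_k^2]=\beta^2\bigl(r+\tfrac{2}{\pi}\tfrac{r(r-1)}{n}\bigr)$ for $X=B\,\sign(B^\top h)$, hence the factor $\sqrt r$.
\item In Lemma~\ref{lem:app:delta3} it simply postulates that $\sign(Z_A^{t-1})$ and $\sign(\alpha B_{t-1}^\top d\Zout^{t-1})$ have independent entries.
\end{itemize}
So $\delta_t^1,\delta_t^3=\Theta(r^{-1/2})$ is available only if you adopt these modeling hypotheses explicitly. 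They are not consequences of Assumptions~\ref{assumption:in_out} and~\ref{assumption:signsgd} under \initA{}, and no cancellation argument will produce them. The $\eta$ prescription and the scalings of $Z_A^t$ and $\delta_t^2$ are unaffected, since the extra $\Theta(\alpha\eta^2nr)$ contributions are of the same order as the dominant term.

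\textbf{Early steps.} Since $A_1=A_0$, one has $Z_A^1=\Theta(1)$ rather than $\Theta(\eta n)$. Also $\delta_1^2$ and $\delta_2^2$ are only $\Theta(\alpha\eta r)$, which is $o(1)$ unless $r^{1-\gamma}=\Theta(n)$. So the stated scalings hold only after the first couple of steps. Your parenthetical ``or $r$ at $t=1$'' flags this, but you still conclude $\Theta(1)$; the paper's recursion glosses over the same point.
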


To make the prescriptions concrete, we summarize the resulting scalings for common multiplier choices in~\cref{table:initA-alpha-result}.

\begin{table}[t]
  \caption{$\mu$A scaling rules from Corollary~\ref{cor:unified-initA} for common choices of $\alpha$. Each cell shows the argument of $\Theta(\cdot)$.}
  \label{table:initA-alpha-result}
  \begin{center}
    \begin{small}
      \begin{sc}
        \begin{tabular}{lccc}
          \toprule
          $\gamma$ & $\alpha$ & $\eta$ & $Z_A^t$ \\
          \midrule
          $0$ & 1 & $n^{-{1}/{2}} r^{-{1}/{2}}$ & $n^{{1}/{2}} r^{-{1}/{2}}$ \\
          $\frac{1}{2}$ & $r^{-{1}/{2}}$ & $n^{-{1}/{2}} r^{-{1}/{4}}$ & $n^{{1}/{2}} r^{-{1}/{4}}$ \\
          $1$ & $r^{-1}$ & $n^{-{1}/{2}}$ & $n^{{1}/{2}}$ \\
          \bottomrule
        \end{tabular}
      \end{sc}
    \end{small}
  \end{center}
  \vskip -0.1in
\end{table}

\paragraph{Interpretation of Corollary~\ref{cor:unified-initA}.} We highlight the following key insights drawn from the corollary.

\emph{Learning rate scaling with rank.}
For a fixed model width $n$, the relationship between the optimal learning rate and rank $r$ depends on the choice of $\alpha$.
With $\alpha=1$, the learning rate decreases as $\eta \propto r^{-1/2}$: increasing the rank by $4\times$ requires halving the learning rate, consistent with~\cref{fig:llama3-tutu3-demonstration:a}.
In contrast, setting $\alpha = r^{-1}$ absorbs the rank dependence, yielding a rank-invariant learning rate that depends only on width, consistent with~\cref{fig:llama3-tutu3-demonstration:b}.

\emph{Intermediate feature scaling.}
The intermediate features $Z_A^t$ grow with width $n$ but shrink with rank when $\alpha \in \{1, r^{-1/2}\}$.
For $\alpha=1$, doubling the rank reduces the intermediate feature scale by $\sqrt{2}$, which can improve internal numerical stability in wide models where large intermediate activations have been identified as a source of training instability~\citep{hayou2024impact}.
However, when $\alpha = r^{-1}$, increasing the rank no longer suppresses the intermediate scale, which may explain why \initA{} with $\alpha = r^{-1}$ exhibits a narrower stable learning rate range in practice.

\emph{Learning efficiency.}
Under the stable learning rate scaling, only $\delta_t^2$ remains order-one while $\delta_t^1$ and $\delta_t^3$ vanish as $r$ grows.
Since $\delta_t^1$ captures the contribution of updating $A$, this means that changes to $A$ contribute negligibly to the overall feature update.
As a result, the adapter increasingly behaves like a random projection $Z_A \approx A_0 \Zin$ with $A$ held fixed at initialization, while learning proceeds primarily through $B$.
This imbalance between the two factors is intrinsic to \initA{} and persists across all standard choices of $\alpha$.

\subsection{\initB{} Analysis}
We next consider \initB{}, where $B_0$ is randomly initialized and $A_0=0$.
\begin{theorem}[\initB{}]
\label{thm:initB}
Under Assumptions~\ref{assumption:in_out} and~\ref{assumption:signsgd}, for any fixed $t\le T$
with \initB{},
\[
\Delta Z_B^t
\;=\;
\max\{\Theta(\eta \alpha n),\ \Theta(\eta^2 \alpha n r)\}.
\]
\end{theorem}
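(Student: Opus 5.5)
We track the scale of each term in the decomposition \eqref{eq.z_b_update} under SignSGD, mirroring the \initA{} argument with the roles of $A$ and $B$ exchanged. First we compute the gradients. With $d\Zout=\Theta(1)$ and $\Zin=\Theta(1)$ (Assumption~\ref{assumption:in_out}), we have
\[
\frac{\partial\mathcal L}{\partial A}=\alpha B^\top d\Zout\,\Zin^\top,
\qquad
\frac{\partial\mathcal L}{\partial B}=\alpha\, d\Zout\,(Z_A)^\top .
\]
Both are rank one. Hence the sign updates are rank one as well: $g_A^t=s_t u_t^\top$ with $s_t=\sign(\alpha B_{t-1}^\top d\Zout)\in\{\pm1\}^r$ and $u_t=\sign(\Zin)\in\{\pm1\}^n$, and similarly $g_B^t=v_t w_t^\top$ with $v_t=\sign(d\Zout)$ and $w_t=\sign(Z_A^{t-1})$. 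At $t=1$ we have $A_0=0$, so $Z_A^0=0$ and $g_B^1=0$ (recall $\sign(0)=0$); only $A$ moves at the first step.

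Next we estimate the factor scales by induction on $t\le T$. We have $\Delta Z_A^t=-\eta\, s_t (u_t^\top\Zin)$. Since $u_t^\top\Zin=\sum_i|\Zin_i|=\Theta(n)$ (entries of $\Zin$ are $\Theta(1)$), each entry of $\Delta Z_A^t$ is $\Theta(\eta n)$. Summing finitely many steps gives $Z_A^t=\Theta(\eta n)$ entrywise for $t\ge1$. The signs $s_t$ are determined by $B_0^\top d\Zout$ at leading order, so they are consistent across steps and do not cancel. This is the first point to verify: $B$ changes by $\eta$ per entry, which is small relative to $B_0=\Theta(r^{-1/2})$ in the relevant regime.

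Now we bound each term.
\begin{itemize}
\item $\delta_t^1=\alpha B_{t-1}\Delta Z_A^t=-\eta\alpha(u_t^\top\Zin)B_{t-1}s_t$. The vector $B_0 s_t$, with $s_t=\sign(B_0^\top d\Zout)$, has coordinates of order $r\cdot r^{-1/2}$ along $d\Zout$: the aligned part is $\sum_j|(B_0^\top d\Zout)_j|/\|d\Zout\|$-type, giving $\Theta(\sqrt r)$ alignment per coordinate, which is offset by normalization. The natural outcome is $B_0s_t=\Theta(1)$ per coordinate, so $\delta_t^1=\Theta(\eta\alpha n)$.
\item $\delta_t^2=\alpha\Delta B_t Z_A^{t-1}=-\eta\alpha v_t (w_t^\top Z_A^{t-1})$. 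Here $w_t^\top Z_A^{t-1}=\|Z_A^{t-1}\|_1=\Theta(r\eta n)$, so $\delta_t^2=\Theta(\eta^2\alpha n r)$.
\item $\delta_t^3=\alpha\Delta B_t\Delta Z_A^t$ has scale $\eta\alpha\cdot r\cdot\eta n=\Theta(\eta^2\alpha nr)$.
\end{itemize}
Combining the three terms, without cancellation, gives $\max\{\Theta(\eta\alpha n),\Theta(\eta^2\alpha nr)\}$.

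The main obstacle is the claimed scale $\Theta(1)$ for $B_0s_t$, which in the \initB{} case is a random matrix times its own sign-aligned vector. I would write $B_0^\top d\Zout=\|d\Zout\|\,\xi$ with $\xi$ Gaussian of $\Theta(1)$ variance per entry (using $\|d\Zout\|^2/r$ scaling), and decompose $B_0=\frac{d\Zout\,\xi^\top}{\|d\Zout\|}+B_0^\perp$. The parallel part contributes $\frac{d\Zout}{\|d\Zout\|}\|\xi\|_1$. The orthogonal part is independent of $s_t$ and contributes entries of order $\sqrt{r\cdot r^{-1}}=\Theta(1)$. Its second moment is therefore $\Theta(1)$, and this gives the lower bound as well.

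The second check is that the cross-step terms, $(B_{t-1}-B_0)s_t$ and the possible drift of the signs, are lower order or of matching order. Then no cancellation destroys the $\Theta$ statements, and the max form follows; the second-moment notation of Definition~\ref{def.asymptotic_notion} is used throughout.
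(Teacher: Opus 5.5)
Your overall route matches the paper's: rank-one sign updates, $Z_A^t=\Theta(\eta n)$, term-wise scales for $\delta_t^1,\delta_t^2,\delta_t^3$, and a max assuming no cancellation. There are two local differences.

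For $\delta_t^1$, the paper treats $h=\alpha\,d\Zout$ as a Gaussian vector independent of $B$. It applies Stein's lemma to the cross terms to get $\E[X_k^2]=\beta^2\bigl(r+\tfrac{2}{\pi}\tfrac{r(r-1)}{n}\bigr)$ for $X=B\,\sign(B^\top h)$. Your split of $B_0$ into a part along $d\Zout$ and a part orthogonal to it is essentially the same computation, but it does not need $d\Zout$ to be random. The orthogonal part, which is independent of $s_t$, gives the diagonal term $\beta^2 r$, and the parallel part gives the cross term.

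For $\delta_t^3$, the paper assumes $\sign(Z_A^{t-1})$ and $\sign(\alpha B_{t-1}^\top d\Zout^{t-1})$ are independent and obtains $\Theta(\alpha\eta^2 n r^{1/2})$. Your $\Theta(\eta^2\alpha nr)$ instead presumes these signs align, and your own consistency argument supports that. With $d\Zout$ held fixed, induction gives $s_t=s_1$ and $\sign(Z_A^{t-1})=-s_1$ for $t\ge2$. The reason is that every $B$-update adds a nonnegative multiple of $s_1$ to $\alpha B^\top d\Zout$. Hence $w_t^\top s_t=-r$, and the signs are self-reinforcing with no small-$\eta$ restriction. In any case $|w_t^\top s_t|\le r$, so your estimate is a valid upper bound. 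Since $\delta_t^2$ is already of order $\eta^2\alpha nr$, the max is unchanged either way.

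Two things need fixing. First, in your last paragraph $\xi$ does not have $\Theta(1)$ variance. From $B_0^\top d\Zout=\|d\Zout\|\,\xi$, its entries are $\Normal(0,1/r)$, so $\|\xi\|_1=\Theta(\sqrt r)$ and the parallel contribution to $B_0s_t$ is $\Theta(\sqrt{r/n})$ per coordinate. You never bound this term, yet it is exactly where $r\le n$ enters; it is the paper's $\tfrac{2}{\pi}\tfrac{r(r-1)}{n}$ term. With the normalization you wrote, it would be $\Theta(r/\sqrt n)$, which is unbounded for $r\gg\sqrt n$.

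Second, you correctly observe $g_B^1=0$, but then $\delta_1^2=\delta_1^3=0$ and $\Delta Z_B^1=\Theta(\eta\alpha n)$. Your bullets use $Z_A^{t-1}=\Theta(\eta n)$, so they and the displayed max hold only for $t\ge2$, or when $\eta r=\BigO(1)$. The paper's proof has the same glitch: it asserts $Z_A^t=\Theta(\eta n)$ for all $t$ although $Z_A^0=0$.

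Finally, the drift term you defer is easy to control. Each coordinate of $(B_{t-1}-B_0)s_t$ is at most $(t-1)\eta r$ in absolute value, so it contributes $\BigO(\eta^2\alpha nr)$ to $\delta_t^1$, which the max absorbs. The paper handles this through $B_{t-1}=\max(\Theta(r^{-1/2}),\Theta(\eta))$.
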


Unlike \initA{}, the resulting learning rate prescription depends more sensitively on how $\alpha$ scales with $r$. In the main text we focus on the most common configuration $\alpha=1$ and defer additional cases to Appendix~\ref{app:theory:initB}.

\begin{corollary}[$\mu$A scaling rules for \initB{} with $\alpha=1$]
\label{cor:initB-constant}
Let $\alpha=1$. Under \initB{}, achieving stable feature learning (i.e., $\Delta Z_B^t=\Theta(1)$ for fixed $t\le T$)
requires
\[
\eta = \Theta(n^{-1}).
\]
With this choice,
\[
\delta_t^1=\Theta(1),
\
\delta_t^2=\Theta(rn^{-1}),
\
\delta_t^3=\Theta(r^{1/2}{n}^{-1}),
\]
and the intermediate feature scales as
\[
Z_A^t=\Theta(1).
\]
\end{corollary}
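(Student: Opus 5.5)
The plan is to derive the corollary directly from Theorem~\ref{thm:initB}, and then separately track the three terms $\delta_t^i$ of \cref{eq.z_b_update} and the intermediate feature $Z_A^t$.

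\textbf{Step 1: fixing $\eta$.} With $\alpha=1$, Theorem~\ref{thm:initB} gives
\[
\Delta Z_B^t=\max\{\Theta(\eta n),\Theta(\eta^2 nr)\}.
\]
Stable feature learning requires this maximum to be $\Theta(1)$, which forces both $\eta n=\BigO(1)$ and $\eta^2 nr=\BigO(1)$, with at least one of them being $\Theta(1)$.

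Suppose first that $\eta n=\Theta(1)$, so $\eta=\Theta(n^{-1})$. Then $\eta^2 nr=\Theta(r/n)=\BigO(1)$, since $r\le n$. This choice is therefore admissible.

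Suppose instead that the second term is the order-one one, $\eta=\Theta((nr)^{-1/2})$. Then $\eta n=\Theta((n/r)^{1/2})$. This stays bounded over the whole range $r\le n$ only when $r=\Theta(n)$, in which case $\eta=\Theta(n^{-1})$ again.

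Hence, uniformly over $r\le n$, the requirement is $\eta=\Theta(n^{-1})$.

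\textbf{Step 2: the individual terms.} I will reuse the per-term estimates that underlie the proof of Theorem~\ref{thm:initB}. Under SignSGD, the gradient with respect to $A$ is $\alpha B_{t-1}^\top d\Zout\,\Zin^\top$, which is rank one. So $\Delta A_t=-\eta\,\operatorname{sign}(u)\operatorname{sign}(\Zin)^\top$ with $u=B_{t-1}^\top d\Zout$. This gives
\[
\Delta Z_A^t=-\eta\,\operatorname{sign}(u)\,\bigl(\operatorname{sign}(\Zin)^\top\Zin\bigr),
\]
whose entries are each $\Theta(\eta n)$, coordinatewise.

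Since $A_0=0$ and $t\le T$ is fixed, summing the increments gives $Z_A^t=\Theta(\eta n)=\Theta(1)$. This is the claimed intermediate scale.

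\textbf{Step 3: $\delta_t^1$, $\delta_t^2$ and $\delta_t^3$.} I would write $B_{t-1}=B_0+\BigO(\eta)$ entrywise. The $B_0$ entries are $\Theta(r^{-1/2})$, which dominates $\eta=n^{-1}$.

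For $\delta_t^1=B_{t-1}\Delta Z_A^t$, each coordinate is $\sum_j (B_0)_{ij}\,\Theta(\eta n)$ with summands whose signs are aligned to $\operatorname{sign}(u_j)$. Because $u_j=\sum_i (B_0)_{ij}\,d\Zout_i$ correlates with each row of $B_0$ only at order $r^{-1/2}$, this sum needs care. I will rely on the computation already done for Theorem~\ref{thm:initB}, which identified the $\eta\alpha n$ contribution as coming from $\delta_t^1$. That gives $\delta_t^1=\Theta(\eta n)=\Theta(1)$.

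For $\delta_t^2=\Delta B_t Z_A^{t-1}$, the update $\Delta B_t=-\eta\,\operatorname{sign}(d\Zout)\operatorname{sign}(Z_A^{t-1})^\top$ is rank one. So each coordinate is $\eta\,\|Z_A^{t-1}\|_1=\eta\cdot r\cdot\Theta(1)=\Theta(r/n)$.

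For $\delta_t^3=\Delta B_t\Delta Z_A^t$, each coordinate is $\eta\sum_j \operatorname{sign}(Z_A^{t-1})_j\,\Delta Z_{A,j}^t$. This produces the factor $\eta\cdot\Theta(\eta n)$ multiplied by a sum of $r$ signs. If $\operatorname{sign}(Z_A^{t-1})$ and $\operatorname{sign}(u)$ are only weakly aligned, CLT-type cancellation makes that sum $\Theta(r^{1/2})$, giving $\delta_t^3=\Theta(r^{1/2}n^{-1})$. For $t=1$, $Z_A^0=0$, so $\delta_1^2=\delta_1^3=0$; the stated rates concern $t\ge2$.

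\textbf{Main obstacle.} The hard step is justifying the $r^{1/2}$ fluctuation in $\delta_t^3$, and the matching $\Theta(1)$ rather than $\Theta(r^{1/2})$ scale of $\delta_t^1$. Both need a second-moment computation over the randomness of $B_0$, in the sense of Definition~\ref{def.asymptotic_notion}. The specific quantity to control is the correlation between $\operatorname{sign}(Z_A^{t-1})$, which is built from earlier signs of $B^\top d\Zout$, and the current $\operatorname{sign}(u)$.

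I would first try computing $\E$ of the squared sums, treating the columns of $B_0$ as i.i.d. Gaussians that are only weakly coupled through the fixed vector $d\Zout$. Cross terms should then contribute lower order, which yields $\sqrt{r}$ scaling.
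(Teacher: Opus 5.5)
Your proposal is correct and follows essentially the paper's route: read off $\eta=\Theta(n^{-1})$ from Theorem~\ref{thm:initB} using $r\le n$, then substitute $B_{t-1}=\Theta(r^{-1/2})$ and $Z_A^{t-1}=\Theta(\eta n)=\Theta(1)$ into the per-term estimates. The paper handles $\delta_t^1$ by the sign--Gaussian second-moment computation you anticipate, and $\delta_t^3$ by an explicit independence-of-signs assumption that matches your CLT heuristic.

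Your remark that $\delta_1^2=\delta_1^3=0$ is a valid refinement the paper omits. One small correction: $u_j$ correlates with $(B_0)_{ij}$ at order $n^{-1/2}$, not $r^{-1/2}$. This leaves $\delta_t^1=\Theta(1)$ intact, because the aligned part is only $\Theta(\sqrt{r/n})$ while the $\Theta(1)$ comes from the $r$ nearly independent summands of size $r^{-1/2}$.
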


\paragraph{Interpretation of Corollary~\ref{cor:initB-constant}.}
This corollary reveals a regime qualitatively different from \initA{}.

\emph{Rank-invariant learning rate.}
With $\alpha=1$, the required learning rate scales only with width ($\eta \propto n^{-1}$) and is independent of rank.
This explains the roughly rank-invariant optimal learning rate observed under \initB{} in~\cref{fig:llama3-tutu3-demonstration:c}.

\emph{Connection to FFT.}
The same width scaling $\eta \propto n^{-1}$ arises for full finetuning of a linear layer under the same SignSGD abstraction (see~\cref{thm:app:full_finetuning} in Appendix~\ref{app:theory:full-ft}).
This matching shows a necessary condition for learning rate transfer between LoRA and FFT: a learning rate tuned for \initB{} with $\alpha=1$ can serve as a reasonable starting point for FFT without re-tuning.
We empirically verify this transferability in our experiments. 

\emph{Intermediate feature scaling.}
The intermediate features $Z_A^t$ remain order-one throughout training, keeping internal activations controlled regardless of width.

\emph{Learning efficiency.}
The term-wise contributions exhibit the opposite imbalance from \initA{}: here $\delta_t^1 = \Theta(1)$ while $\delta_t^2$ and $\delta_t^3$ vanish when $r\ll n$.
When the rank is much smaller than the width, updates to $B$ contribute negligibly, and the adapter behaves as though $B$ were fixed at its random initialization, with learning occurring primarily through $A$.
Nonetheless, increasing the rank can strengthen $\delta_t^2$; in the extreme case where $r$ is on the order of $n$, both factors contribute meaningfully to feature updates.

\begin{figure*}[ht]
  \centering
  \begin{subfigure}[t]{0.24\textwidth}
    \centering
    \includegraphics[width=\linewidth]{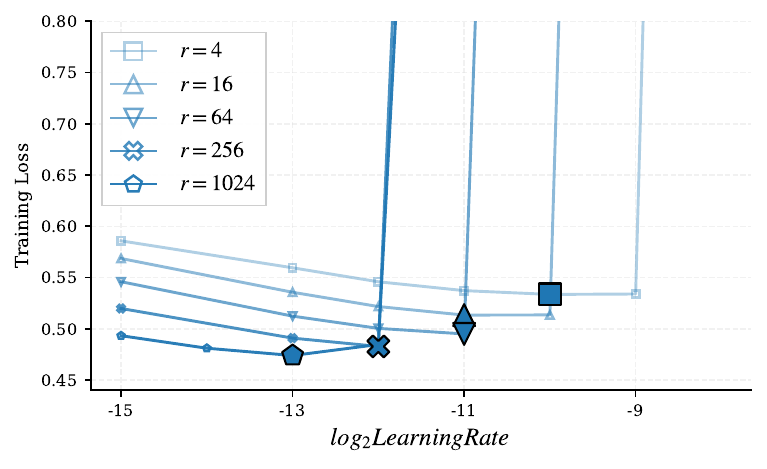}
    \caption{Qwen2.5-3B-Instruct}
    \label{fig:initA-constant1-qwen}
  \end{subfigure}\hfill
  \begin{subfigure}[t]{0.24\textwidth}
    \centering
    \includegraphics[width=\linewidth]{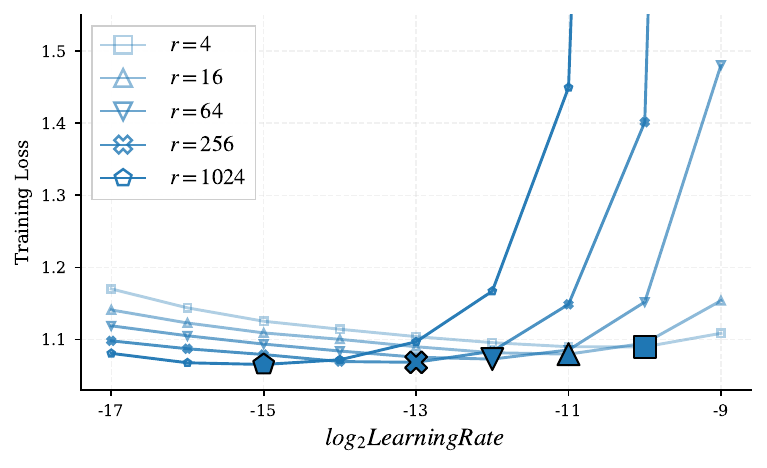}
    \caption{Qwen3-VL-2B-Instruct}
    \label{fig:initA-constant1-qwen-vl}
  \end{subfigure}\hfill
  \begin{subfigure}[t]{0.24\textwidth}
    \centering
    \includegraphics[width=\linewidth]{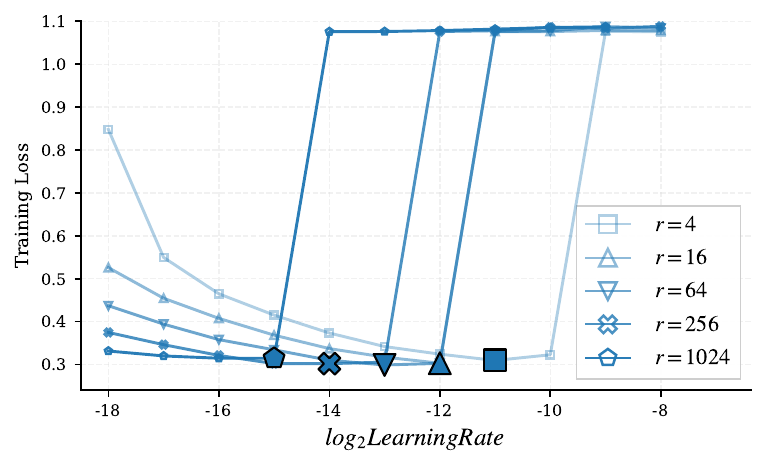}
    \caption{RoBERTa-large}
    \label{fig:initA-constant1-roberta}
  \end{subfigure}
  \begin{subfigure}[t]{0.24\textwidth}
    \centering
    \includegraphics[width=\linewidth]{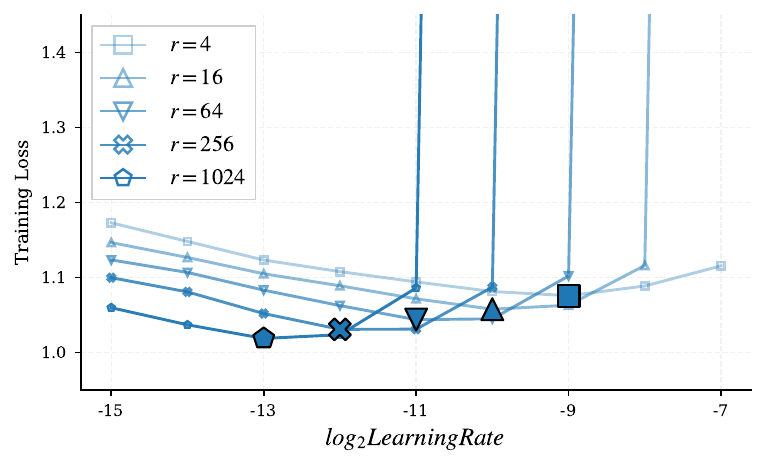}
    \caption{ViT-Huge/14}
    \label{fig:initA-constant1-vit-huge}
  \end{subfigure}
  \caption{Learning rate sweeps for \initA{} with $\alpha=1$ across four models. Curves show final training loss (EMA-smoothed) versus log-scale learning rate ($\log_2(\eta)$). Large markers denote per-rank optima. The Y-axis is clipped for readability.}

  \label{fig:initA-constant1-main}
\end{figure*}

\begin{figure*}[t]
  \centering
  \begin{subfigure}[t]{0.24\textwidth}
    \centering
    \includegraphics[width=\linewidth]{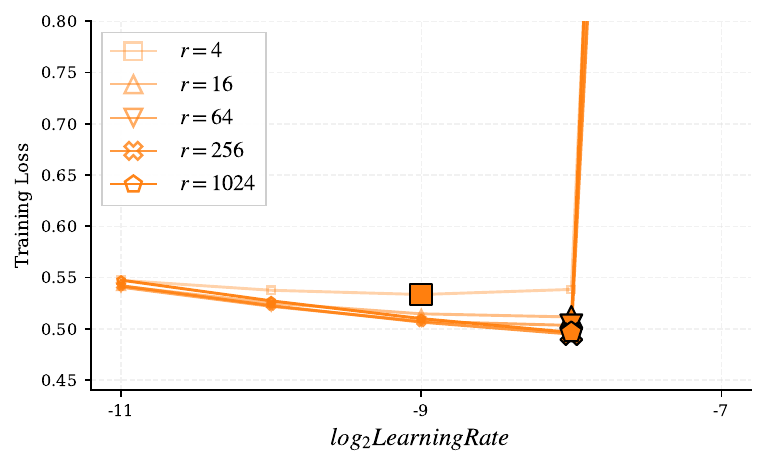}
    \caption{Qwen2.5-3B-Instruct}
    \label{fig:initA-alpha1-qwen}
  \end{subfigure}\hfill
  \begin{subfigure}[t]{0.24\textwidth}
    \centering
    \includegraphics[width=\linewidth]{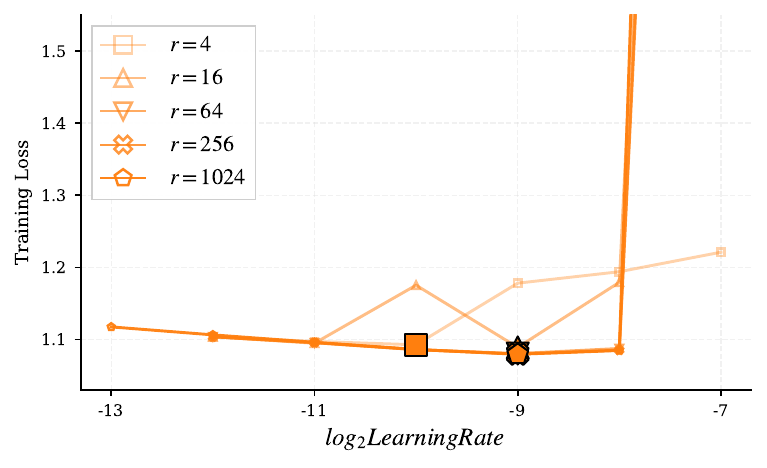}
    \caption{Qwen3-VL-2B-Instruct}
    \label{fig:initA-alpha1-qwen-vl}
  \end{subfigure}\hfill
  \begin{subfigure}[t]{0.24\textwidth}
    \centering
    \includegraphics[width=\linewidth]{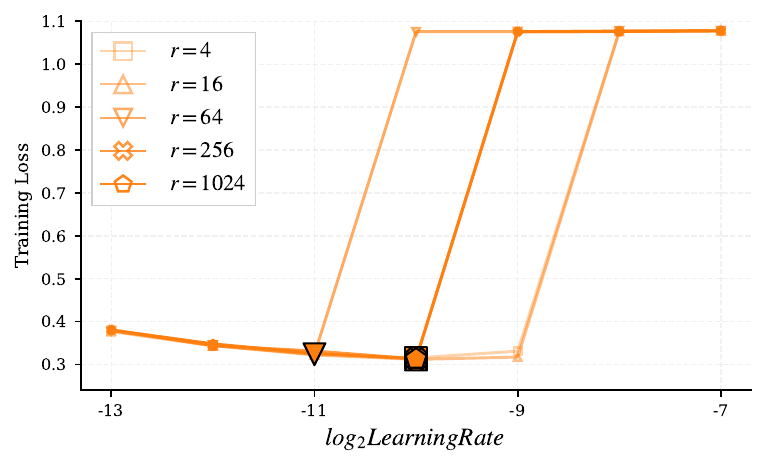}
    \caption{RoBERTa-large}
    \label{fig:initA-alpha1-roberta}
  \end{subfigure}
  \begin{subfigure}[t]{0.24\textwidth}
    \centering
    \includegraphics[width=\linewidth]{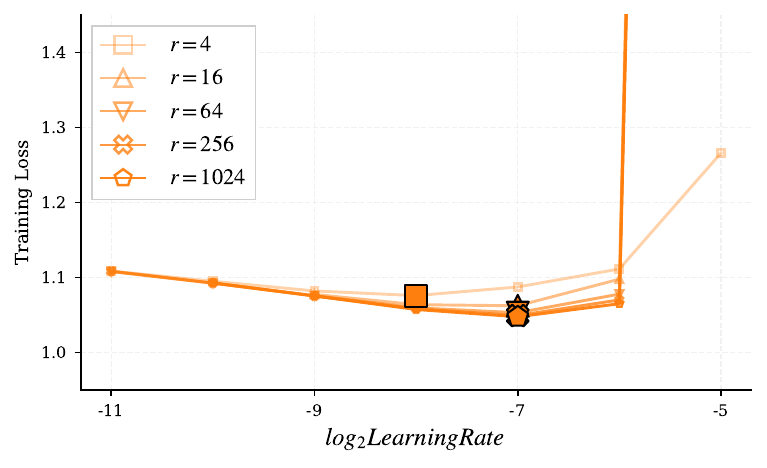}
    \caption{ViT-Huge/14}
    \label{fig:initA-alpha1-vit-huge}
  \end{subfigure}
  \caption{Learning rate sweeps for \initA{} with rank-dependent $\alpha=r^{-1}$ across four models. Curves show final training loss (EMA-smoothed) versus log-scale learning rate ($\log_2(\eta)$). Large markers denote per-rank optima. The Y-axis is clipped for readability.}
  \label{fig:initA-alpha1-main}
\end{figure*}

\section{Experiments}

Our experiments verify $\mu$A scaling rules empirically and learning rate transfer from LoRA to FFT.
We examine three regimes: supervised finetuning (SFT) across diverse architectures, reinforcement learning with verifiable rewards (RLVR), and text-to-image diffusion model finetuning.

\subsection{Experimental Setup}
\label{sec:setup}

\paragraph{Unified finetuning recipe.}
We use a unified finetuning recipe across all experiments.
Specifically, we use AdamW with weight decay of $0.01$ and global-norm gradient clipping of $1.0$.
The learning rate warms up linearly over the first $5\%$ of training steps to its peak ($\eta$), then follows a cosine decay to $0.1\times$ the peak over the remaining steps.

\paragraph{SFT setting.}
We experiment with five public base models:


\emph{Decoder-only language models.}
For instruction tuning, we tune Llama-3.2-1B~\citep{grattafiori2024llama} on the Tulu-3 supervised finetuning mixture~\citep{lambert2024tulu}.
For long-context reasoning, we train Qwen2.5-3B-Instruct~\citep{qwen2025qwen25technicalreport} on the OpenThoughts-114k~\citep{guha2025openthoughts}.


\emph{Encoder-only models.}
We finetune RoBERTa-large~\citep{liu2019roberta} on the ANLI benchmark~\citep{nie2019adversarial} and adapt ViT-Huge/14~\citep{dosovitskiy2021an} to ImageNet-1K~\citep{deng2009imagenet}.
Following prior work~\citep{guo2024crossmae}, we fix the learning rate of the classification head at $10^{-3}$ and sweep only the LoRA learning rate.

\emph{Vision--language model.}
For multimodal instruction following, we finetune Qwen3-VL-2B-Instruct~\citep{bai2025qwen3vltechnicalreport} on LLaVA-Instruct-Mix, containing image--instruction pairs. 

\paragraph{RLVR setting.}
We adapt Llama-3.1-8B~\citep{grattafiori2024llama} to GSM8k~\citep{cobbe2021gsm8k} with GRPO~\citep{shao2024deepseekmath}.
The reward is based on answer correctness and format compliance.

\paragraph{Text-to-image diffusion model finetuning.}
We finetune Stable-Diffusion-v1.5~\citep{Rombach_2022_CVPR} on the Naruto-BLIP-Captions~\citep{cervenka2022naruto2} dataset.

\paragraph{Evaluation protocol.}
We compare three LoRA configurations: \initA{} with $\alpha=1$, \initA{} with $\alpha=r^{-1}$, and \initB{} with $\alpha=1$.
For each configuration, we sweep learning rate $\eta$ (peak learning rate)  on a $\log_2$-scale grid with multiplicative steps of $2\times$ and select the best value independently for (i) final training loss (EMA-smoothed) and (ii) a task-specific validation metric.
For RLVR, we select learning rates by final training reward; for diffusion models, by validation FID~\citep{heusel2017gans}.
More details are in Appendix~\ref{app:exp-details}.
We report training-loss-based sweeps in the main text and defer the other results to Appendix~\ref{app:exp-sft-results},~\ref{app:sec:revr-results},~\ref{app:sec:stable-diffusion-results}.

\begin{figure*}[t]
  \centering
  \begin{subfigure}[t]{0.24\textwidth}
    \centering
    \includegraphics[width=\linewidth]{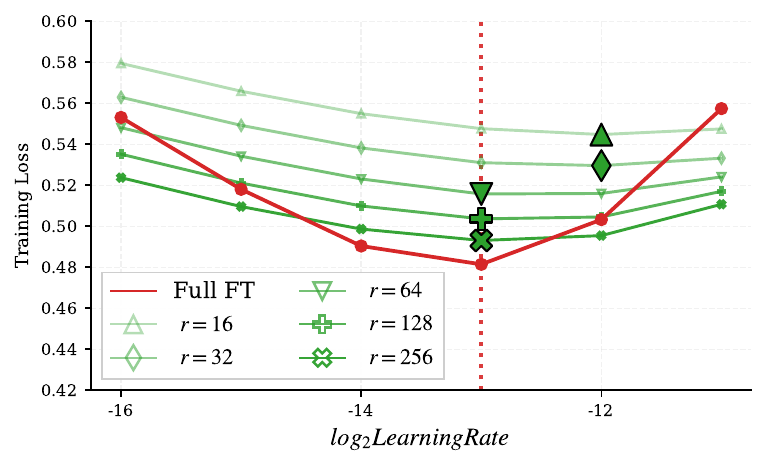}
    \caption{Qwen2.5-3B-Instruct}
    \label{fig:initB-constant1-qwen}
  \end{subfigure}\hfill
  \begin{subfigure}[t]{0.24\textwidth}
    \centering
    \includegraphics[width=\linewidth]{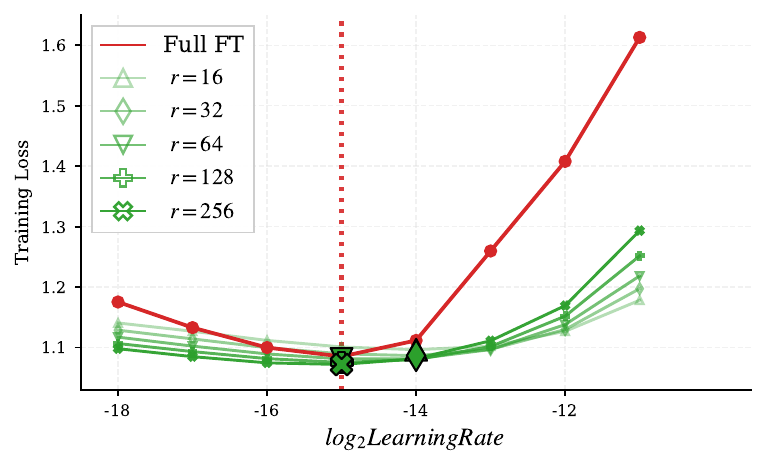}
    \caption{Qwen3-VL-2B-Instruct}
    \label{fig:initB-constant1-qwen-vl}
  \end{subfigure}\hfill
  \begin{subfigure}[t]{0.24\textwidth}
    \centering
    \includegraphics[width=\linewidth]{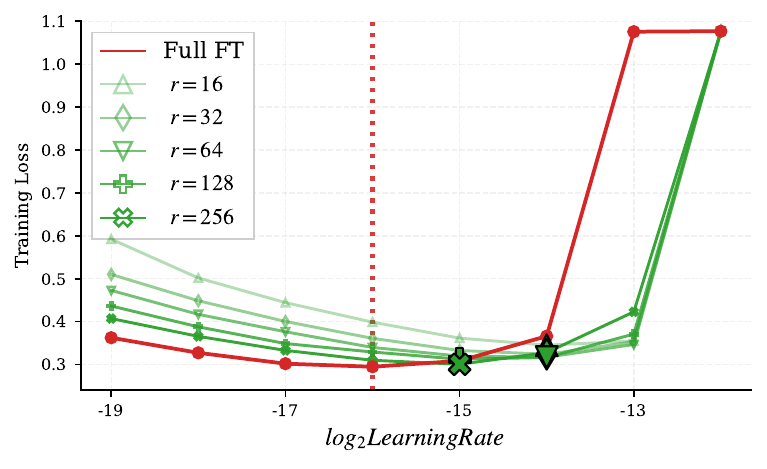}
    \caption{RoBERTa-large}
    \label{fig:initB-constant1-roberta}
  \end{subfigure}
  \begin{subfigure}[t]{0.24\textwidth}
    \centering
    \includegraphics[width=\linewidth]{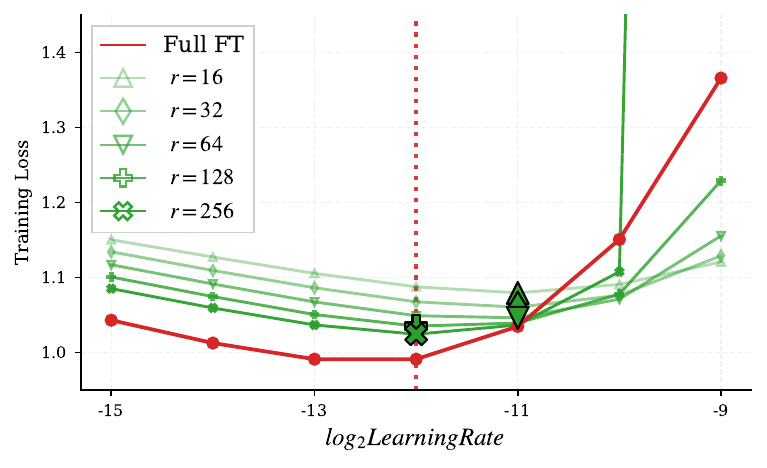}
    \caption{ViT-Huge/14}
    \label{fig:initB-constant1-vit-huge}
  \end{subfigure}
  \caption{Learning rate sweeps for \initB{} with $\alpha=1$ across four models.
  Curves show final training loss (EMA-smoothed) versus log-scale learning rate ($\log_2(\eta)$) for multiple ranks (\textcolor[HTML]{2ca02c}{green}) and FFT (\textcolor[HTML]{d62728}{red}). Large markers denote per-rank optima. The Y-axis is clipped for readability. The red vertical dashed line marks the optimal FFT learning rate.}
  \label{fig:initB-constant1-main}
\end{figure*}

\begin{figure*}[t]
  \centering
  \begin{subfigure}[t]{0.24\textwidth}
    \centering
    \includegraphics[width=\linewidth]{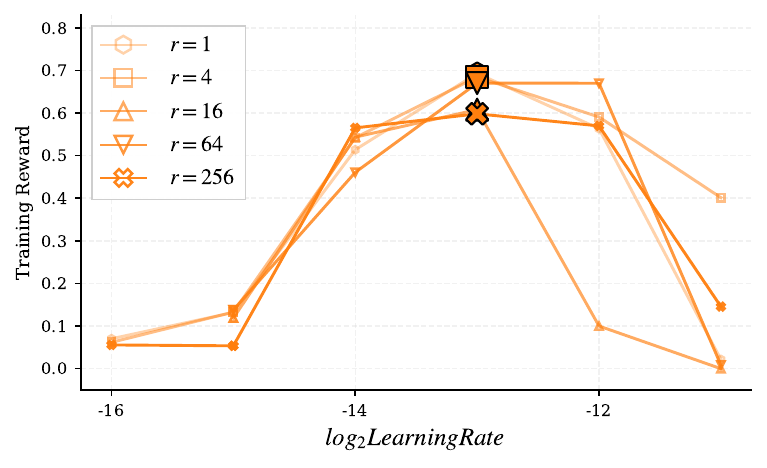}
    \caption{\initA{}, $\alpha=r^{-1}$}
    \label{fig:reinforcement-learning-initA-alpha1}
  \end{subfigure}\hfill
  \begin{subfigure}[t]{0.24\textwidth}
    \centering
    \includegraphics[width=\linewidth]{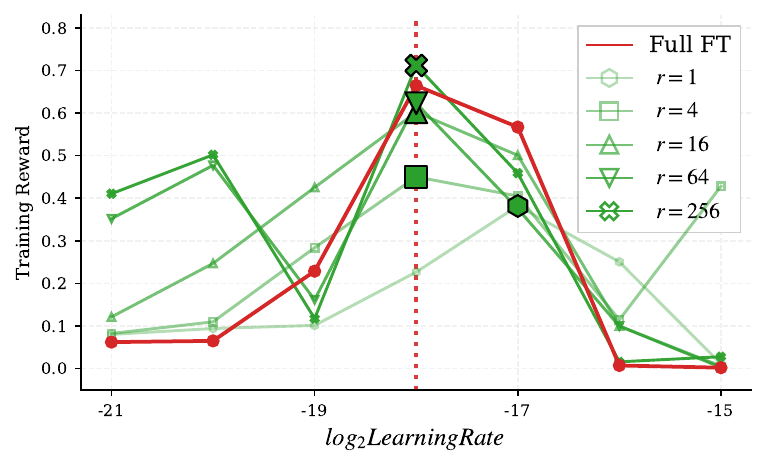}
    \caption{\initB{}, $\alpha=1$}
    \label{fig:reinforcement-learning-initB-constant1}
  \end{subfigure}\hfill
  \begin{subfigure}[t]{0.24\textwidth}
    \centering
    \includegraphics[width=\linewidth]{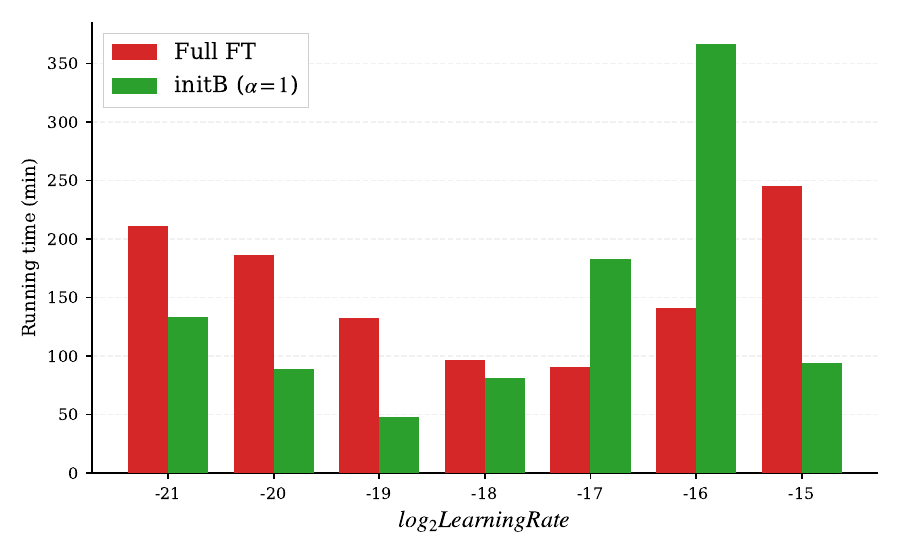}
    \caption{Training time}
    \label{fig:reinforcement-learning-training-time}
  \end{subfigure}
  \begin{subfigure}[t]{0.24\textwidth}
    \centering
    \includegraphics[width=\linewidth]{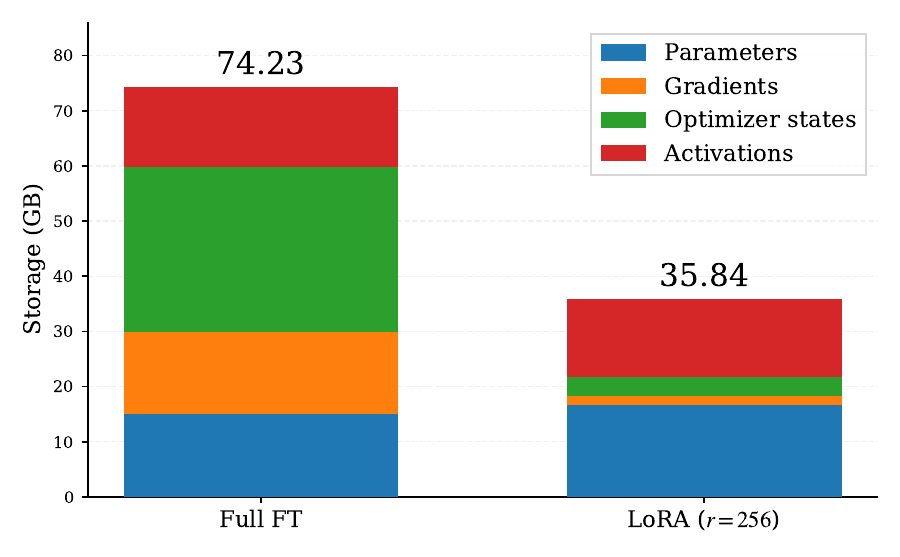}
    \caption{Peak memory profile}
    \label{fig::reinforcement-learning-storage-profile}
  \end{subfigure}
  \caption{RLVR results on Llama-3.1-8B (GSM8k) with GRPO. 
  (a)--(b)~Learning rate sweeps plotting final training reward versus $\log_2(\eta)$. Large markers denote the best learning rate per rank; the red dashed line marks the FFT optimum. 
  (c)~Training time versus learning rate for FFT (\textcolor[HTML]{d62728}{red}) and \initB{} with $\alpha{=}1$ and $r{=}256$ (\textcolor[HTML]{2ca02c}{green}). 
  (d)~Peak GPU memory breakdown for FFT (left) and LoRA with $r{=}256$ (right).}
  \label{fig:reinforcement-learning}
\end{figure*}


\paragraph{Theory–experiment alignment.}
Our scaling rules characterize the leading-order dependence of the stable learning rate on width $n$ and rank $r$ in the joint limit $(n, r) \to \infty$.
In practice, however, finetuning at finite width and rank introduces two deviations from the asymptotic prediction.

\emph{Finite-size effects.}
When $(n, r)$ are finite, constant factors and lower-order terms can shift the optimal learning rate. These shifts are more pronounced for smaller ranks, where the large-$r$ asymptotics are less accurate.

\emph{Discrete learning rate grid.}
We sweep learning rates on a $\log_2$-spaced grid. 
The reported optimum is the best value among discrete points.
When the loss landscape is flat near the true optimum, adjacent grid points perform similarly, and training noise may determine which one appears best.

Accordingly, we evaluate the scaling rules by qualitative trends rather than exact numerical agreement.
For configurations predicted to be rank-invariant, we expect optimal learning rates to cluster within one grid step ($2\times$).
For rank-dependent configurations, we expect the optimum to change monotonically with rank and to span multiple grid points.

\subsection{SFT Results}


\paragraph{\initA{} with $\alpha=1$ (Figures~\ref{fig:initA-constant1-main},~\ref{fig:llama3-tutu3-demonstration:a})}

For a fixed base model (i.e., fixed width $n$), our theory in~\cref{table:initA-alpha-result} predicts that the optimal learning rate scales as $\eta \propto {r}^{-1/2}$ with rank. 
This prediction is broadly supported by the experiments: as rank increases, the best-performing learning rate monotonically shifts toward smaller values. 
Since each figure plots $\log_2(\eta)$ on the X-axis, a $4\times$ increase in rank should shift the optimum leftward by one unit (i.e., a $2\times$ reduction in learning rate)—a pattern clearly visible across most models.
We observe minor deviations on Qwen2.5-3B-Instruct, where both $r=16$ and $r=64$ attain their minimum loss at the same learning rate.
We attribute this to the learning rate grid discretization: as shown in~\cref{fig:initA-constant1-qwen}, the loss curves for $r=4$ and $r=16$ flatten near the optimum.
Moreover, models finetuned with \initA{} and $\alpha=1$ are sensitive to large learning rates: increasing the learning rate by $2\times$ beyond the optimum can substantially degrade training loss or cause divergence, particularly for Qwen2.5-3B-Instruct and RoBERTa-large.

While we report results for $\alpha=1$ here, our theory holds for any rank-independent constant $\alpha$. Since $\alpha=2$ is a common default in practice, we verify in Appendix~\ref{app:exp-sft-results-initA-constant1} that the same scaling behavior persists for $\alpha \in \{2, 4\}$.

\paragraph{\initA{} with $\alpha=r^{-1}$ (Figures~\ref{fig:initA-alpha1-main},~\ref{fig:llama3-tutu3-demonstration:b})}

According to our theory, for a fixed base model, the optimal learning rate should be independent of LoRA rank under this setting. 
The experimental results support this prediction: for all models, the optimal learning rates across ranks fall within a narrow range (typically within $2\times$), consistent with rank-invariance.
As with \initA{} and $\alpha=1$, this small variation is attributable to the discrete learning rate grid.
Compared to the $\alpha=1$ setting, the optimal learning rates under $\alpha=r^{-1}$ are substantially larger.
We also observe that smaller ranks (e.g., $r=4$) usually favor slightly smaller learning rates than larger ranks—opposite to the trend observed with $\alpha=1$.
This likely reflects the finite-size effects: at smaller ranks, lower-order terms can exert a non-negligible influence on feature learning and suppress the optimal learning rate.

\paragraph{\initB{} with $\alpha=1$ (Figures~\ref{fig:initB-constant1-main},~\ref{fig:llama3-tutu3-demonstration:c})}

In this case, the optimal learning rate should be independent of LoRA rank. 
The empirical results confirm this prediction: across all models, the best learning rates for different ranks cluster tightly.
The small variation likely stems from learning rate grid resolution and finite-size effects.
Although both \initB{} with $\alpha=1$ and \initA{} with $\alpha=r^{-1}$ exhibit rank-invariant optimal learning rates, our theory shows they differ in width scaling: $\eta \propto n^{-1}$ for the former versus $\eta \propto n^{-1/2}$ for the latter.
For large $n$, \initB{} thus requires substantially smaller learning rates, as our experiments confirm.
We also observe that \initB{} with $\alpha=1$ is more tolerant of larger learning rates.
This stability is consistent with the theoretical finding that intermediate features remain order-one under \initB{}, preventing internal activations from growing with width.

Crucially, our theory reveals that FFT shares the same learning rate scaling $\eta \propto n^{-1}$ with \initB{} with $\alpha=1$, suggesting that their optimal learning rates may align and leading to  learning rate transfer between the two. 
Experiments support this conjecture: for most models, the optimal learning rates for \initB{} ($\alpha=1$) align closely with those for FFT.
The exception is RoBERTa-large, where FFT favors a slightly smaller learning rate than \initB{}. 
We attribute this gap to the additional classification head: although its learning rate is fixed, jointly training it with LoRA adapters may still perturb the optimal learning rate. 
Nevertheless, our experiments show that learning rates tuned for \initB{} with $\alpha=1$ transfers effectively to FFT in most model architectures, with minimal performance degradation.

\subsection{RLVR Results}

RLVR results are reported in~\cref{fig:reinforcement-learning}.
The learning rate sweeps reveal patterns consistent with our findings under SFT.
Under \initA{} with $\alpha = r^{-1}$ (\cref{fig:reinforcement-learning-initA-alpha1}), the optimal learning rate is approximately rank-invariant: curves for different ranks peak at nearly the same learning rate.
Similarly, under \initB{} with $\alpha = 1$ (\cref{fig:reinforcement-learning-initB-constant1}), the optimal learning rate is rank-invariant and aligns with the FFT optimum, suggesting that learning rates tuned with LoRA can transfer seamlessly to FFT.
For results of \initA{} with $\alpha=1$, please kindly refer to Appendix~\ref{app:sec:revr-results-sweep}.

Compared with SFT, RL finetuning is more sensitive to learning rate selection. In SFT, a learning rate that is too small simply slows convergence; in RLVR, however, overly large or small learning rates substantially degrade training reward.
Importantly, this sensitivity also manifests in training time.
As shown in~\cref{fig:reinforcement-learning-training-time}, the optimal learning rate incurs the shortest training time, while suboptimal rates can increase wall-clock time by $2$--$4\times$.
The $\mu$A scaling rules thus provide practical value beyond final performance: by identifying well-behaved learning rate regions in advance, practitioners can avoid time-consuming hyperparameter searches over suboptimal configurations. 
We provide a detailed investigation on this phenomenon in Appendix~\ref{app:subsection:rl-ft-case-study}.

Although LoRA and FFT have comparable per-step wall-clock times in our experiment, their memory footprints differ substantially.
\Cref{fig::reinforcement-learning-storage-profile} shows that FFT consumes over $2\times$ the GPU memory of LoRA ($r{=}256$), with most overhead attributable to gradient buffers and optimizer states.
This gap widens further at larger model sizes.
Our theory thus enables a practical workflow that exploits this difference: one can sweep learning rates using \initB{} with $\alpha{=}1$ on mid-tier GPUs (e.g., A6000), which are more available and cost-effective, then transfer the optimal rate directly to FFT on higher-capacity hardware with minimal re-tuning.

\begin{figure}[t]
  \centering
  \begin{subfigure}[t]{0.24\textwidth}
    \centering
    \includegraphics[width=\linewidth]{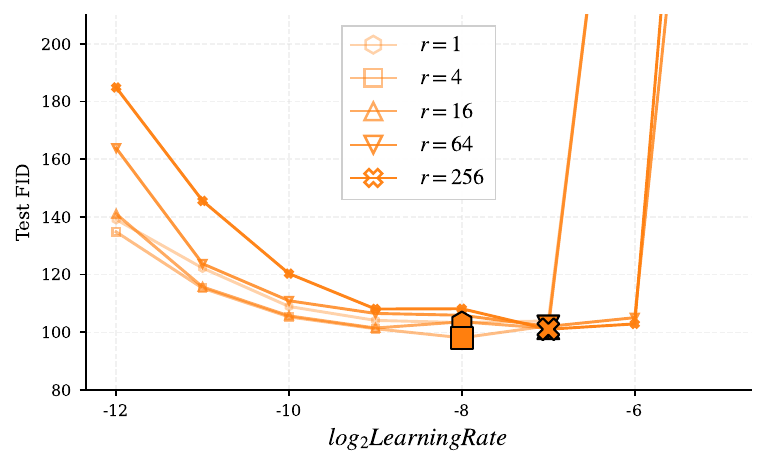}
    \caption{\initA{}, $\alpha=r^{-1}$}
    \label{fig:stable-diffusion-initA-alpha1}
  \end{subfigure}\hfill
  \begin{subfigure}[t]{0.24\textwidth}
    \centering
    \includegraphics[width=\linewidth]{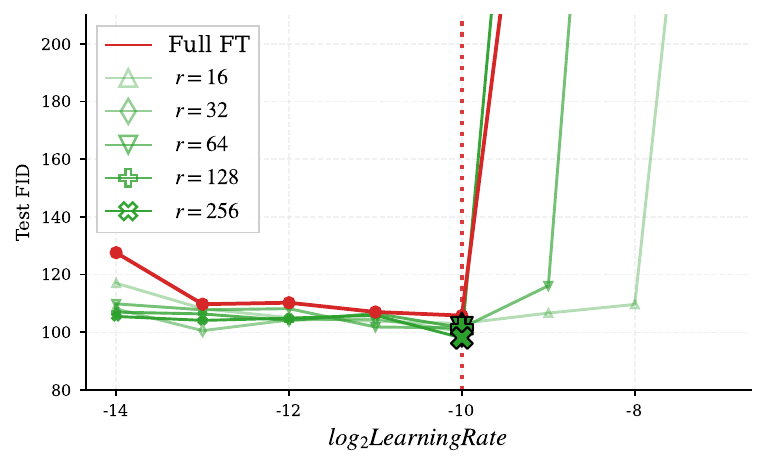}
    \caption{\initB{}, $\alpha=1$}
    \label{fig:stable-diffusion-initB-constant1}
  \end{subfigure}
  \caption{Results on Stable-Diffusion-1.5 (naruto-blip-captions). Subfigures plot validation FID score versus log-scale learning rate ($\log_2(\eta)$). Large markers denote the best learning rate for each rank; the red vertical dashed line marks the FFT optimum.}
  \label{fig:stable-diffusion}
\end{figure}

\subsection{Text-to-Image Diffusion Finetuning Results}
\Cref{fig:stable-diffusion} summarizes the results for finetuning a stable diffusion model. 
The learning rate sweeps show patterns consistent with our theory: under \initA{} with $\alpha=r^{-1}$, the optimal learning rate is rank-invariant, while under \initB{} with $\alpha=1$, the optimal learning rate remains independent of rank and aligns with that of FFT. Notably, FFT is more sensitive to learning rate selection on diffusion models, exhibiting a sharp increase in validation FID at higher learning rates. 
This may be due to the UNet architecture's blocks of varying width, which amplify sensitivity under FFT.
Additional results for \initA{} with $\alpha=1$ are in Appendix~\ref{app:sec:stable-diffusion-results-lr-sweep}.

\section{Conclusion and Limitations}
This work proposes $\mu$A (Maximal-Update Adaptation), a theoretical framework for LoRA finetuning that characterizes how the optimal learning rate scales with model width and adapter rank to achieve maximal feature updates. 
Using this framework, we identify a LoRA configuration that enables learning rate transfer between LoRA and FFT. 
Comprehensive experiments validate the $\mu$A scaling rules and demonstrate effective learning rate transfer in practice.

Our work has two main limitations. First, our theory is asymptotic and can be sensitive to noise for finite $(n, r)$. 
Second, our analysis establishes a necessary condition for learning rate transfer; future work is needed to identify and prove sufficient conditions for a more complete theory. 
Nevertheless, our experiments show strong agreement between theoretical predictions and empirical results.

\newpage
\section*{Acknowledgment}
This work used GPU servers at Delta AI (NCSA) through allocation \#CIS250883 from the Advanced Cyberinfrastructure Coordination Ecosystem: Services \& Support (ACCESS) program \cite{access}, which is supported by U.S. National Science Foundation grants \#2138259, \#2138286, \#2138307, \#2137603, and \#2138296.

\bibliography{reference}
\bibliographystyle{icml2026}

\newpage
\appendix
\onecolumn

\section{Theoretical Proofs and Derivations for $\mu$A}
\label{app:theory}

\paragraph{Purpose and scope.}
This appendix provides the detailed derivations supporting the theories in the main text.
We analyze the dynamics of LoRA finetuning in the limit where both the network width $n$ and the LoRA rank $r$ grow to infinity. 
Our first goal is to ensure that the output features remain bounded throughout finetuning---that is, they do not explode as $n$ and $r$ increase. 
Achieving this requires scaling the learning rate appropriately with $n$ and $r$. 
Our second goal is to ensure that the feature updates remain non-vanishing, so that learning proceeds efficiently and does not stall as the network grows.

\subsection{Preliminaries and Notation Recap}
\label{app:theory:prelims}

We begin by briefly restating the notation used in the main text to make the appendix self-contained.

\subsubsection{Asymptotic notation and scaling conventions}
\label{app:theory:prelims:asymptotic}

\begin{definition}[Asymptotic notation]
\label{def:app:asymptotic_notion}
For sequences $c_{n,r}\in\R$ and $d_{n,r}\in\R_{+}$, we write $c_{n,r}=\BigO(d_{n,r})$
(resp.\ $c_{n,r}=\Omega(d_{n,r})$) if there exist constants $\kappa>0$ and $N\in\mathbb{N}$ such that for all
$n,r> N$ with $r\leq n$, we have $|c_{n,r}|< \kappa d_{n,r}$ (resp.\ $|c_{n,r}|> \kappa d_{n,r}$).
We write $c_{n,r}=\Theta(d_{n,r})$ if both bounds hold.
This notation extends element-wise to vectors: for $c_{n,r}=(c_{n,r}^i)_{i=1}^k\in\R^k$ and $d_{n,r}=(d_{n,r}^i)_{i=1}^k\in\R_+^k$, we write
$c_{n,r}=\BigO(d_{n,r})$ when $c_{n,r}^i=\BigO(d_{n,r}^i)$ for all $i\in[k]$, and similarly for $\Omega$ and $\Theta$.
For random variables, the notation is understood in the second-moment sense:
$c_{n,r}=\BigO(d_{n,r})$ means $\bigl(\E|c_{n,r}|^2\bigr)^{1/2}=\BigO(d_{n,r})$.
\end{definition}

\paragraph{Fixed training regime.}
All results are proved for a fixed training horizon $T$ that does not scale with $(n,r)$.

\subsubsection{LoRA layer, features, and the per-step decomposition}
\label{app:theory:prelims:lora-features}

\begin{definition}[Low-Rank Adaptation (LoRA)]
\label{def:app:lora-features}
Let $W^{\star}\in\R^{n\times n}$ be a pretrained weight matrix that remains fixed during finetuning. 
LoRA parameterizes the finetuned weights as
\[
W \;=\; W^{\star} + \alpha BA,
\]
where $B\in\R^{n\times r}$ and $A\in\R^{r\times n}$ are trainable factors with rank $r\ll n$ and
$\alpha\in\R$ is a scalar multiplier.
\end{definition}

For a given layer augmented with a LoRA adaptor, we use $\Zin$ and $\Zout$ to denote the input and output of the layer, respectively. Specifically, we can write the layer operation as $\Zout = W^* \Zin + \alpha BA\Zin$. In addition, we use $d\Zout$ to denote the backpropagated gradient of the loss with respect to $\Zout$, i.e., $d\Zout = {\partial\Loss}/{\partial\Zout}$. Next, we define the following intermediate and output features.

\begin{definition}[LoRA features]
\label{def:app:lora-feature}
For a layer input $\Zin$, define the intermediate and output LoRA features
\[
Z_A := A\Zin \in \R^{r},
\qquad
Z_B := \alpha BZ_A = \alpha BA\Zin \in \R^{n}.
\]
\end{definition}

For $t\le T$, we use superscripts to denote LoRA features at step $t$, i.e. $Z_A^t$, $Z_B^t$ and subscripts to denote weights at step $t$, i.e. $A_t$, $B_t$.
To isolate the contribution of individual LoRA layers to feature learning, we assume that only a single LoRA layer is trainable while all other layers remain fixed. Under this assumption, the layer input $\Zin$ 
remains unchanged across finetuning steps. After step $t$, the update to $Z_B$ decomposes as:

\begin{equation}
\label{eq:app:delta-decomposition}
    \Delta Z_B^t = \underbrace{\alpha B_{t-1} \Delta Z_A^t}_{\delta_t^1} + \underbrace{\alpha \Delta B_t Z_A^{t-1}}_{\delta_t^2} + \underbrace{\alpha \Delta B_t \Delta Z_A^t}_{\delta_t^3}
\end{equation}

Next, we give the formal definition of the initialization schemes considered in this work.
\begin{definition}[Initialization schemes for LoRA]
\label{def:app:init-schemes}
To initialize the LoRA update as a no-op, we consider:
\begin{itemize}
\item \textbf{\initA{}:} $A_0$ has i.i.d.\ entries $\mathcal{N}(0,1/n)$ and $B_0=0$.
\item \textbf{\initB{}:} $B_0$ has i.i.d.\ entries $\mathcal{N}(0,1/r)$ and $A_0=0$.
\end{itemize}
\end{definition}

\subsubsection{Stability and non-vanishing learning criteria}
\label{app:theory:prelims:stability}

\begin{definition}[Feature stability]
\label{def:app:feature-stability}
Fix a finite number of finetuning steps $T$ independent of $(n,r)$. We say LoRA finetuning is \emph{stable} if for all steps $t\le T$, we have $Z_B^{t} = \BigO(1)$ in the joint limit $n,r\to\infty$ with $r\leq n$.
\end{definition}

\begin{lemma}[Telescoping bound]
\label{lem:app:telescoping}
Fix $t\le T$. If $Z_B^0 = \BigO(1)$ and $\Delta Z_B^s = \BigO(1)$ for all $s\le t$, then $Z_B^t = \BigO(1)$.
\end{lemma}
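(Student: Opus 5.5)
The argument is a telescoping sum combined with the triangle inequality for the second-moment norm. Write $\|X\| := (\E|X|^2)^{1/2}$, applied coordinatewise when $X$ is a vector. This is the quantity that the asymptotic notation in Definition~\ref{def:app:asymptotic_notion} controls for random variables, and it is a norm on $L^2$, so Minkowski's inequality holds for it.

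First, for each $t\le T$ we decompose
\[
Z_B^t = Z_B^0 + \sum_{s=1}^{t} \Delta Z_B^s,
\]
which follows directly from $\Delta Z_B^s = Z_B^s - Z_B^{s-1}$. Fix any coordinate $i\in[n]$ and apply Minkowski's inequality to get
\[
\|(Z_B^t)_i\| \le \|(Z_B^0)_i\| + \sum_{s=1}^{t}\|(\Delta Z_B^s)_i\|.
\]

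Second, we use the hypotheses. There are constants $\kappa_0,\kappa_1,\dots,\kappa_t$ and thresholds $N_0,\dots,N_t$ such that for all $n,r>N_s$ with $r\le n$, the $s$-th term is bounded by $\kappa_s$. Set $N=\max_s N_s$ and $\kappa=\sum_{s=0}^{t}\kappa_s$. Then $\|(Z_B^t)_i\|<\kappa$ for all $n,r>N$, which is exactly $Z_B^t=\BigO(1)$.

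The only point needing care is the uniformity of these constants. Because $t\le T$ and $T$ is fixed independently of $(n,r)$, the sum has finitely many terms, so both the maximum of the thresholds and the sum of the constants are finite and independent of $(n,r)$. I also read the elementwise convention so that the constants are uniform across coordinates $i$; this is the standard reading in this asymptotic framework. If not, the same argument still works coordinate by coordinate. The case of deterministic sequences is the same argument with absolute values in place of $\|\cdot\|$.
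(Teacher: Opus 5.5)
Your proposal is correct and follows the paper's proof: the paper also telescopes $Z_B^t = Z_B^0 + \sum_{s=1}^{t}\Delta Z_B^s$ and concludes that, since $t\le T$ is fixed, a finite sum of $\BigO(1)$ terms is $\BigO(1)$. You additionally make explicit two points the paper leaves implicit: the Minkowski inequality for the second-moment norm, and the choice of a common threshold and constant across the finitely many terms.
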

\begin{proof}
By telescoping, $Z_B^t = Z_B^0 + \sum_{s=1}^{t} \Delta Z_B^s$. For a fixed $t$, this is a sum of $Z_B^0 = \BigO(1)$ and finitely many $\BigO(1)$ terms, hence $\BigO(1)$.
\end{proof}

Since the LoRA adapter is typically initialized as a no-op (i.e., $Z_B^0 = 0$), Lemma~\ref{lem:app:telescoping} implies that bounding the feature updates $\Delta Z_B^s = \BigO(1)$ is sufficient to ensure bounded features $Z_B^t = \BigO(1)$. Thus, verifying stability (Definition~\ref{def:app:feature-stability}) reduces to verifying that the updates remain bounded. However, to avoid trivial updates, we also require $\Delta Z_B^s$ to be of non-vanishing magnitude. This leads to the following definition:

\begin{definition}[Stable feature learning with LoRA]
\label{def:app:stable-feature-learning}
We say LoRA induces \emph{stable feature learning} if the dynamics are stable (Definition~\ref{def:app:feature-stability}), and $\Delta Z_B^t = \Theta(1)$ for all $t \le T$.
\end{definition}

As shown in ~\cref{eq:app:delta-decomposition}, the feature update $\Delta Z_B^t$ decomposes into three terms $(\delta_t^i)_{i \in \{1,2,3\}}$. 
A \emph{sufficient} condition for feature stability is the term-wise bound $\delta_t^i=\BigO(1)$ for all $i\in\{1,2,3\}$, which avoids relying on cancellations between terms.
Stable feature learning additionally requires $\Delta Z_B^t=\Theta(1)$; under the term-wise control above, this in particular implies that at least one term must satisfy $\delta_t^i=\Omega(1)$.
Efficiency imposes a stronger requirement: each term must be exactly $\Theta(1)$, ensuring that both weight matrices $A$ and $B$ contribute meaningfully to the update. We formalize this as follows:

\begin{definition}[Efficient Feature Learning with LoRA] 
\label{def:app:efficient-feature-learning}
We say that LoRA finetuning is efficient if it induces stable feature learning (Definition~\ref{def:app:stable-feature-learning}), and for all finetuning steps $t\le T$, we have:
\begin{equation*}
\delta_t^i = \Theta(1), \quad i\in\{1,2,3\}.
\end{equation*}
\end{definition}

\subsection{Technical Setup}

\subsubsection{Assumptions on Training Dynamics}
\label{app:theory:assumption}

\begin{assumption}[Bounded forward/backward signals]
\label{assump:app:bounded-signals}
    For any fixed step $t\le T$, the layer input and the backpropagated output gradient satisfy
    \[
    \Zin = \Theta(1),
    \qquad
    d\Zout := \frac{\partial \mathcal{L}}{\partial \Zout} = \Theta(1).
    \]
    This ensures that features and gradients neither explode nor vanish at any step as $(n,r)\to\infty$.
\end{assumption}

\begin{assumption}[Optimizer abstraction]
\label{assump:app:optimizer}
    We use a simplified variant of Adam~\citep{Kingma14Adam} without momentum, reducing it to SignSGD~\citep{Bernstein18SIGNSGD}, which gives element-wise sign updates:
    \[
    g_{A}^t := \operatorname{sign}\!\Big(\frac{\partial \mathcal{L}_t}{\partial A_{t-1}}\Big),
    \qquad
    g_{B}^t := \operatorname{sign}\!\Big(\frac{\partial \mathcal{L}_t}{\partial B_{t-1}}\Big),
    \]
    and parameter updates
    \[
    A_t = A_{t-1} - \eta\, g_{A}^t,
    \qquad
    B_t = B_{t-1} - \eta\, g_{B}^t,
    \]
    where $\operatorname{sign}(\cdot)$ is applied element-wise (with $\operatorname{sign}(0)=0$) and $\eta$ denotes the learning rate.
\end{assumption}

At a specific training step $t$, we would have:
\begin{align}
    \frac{\partial\Loss_t}{\partial A_{t-1}} &= \alpha (B_{t-1}^\top d\Zout^{t-1}) \otimes \Zin \label{eq.derivative_A}\\
    \frac{\partial\Loss_t}{\partial B_{t-1}} &= \alpha d\Zout^{t-1} \otimes Z_A^{t-1}
\end{align}
where $\otimes$ denotes outer product.

Following \cite{hayou2024impact}, we also consider the single-sample loss setting. This simplifies the theoretical analysis and leads to closed-form expressions.

\subsubsection{Auxiliary lemmas}
\label{app:theory:aux-lemmas}

This section contains auxiliary lemmas that are used as intermediate steps in the following analysis.

\begin{lemma}[Recursive Formulas] 
\label{lem:app:recursive}
For $t$ fixed, the asymptotic dynamics of $Z_A^t$ and $B_t$ follow the recursive formula:
\begin{align}
    Z_A^t &= \max(\Theta(Z_A^{t-1}), \Theta(\eta n )) \label{eq.update_A} \\
    B_t &= \max(\Theta(B_{t-1}), \Theta(\eta)) \label{eq.update_B}
\end{align}
\end{lemma}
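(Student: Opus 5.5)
We derive both recursions directly from the SignSGD update of Assumption~\ref{assump:app:optimizer}, using the gradient formulas \eqref{eq.derivative_A} and the analogous one for $B$. Throughout, $\Theta(\cdot)$ is read entrywise in the second-moment sense of Definition~\ref{def:app:asymptotic_notion}. All dimension dependence enters through sums of sign-structured entries.

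\textbf{Recursion for $Z_A^t$.}
Write
\[
Z_A^t = A_t\Zin = A_{t-1}\Zin - \eta\, g_A^t \Zin = Z_A^{t-1} + \Delta Z_A^t .
\]
By \eqref{eq.derivative_A}, $\partial\Loss_t/\partial A_{t-1}$ is the rank-one matrix $\alpha\,u\otimes \Zin$ with $u = B_{t-1}^\top d\Zout^{t-1}\in\R^r$. Because the sign of a rank-one outer product factorizes,
\[
g_A^t = \sign(\alpha u)\otimes \sign(\Zin).
\]
Hence the $i$-th coordinate of $\Delta Z_A^t$ is
\[
(\Delta Z_A^t)_i = -\eta\,\sign(\alpha u_i)\sum_{j=1}^n |\Zin_j| .
\]
By Assumption~\ref{assump:app:bounded-signals}, the entries of $\Zin$ are $\Theta(1)$, so $\sum_j|\Zin_j| = \Theta(n)$. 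This is a sum of nonnegative terms, so no cancellation can occur. Therefore every nonzero entry of $\Delta Z_A^t$ is $\Theta(\eta n)$.

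The degenerate case is $u=0$, which happens under \initA{} at $t=1$ since $B_0=0. There the update vanishes and $Z_A^t = Z_A^{t-1}$. This is still consistent with the $\max$ formula, read as ``at most, and generically equal to''.

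Adding $\Theta(Z_A^{t-1})$ and $\Theta(\eta n)$ gives \eqref{eq.update_A}. For this we argue that exact cancellation between the two terms is non-generic. Concretely, $\sign(u_i)$ depends on $B_{t-1}$ and $d\Zout$, whereas $Z_A^{t-1}$ is built from earlier updates and the random initialization. So for fixed $t$ the second moment of the sum is of the order of the larger of the two.

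\textbf{Recursion for $B_t$.}
The gradient is $\alpha\, d\Zout^{t-1}\otimes Z_A^{t-1}$, whose sign matrix has entries in $\{-1,0,1\}$. Thus each entry of $\Delta B_t$ has magnitude $\eta$ (or $0$), giving $\Delta B_t = \Theta(\eta)$ entrywise. Then $B_t = B_{t-1} + \Delta B_t$ yields \eqref{eq.update_B} by the same no-cancellation reasoning.

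\textbf{Main obstacle.}
The delicate step is justifying that the sum $X+Y$ is $\Theta(\max)$ rather than merely $\BigO(\max)$, that is, ruling out cancellations. I would handle it by induction over $t\le T$, since $T$ is fixed and only finitely many steps are involved. When one term strictly dominates in order, the claim is immediate from the triangle inequality. When the two terms are of the same order, I would invoke the convention used in \citet{hayou2024impact}: the $\max$ notation denotes the generic order, with exact cancellation occurring only on a measure-zero set of configurations or being excluded by assumption. The derivation should state this convention explicitly.
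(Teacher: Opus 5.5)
Your proposal is correct and follows the paper's proof essentially step for step. You factor the sign of the rank-one gradient, obtain $g_A^t\Zin = \|\Zin\|_1\,\sign(\alpha u)$ with $\|\Zin\|_1=\Theta(n)$, note that $\Delta B_t$ has entries in $\{0,\pm\eta\}$, and combine with the previous scale through the $\max$ convention.

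You also explicitly treat possible cancellations and the $\sign(0)=0$ case, e.g.\ $B_0=0$ under \initA{} at $t=1$. There the paper's use of $\sign(\cdot)=\Theta(1)$ breaks down, and the formula holds only as an upper bound. This is a refinement the paper leaves implicit, not a different route.
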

\begin{proof}
    Let $S^t=\alpha B_{t-1}^\top d\Zout^{t-1}$. We, from Eq.~\ref{eq.derivative_A}, have:
    \begin{align*}
        g_A^t &= \sign(S^t \otimes \Zin) \\
        &= \sign(S^t)\otimes \sign(\Zin)
    \end{align*}
    where in the second equation we use the fact that both $S^t$ and $\Zin$ are both vectors. Hence, we obtain:
    \begin{align*}
        g_A^t \Zin &= (\sign(S^t)\otimes \sign(\Zin)) \Zin \\
        &= (\sign(\Zin)^\top \Zin)\sign(S^t) \\
        &= \Theta(n)
    \end{align*}
    where we used the fact that $\sign(\Zin)^\top \Zin=\|\Zin\|_1=\Theta(n)$ and $\sign(\cdot)=\Theta(1)$ due to the $\sign$ operator. Combining, we obtain Eq.~\ref{eq.update_A}:
    \begin{align*}
        Z_A^t &= \Theta( (A_{t-1}- \eta g_A^{t})\Zin ) \\
        &= \Theta(Z_A^{t-1} - \eta g_A^{t}\Zin) \\
        &= \max(\Theta(Z_A^{t-1}), \Theta(\eta n))
    \end{align*}
    Similarly, we have the following result for $B_t$:
    \begin{align*}
        B_t &= \Theta(B_{t-1} - \eta g_B^{t}) \\
        &= \Theta(B_{t-1} - \eta \sign(\frac{\partial \Loss_t}{\partial B_{t-1}})) \\
        &= \max(\Theta(B_{t-1}), \Theta(\eta))
    \end{align*}
\end{proof}

We take the next lemma from~\cite{hayou2025optimal}, and we restate it and show its proof to make this proof self-contained.

\begin{lemma}[Stein’s lemma for a sign–Gaussian product] Let $(Z, G)$ be a bi-variate centered Gaussian vector with $\Var(Z) = \Var(G) = 1$ and correlation $\rho = \E[ZG]$. Then
\begin{equation*}
    \E[\sign(Z)G] = \rho\sqrt{\frac{2}{\pi}} 
\end{equation*}
\label{lem:app:stein-sign}
\end{lemma}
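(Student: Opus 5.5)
We prove $\E[\sign(Z)G]=\rho\sqrt{2/\pi}$ by decomposing $G$ into a component along $Z$ and an independent residual. Since $(Z,G)$ is a centered Gaussian vector with unit variances and correlation $\rho$, we can write
\[
G = \rho Z + \sqrt{1-\rho^2}\,W,
\]
where $W\sim\Normal(0,1)$ is independent of $Z$. To justify this, set $W := (G-\rho Z)/\sqrt{1-\rho^2}$ when $|\rho|<1$. Then $W$ is jointly Gaussian with $Z$, and $\E[WZ]=0$ and $\Var(W)=1$, so $W$ is independent of $Z$. In the degenerate case $|\rho|=1$ we have $G=\rho Z$ almost surely, and the decomposition holds with any $W$.

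Substituting and using linearity of expectation gives
\[
\E[\sign(Z)G] = \rho\,\E[\sign(Z)Z] + \sqrt{1-\rho^2}\,\E[\sign(Z)]\,\E[W].
\]
The second term vanishes by independence and $\E[W]=0$. It also vanishes because $\E[\sign(Z)]=0$ by symmetry of $Z$.

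For the first term, $\sign(Z)Z=|Z|$, so it remains to compute the standard half-normal mean:
\[
\E|Z| = 2\int_0^\infty \frac{z}{\sqrt{2\pi}}e^{-z^2/2}\,dz = \frac{2}{\sqrt{2\pi}} = \sqrt{2/\pi}.
\]
Combining the two terms yields $\E[\sign(Z)G]=\rho\sqrt{2/\pi}$.

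As an alternative route, one can apply Stein's lemma $\E[f(Z)G]=\rho\,\E[f'(Z)]$ with $f=\sign$, interpreting $f'=2\delta_0$ in the distributional sense. This gives $2\rho\,\varphi(0)=\rho\sqrt{2/\pi}$, where $\varphi$ is the standard normal density. That argument requires a mollification step to justify differentiating the non-smooth function $\sign$.

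The conditioning argument above avoids that technicality, so the only point needing care is the degenerate case $|\rho|=1$, handled as described.
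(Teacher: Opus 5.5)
Your proof is correct and takes essentially the same route as the paper: write $G=\rho Z+\sqrt{1-\rho^2}\,W$ with $W$ independent of $Z$, observe that the cross term vanishes, and conclude using $\E|Z|=\sqrt{2/\pi}$. Your check that $W$ is independent of $Z$ and your treatment of the degenerate case $|\rho|=1$ are small extra details that the paper leaves implicit.
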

\begin{proof}
    For a pair of standard Gaussians with correlation $\rho$, we have:
    \begin{equation*}
        G = \rho Z + \sqrt{1-\rho^2}G'
    \end{equation*}
    where $G'\sim \Normal(0,1)$ is another standard Gaussian independent of $Z$. Putting it into the expectation:
    \begin{align*}
        \E[\sign(Z)G] &= \E[\sign(Z)(\rho Z+\sqrt{1-\rho^2}G')] \\
        &= \E[\rho\sign(Z)Z] + \sqrt{1-\rho^2}\cdot\E[\sign(Z)G'] \\
        &= \rho\E[\sign(Z)Z] \\
        &= \rho \E\abs{Z}
    \end{align*}
    Since $Z\sim\Normal(0,1)$, $\E\abs{Z}=\sqrt{\frac{2}{\pi}}$. Plugging this in, we obtain:
    \begin{equation*}
        \E[\sign(Z)G] = \rho \sqrt{\frac{2}{\pi}} 
    \end{equation*}
\end{proof}

\begin{corollary}[General sign–Gaussian product]
\label{cor:app:stein-sign-general}
Let $(Z, G)$ be a bi-variate centered Gaussian vector with $\Var(Z) = \sigma_Z^2$ and $\Var(G) = \sigma_G^2$ and correlation $\rho = \frac{\E[ZG]}{\sigma_Z\sigma_G}$. Then
\begin{equation*}
    \E[\sign(Z)G] = \rho\sigma_G \sqrt{\frac{2}{\pi}} 
\end{equation*}
\begin{proof}
    Define the standardized variables as:
    \begin{equation*}
        U := \frac{Z}{\sigma_Z},\quad V:=\frac{G}{\sigma_G}
    \end{equation*}
    Then $(U,V)$ is a centered bi-variate Gaussian with
    \begin{equation*}
        \Var(U)=\Var(V)=1, \quad \rho_{UV}=Corr(U,V)=\E[UV]=\frac{\E[ZG]}{\sigma_Z\sigma_G}=\rho
    \end{equation*}
    Thus, $(U,V)$ satisfies the assumptions of Lemma~\ref{lem:app:stein-sign}. Applying Lemma~\ref{lem:app:stein-sign} gives:
    \begin{equation*}
        \E[\sign(U)V] = \rho\sqrt{\frac{2}{\pi}}
    \end{equation*}
    Note that $\sign(U)=\sign(Z)$ since $\sigma_Z>0$ and $V=\frac{G}{\sigma_G}$. Thus,
    \begin{align*}
        \E[\sign(Z)G] &= \E[\sign(U)\sigma_G V] \\
        &= \sigma_G \E[\sign(U)V] \\
        &= \rho\sigma_G\sqrt{\frac{2}{\pi}}
    \end{align*}
\end{proof}

\end{corollary}

With these tools in place, we can now analyze each contribution $\delta_t^i,\ i\in\{1,2,3\}$ to the feature increment $\Delta Z_B^t$ in turn.

\subsection{Scaling of Per-Step Update Terms}
\label{app:theory:delta-terms}

Recall from Eq.~\eqref{eq:app:delta-decomposition} that the per-step LoRA feature increment decomposes into three contributions.
In this part we derive the asymptotic scaling of each term for a fixed step $t$, expressed in terms of the learning rate $\eta$, LoRA multiplier $\alpha$, network width $n$, rank $r$, and the current scales of $A_{t-1}$ and $B_{t-1}$.

\begin{lemma}[Asymptotic Dynamics of $\delta_t^1$]
\label{lem:app:delta1}
For $t$ fixed, the asymptotic dynamics of $\delta_t^1$ follows the following formula:
\begin{equation}
    \delta_t^1 = \Theta(\eta\alpha n)\cdot \Theta(\sqrt{r +\frac{2}{\pi} \cdot \frac{r(r-1)}{n}}) \cdot \Theta(B_{t-1})
\end{equation}
\end{lemma}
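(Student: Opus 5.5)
We start from the explicit form of $\delta_t^1=\alpha B_{t-1}\Delta Z_A^t$ and compute $\Delta Z_A^t=-\eta g_A^t\Zin$. As in the proof of Lemma~\ref{lem:app:recursive}, $g_A^t=\sign(S^t)\otimes\sign(\Zin)$ with $S^t=\alpha B_{t-1}^\top d\Zout^{t-1}\in\R^r$. Hence
\[
\Delta Z_A^t=-\eta\,\|\Zin\|_1\,\sign(S^t),
\]
and by Assumption~\ref{assump:app:bounded-signals}, $\|\Zin\|_1=\Theta(n)$. This gives
\[
\delta_t^1=-\eta\alpha\|\Zin\|_1\,B_{t-1}\sign(S^t)=\Theta(\eta\alpha n)\cdot B_{t-1}\sign\!\big(B_{t-1}^\top d\Zout^{t-1}\big),
\]
where we used that $\sign(\alpha\,\cdot)=\sign(\cdot)$ for $\alpha>0$. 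The remaining task is to compute the second moment of a single coordinate $v_i:=\sum_{j=1}^r (B_{t-1})_{ij}\,\sign(S^t_j)$, and to show that it equals $\Theta(B_{t-1})^2\cdot\Theta\!\big(r+\tfrac{2}{\pi}\tfrac{r(r-1)}{n}\big)$.

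For the moment computation, we model the entries of $B_{t-1}$ as centered Gaussians with variance $\sigma_B^2=\Theta(B_{t-1})^2$; this is exact under \initB{} at $t=1$. We also treat $d\Zout$ as fixed with $\|d\Zout\|^2=\Theta(n)$. Then $S_j=\sum_k B_{kj}d\Zout_k$ is Gaussian, with variance $\sigma_B^2\|d\Zout\|^2$, and the columns of $B$ are independent. Expanding
\[
\E v_i^2=\sum_j \E\big[B_{ij}^2\big]+\sum_{j\neq j'}\E\big[B_{ij}\sign(S_j)\,B_{ij'}\sign(S_{j'})\big].
\]
The diagonal sum is $r\sigma_B^2$, since $\sign^2=1$. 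For the off-diagonal terms we use independence of columns $j\neq j'$ to factor each expectation into $\big(\E[B_{ij}\sign(S_j)]\big)^2$. We then apply Corollary~\ref{cor:app:stein-sign-general} with $Z=S_j$ and $G=B_{ij}$. Their correlation is
\[
\rho=\frac{d\Zout_i\,\sigma_B^2}{\sigma_B\cdot\sigma_B\|d\Zout\|}=\frac{d\Zout_i}{\|d\Zout\|},
\]
so $\E[B_{ij}\sign(S_j)]=\sqrt{2/\pi}\,\sigma_B\,d\Zout_i/\|d\Zout\|=\Theta(\sigma_B n^{-1/2})$. Summing over the $r(r-1)$ ordered pairs gives $\tfrac{2}{\pi}\sigma_B^2\,\tfrac{r(r-1)}{n}\cdot\Theta(1)$, using $(d\Zout_i)^2n/\|d\Zout\|^2=\Theta(1)$. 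Taking square roots then yields the stated factor $\Theta\big(\sqrt{r+\tfrac{2}{\pi}\tfrac{r(r-1)}{n}}\big)\cdot\Theta(B_{t-1})$.

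\textbf{Main obstacle.} The difficult step is justifying the Gaussian model for $B_{t-1}$ at later steps $t>1$, or under \initA{}, where $B_{t-1}$ is built from sign updates and is correlated with $d\Zout$. We would handle this by decomposing $B_{t-1}$ into its initial random part plus sign-update parts of the form $\eta\,\sign(d\Zout)\otimes\sign(Z_A)$. The rank-one update parts contribute coherently in the same way as the cross term, which only changes constants. The random part is handled exactly as above. Since $t\le T$ is fixed, the number of such pieces is bounded, so the $\Theta$ scaling is preserved. We state this as the same heuristic, second-moment-level reasoning the paper uses in Lemma~\ref{lem:app:recursive}.
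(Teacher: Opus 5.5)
Your core computation is the same as the paper's proof. You use the same substitution $\Delta Z_A^t=-\eta\|\Zin\|_1\sign(S^t)$, the same i.i.d.\ Gaussian model for $B_{t-1}$, the same diagonal/off-diagonal split, column independence, and Corollary~\ref{cor:app:stein-sign-general}. The one variation is how you handle $d\Zout$. You condition on a fixed $d\Zout$ and use delocalization, $(d\Zout_i)^2 n/\|d\Zout\|^2=\Theta(1)$, which is fine under the element-wise reading of Assumption~\ref{assump:app:bounded-signals}. The paper instead models $h=\alpha\, d\Zout$ as an independent i.i.d.\ Gaussian vector and gets $\E[\rho_k^2]=1/n$ exactly by symmetry. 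Either suffices for a $\Theta$ statement.

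The ``main obstacle'' paragraph, however, makes a false claim, and you should drop it. The rank-one sign pieces do not act like the Gaussian cross term. For such a piece, $\sign(S_j)$ is aligned with the column sign with correlation $\Theta(1)$, not $\Theta(n^{-1/2})$. Here is a concrete case under \initA{}, at $t=2$:
\begin{itemize}
\item At step one, $g_A^1=0$ and $B_1=-\eta\,\sign(d\Zout^0)\otimes\sign(Z_A^0)$.
\item Then $B_1^\top d\Zout^1=-\eta\, c\,\sign(Z_A^0)$, where $c=\sign(d\Zout^0)^\top d\Zout^1$ is generically nonzero and of size $\Theta(n)$.
\item Hence $B_1\sign(B_1^\top d\Zout^1)=\pm\eta r\,\sign(d\Zout^0)$, whose coordinates have size $\eta r$.
\end{itemize}
The lemma predicts $\Theta\bigl(\eta\sqrt{r+\tfrac{2}{\pi}\tfrac{r(r-1)}{n}}\bigr)=\Theta(\eta\sqrt r)$ instead. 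The coherent piece is a factor $\sqrt n$ larger than the Gaussian cross term and $\sqrt r$ larger than the claimed total, so the $\Theta$ scaling is not preserved. This also shows the stated formula for $\delta_t^1$ fails for \initA{} at $t=2$ without the modeling assumption. The paper's proof rests on exactly the same simplification: $B$ is treated as an i.i.d.\ Gaussian matrix, and $h$ as independent of $B$. The correct version of your argument states this Gaussian/independence model explicitly as an assumption, as the paper does, rather than claiming it can be removed at the cost of constants.
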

\begin{proof}
    Substituting $\Delta Z_A^t=- \eta g_A^{t}\Zin$ into $\delta_t^1$, we have:
    \begin{align*}
        \delta_t^1 &= \alpha B_{t-1} \Delta Z_A^t \\
        &= \alpha B_{t-1} (-\eta \cdot \sign(\frac{\partial \Loss_t}{\partial A_{t-1}})\Zin) \\
        &= (-\eta\alpha) \cdot (\sign(\Zin)^\top \Zin)\cdot B_{t-1} \cdot \sign(\alpha B_{t-1}^\top d\Zout^{t-1})
    \end{align*}
    {Here, we analyze $B_{t-1} \cdot \sign(\alpha B_{t-1}^\top d\Zout^{t-1})$. For the ease of analysis, we drop the subscript in $B_{t-1}$ and instead use $B$ with $B=\Theta(\beta)$. In addition, we use $h=\alpha d\Zout^{t-1}\in\R^{n}$ with $\Theta(h)=\Theta(\alpha)$. Let $X=B\cdot \sign(B^Th)$, we aim to analyze the asymptotic dynamic of $X$ by taking an arbitrary element $X_k$ and analyze $\E[X_k^2]$. To simplify the analysis, we treat $B\in\mathbb{R}^{n\times r}$ as an i.i.d. Gaussian matrix with entries $B_{ij}\stackrel{\text{i.i.d.}}{\sim}\mathcal{N}(0,\beta^{2})$, and we assume $h\in\mathbb{R}^{n}$ is independent of $B$ with entries $h_i\stackrel{\text{i.i.d.}}{\sim}\mathcal{N}(0,\alpha^{2})$.}

    \noindent Fixing an index $k$, we have:
    \begin{equation*}
        X_k = \sum_{a=1}^r B_{ka}\cdot\sign(u_a)
    \end{equation*}
    where $u_{a}=\sum_{j=1}^{n} B_{ja}h_j$. Then we have:
    \begin{align*}
        \E[X_k^2] &= \E[\sum_{a=1}^r B_{ka}^2\sign(u_a)^2 + 2\sum_{a<a'} B_{ka}\sign(u_a)B_{ka'}\sign(u_{a'})] \\
        &= r\beta^2 + 2\sum_{a<a'}\E[B_{ka}\sign(u_a)B_{ka'}\sign(u_{a'})]
    \end{align*}
    Given $h$, the pair $(u_a, B_{ka})$ is Gaussian with $\E[u_a \mid h]=0$ and $\E[B_{ka} \mid h]=0$, and the correlation is
    \begin{align*}
        \rho_k &= Corr(B_{ka}, u_a \mid h) \\
        &= \frac{\E[B_{ka}u_a \mid h]}{\sqrt{\Var[B_{ka}\mid h]\Var[u_a \mid h]}} \\
        &= \frac{h_k\beta^2}{\sqrt{\beta^2\cdot (\sum_{j=1}^n h_j^2\beta^2)}} \\
        &= \frac{h_k}{\sqrt{\sum_{j=1}^{n} h_j^2}}
    \end{align*}
    Using Corollary~\ref{cor:app:stein-sign-general}, we have 
    \begin{align*}
        \mu &:= \E[B_{ka}\sign(u_a)\mid h] \\
        &= \rho_k \beta \sqrt{\frac{2}{\pi}}
    \end{align*}
    By the same argument, $\E[B_{ka'}\sign(u_{a'}) \mid h] = \rho_k \beta \sqrt{\frac{2}{\pi}}$. Moreover, conditionally on $h$, the random variables $B_{ka}\sign(u_a)$ and $B_{ka'}\sign(u_{a'})$ are independent for $a \neq a'$, since they depend on distinct columns of $B$. This yields:
    \begin{align*}
        \E[B_{ka}\sign(u_a)B_{ka'}\sign(u_{a'})|h] = \rho_k^2 \beta^2 \frac{2}{\pi} 
    \end{align*}
    Therefore, 
    \begin{align*}
        \E[B_{ka}\sign(u_a)B_{ka'}\sign(u_{a'})] = \beta^2\frac{2}{\pi} \E[\rho_k^2]
    \end{align*}
    Denote $h_k = \alpha q_k$ for all $k$ where $q_k\iid\Normal(0,1)$, then we have:
    \begin{align*}
        \E[\rho_k^2]= \E[\frac{q_k^2}{{\sum_{j=1}^{n} q_j^2}}]
    \end{align*}
    Let $S=\sum_{j=1}^{n} q_j^2$. Given that $q_j$ are independent, standard normal random variables, we get $S\sim\chi^2(n)$. Furthermore, by symmetry, we have:
    \begin{equation*}
        \sum_{k=1}^n \E[\frac{q_k^2}{S}] = \E[\frac{\sum_{k=1}^n q_k^2}{S}] = 1
    \end{equation*}
    Therefore, $\E[\rho_k^2]=\frac{1}{n}$, and we obtain:
    \begin{equation*}
        \E[B_{ka}\sign(u_a)B_{ka'}\sign(u_{a'})] = \beta^2\cdot\frac{2}{\pi}\cdot \frac{1}{n}
    \end{equation*}
    Plugging this term:
    \begin{align*}
        \E[X_k^2] &= r\beta^2 + 2\sum_{a<a'}\E[B_{ka}\sign(u_a)B_{ka'}\sign(u_{a'})] \\
        &= r\beta^2 + 2 \cdot \frac{r(r-1)}{2} \cdot \beta^2\cdot\frac{2}{\pi}\cdot\frac{1}{n} \\
        &= \beta^2(r + \frac{2}{\pi} \cdot \frac{r(r-1)}{n})
    \end{align*}
    Hence, $X_k=\sqrt{\E[X_k^2]}=\Theta(\beta)\cdot\Theta(\sqrt{r +\frac{2}{\pi} \cdot \frac{r(r-1)}{n})}$. Combining things together, we get 
    \begin{align*}
        \delta_t^1 &= \Theta(-\eta\alpha) \cdot \Theta(n) \cdot \Theta(B_{t-1})\cdot \Theta(\sqrt{r +\frac{2}{\pi} \cdot \frac{r(r-1)}{n}}) \\
        &= \Theta(\eta\alpha n)\cdot \Theta(B_{t-1})\cdot \Theta(\sqrt{r +\frac{2}{\pi} \cdot \frac{r(r-1)}{n}})
    \end{align*}
\end{proof}

\begin{remark}[Simplified Dynamics for $\delta_t^1$]
\label{rem:app:delta1-simplified}
When $r \leq n$, Lemma~\ref{lem:app:delta1} reduces to
\begin{equation}
    \delta_t^1 = \Theta(\eta \alpha n \sqrt{r}) \cdot \Theta(B_{t-1}).
\end{equation}
\end{remark}

\begin{lemma}[Asymptotic Dynamics of $\delta_t^2$]
\label{lem:app:delta2}
For $t$ fixed, the asymptotic dynamics of $\delta_t^2$ follows the following formula:
\begin{equation}
    \delta_t^2 = \Theta(\eta \alpha r) \cdot \Theta(Z_A^{t-1})
\end{equation}
\end{lemma}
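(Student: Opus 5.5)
We substitute the SignSGD update for $B$ into $\delta_t^2=\alpha\,\Delta B_t Z_A^{t-1}$. This gives $\Delta B_t=-\eta g_B^t$ with $g_B^t=\sign\!\big(\alpha\, d\Zout^{t-1}\otimes Z_A^{t-1}\big)$. As in the proof of Lemma~\ref{lem:app:recursive}, the sign of a rank-one outer product factorizes:
\[
g_B^t=\sign(\alpha)\,\sign(d\Zout^{t-1})\otimes\sign(Z_A^{t-1}).
\]
Hence
\[
\delta_t^2=-\eta\alpha\,\sign(\alpha)\,\sign(d\Zout^{t-1})\,\bigl(\sign(Z_A^{t-1})^\top Z_A^{t-1}\bigr)
=-\eta\,|\alpha|\,\|Z_A^{t-1}\|_1\,\sign(d\Zout^{t-1}).
\]
This reduces the problem to estimating a single scalar, the $\ell_1$ norm of the $r$-dimensional intermediate feature, since each entry of $\sign(d\Zout^{t-1})$ is $\Theta(1)$ (it lies in $\{\pm1\}$, and $d\Zout=\Theta(1)$ under Assumption~\ref{assump:app:bounded-signals}, so it is nonzero generically).

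It remains to show $\|Z_A^{t-1}\|_1=\Theta(r)\cdot\Theta(Z_A^{t-1})$, where $\Theta(Z_A^{t-1})$ denotes the entrywise scale in the second-moment sense of Definition~\ref{def:app:asymptotic_notion}. The upper bound is immediate: by Cauchy–Schwarz,
\[
\E\|Z_A^{t-1}\|_1^2\le r\,\E\|Z_A^{t-1}\|_2^2=r^2\,\Theta(Z_A^{t-1})^2 .
\]
For the lower bound, I would argue that the entries of $Z_A^{t-1}$ are identically distributed with scale $\zeta:=\Theta(Z_A^{t-1})$ and that their absolute values do not concentrate at zero. Then $\E|(Z_A^{t-1})_a|=\Theta(\zeta)$, and summing over the $r$ coordinates gives $\E\|Z_A^{t-1}\|_1=\Theta(r\zeta)$. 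This combines with the second-moment upper bound to give $\Theta(r\zeta)$ in the sense of the paper's notation.

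To justify the lower bound, I would use the explicit form of $Z_A^{t-1}$ from the recursion in Lemma~\ref{lem:app:recursive}:
\[
Z_A^{t-1}=A_0\Zin-\eta\,\|\Zin\|_1\sum_{s<t}\sign(S^s).
\]
Under \initA{}, the first term has i.i.d.\ Gaussian entries of variance $\|\Zin\|^2/n=\Theta(1)$. The increments are $\pm\eta\|\Zin\|_1$ per coordinate, i.e.\ $\Theta(\eta n)$ in magnitude. Under \initB{}, $A_0=0$ and $Z_A^{t-1}$ is a sum of such sign vectors. In either case each coordinate is a Gaussian plus a bounded-magnitude signed term, or a nonzero signed term, so $\E|(Z_A^{t-1})_a|$ is comparable to its root-mean-square. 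I would make this comparison explicit by a simple case split on which term dominates, matching the $\max$ in \eqref{eq.update_A}.

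The main obstacle is this lower bound: it requires ruling out cancellation among the sign increments from different steps, and between them and the Gaussian initialization. I would handle it as the paper does elsewhere, treating the scales generically (no exact cancellation at leading order), which is consistent with how $\max(\cdot,\cdot)$ is used in Lemma~\ref{lem:app:recursive}. Combining the pieces gives
\[
\delta_t^2=\Theta(\eta\alpha)\cdot\Theta(r)\cdot\Theta(Z_A^{t-1})=\Theta(\eta\alpha r)\cdot\Theta(Z_A^{t-1}).
\]
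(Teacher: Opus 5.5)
Your proposal is correct and follows the paper's own route: substitute the SignSGD update for $B$, factor the sign of the rank-one outer product, and reduce $\delta_t^2$ to $-\eta\alpha\,\|Z_A^{t-1}\|_1\,\sign(\alpha\, d\Zout^{t-1})$. The paper justifies $\|Z_A^{t-1}\|_1=\Theta(r)\cdot\Theta(Z_A^{t-1})$ in one line (``summing over $r$ positive terms''), and your Cauchy--Schwarz upper bound plus the anti-concentration/non-cancellation argument for the lower bound is a more careful version of that same step, resting on the same genericity assumption.
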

\begin{proof}
    Substituting $\Delta B_t = -\eta g_B^{t}$ and $g_B^{t}=\sign(\frac{\partial \Loss_t}{\partial B_{t-1}})$ into $\delta_t^2$, we have:
    \begin{align*}
        \delta_t^2 &= \alpha \Delta B_t Z_A^{t-1} \\
        &= \alpha (-\eta\cdot \sign(\frac{\partial \Loss_t}{\partial B_{t-1}}))Z_A^{t-1} \\
        &= (-\eta \alpha) \sign( \alpha d\Zout^{t-1} \otimes Z_A^{t-1}) Z_A^{t-1}\\
        &= (-\eta \alpha) (\sign(\alpha d\Zout^{t-1}) \otimes \sign(Z_A^{t-1})) Z_A^{t-1} \\
        &= (-\eta \alpha) (\sign(Z_A^{t-1})^\top Z_A^{t-1}) \sign(\alpha d\Zout^{t-1})
    \end{align*}
    Therefore, we can get the asymptotic behavior of $\delta_t^2$ as:
    \begin{align*}
        \delta_t^2 &= \Theta(-\eta\alpha) \cdot \Theta(r) \cdot \Theta(Z_A^{t-1}) \cdot \Theta(1) \\
        &= \Theta(\eta \alpha r) \cdot \Theta(Z_A^{t-1})
    \end{align*}
    where the extra $r$ comes from the fact that we are summing over $r$ positive terms in $\sign(Z_A^{t-1})^\top Z_A^{t-1}$.
\end{proof}

\begin{lemma}[Asymptotic Dynamics of $\delta_t^3$]
\label{lem:app:delta3}
For $t$ fixed, the asymptotic dynamics of $\delta_t^3$ follows the following formula:
\begin{equation}
    \delta_t^3 = \Theta(\alpha \eta^2 n r^{1/2})
\end{equation}
\end{lemma}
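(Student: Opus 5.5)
We substitute the two SignSGD increments into $\delta_t^3=\alpha\,\Delta B_t\,\Delta Z_A^t$, as in the proofs of Lemmas~\ref{lem:app:delta1} and~\ref{lem:app:delta2}, and exploit the rank-one sign structure of both gradients. From the proof of Lemma~\ref{lem:app:recursive},
\[
\Delta Z_A^t=-\eta\,\|\Zin\|_1\,\sign(S^t),\qquad S^t=\alpha B_{t-1}^\top d\Zout^{t-1}\in\R^r .
\]
Similarly, from the proof of Lemma~\ref{lem:app:delta2},
\[
\Delta B_t=-\eta\,\sign(\alpha\, d\Zout^{t-1})\otimes\sign(Z_A^{t-1}).
\]
Multiplying these gives
\[
\delta_t^3=\alpha\eta^2\,\|\Zin\|_1\,\bigl(\sign(Z_A^{t-1})^\top\sign(S^t)\bigr)\,\sign(\alpha\,d\Zout^{t-1}).
\]
The last factor is a vector with entries in $\{\pm1\}$, so it is $\Theta(1)$ entrywise. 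By Assumption~\ref{assump:app:bounded-signals}, $\|\Zin\|_1=\Theta(n)$. The whole statement therefore reduces to showing that the scalar inner product $\sign(Z_A^{t-1})^\top\sign(S^t)$ of two $\{\pm1\}^r$ vectors is $\Theta(r^{1/2})$ in the second-moment sense.

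For that scalar we follow the same heuristic independence model used in Lemma~\ref{lem:app:delta1}. We treat the entries of $Z_A^{t-1}$ and of $S^t$ as centered and, coordinate-wise, independent or only weakly correlated, since $Z_A^{t-1}$ is driven by $A_{t-1}$ and $S^t$ by $B_{t-1}$. The sum $\sum_{a=1}^r \sign((Z_A^{t-1})_a)\sign(S^t_a)$ then has diagonal second-moment contribution exactly $r$, because each squared term equals $1$. We then bound the off-diagonal cross terms $\E[\sign(Z_a)\sign(S_a)\sign(Z_{a'})\sign(S_{a'})]$ using Lemma~\ref{lem:app:stein-sign} and Corollary~\ref{cor:app:stein-sign-general}: conditionally, distinct coordinates depend on distinct rows of $A$ or distinct columns of $B$, so each cross term factorizes into a product of two correlation-type quantities. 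As in the computation $\E[\rho_k^2]=1/n$, this makes the cross terms $\BigO(1/n)$, so their total is $\BigO(r^2/n)=\BigO(r)$ for $r\le n$. This yields a second moment of $\Theta(r)$, hence a scalar of size $\Theta(r^{1/2})$, and combining gives $\delta_t^3=\Theta(\alpha\eta^2 n r^{1/2})$.

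The main obstacle is justifying the weak correlation between $\sign(Z_A^{t-1})$ and $\sign(S^t)$ uniformly over the two initializations. At $t=1$ one factor is identically zero: under \initA{} $B_0=0$ gives $S^1=0$, and under \initB{} $A_0=0$ gives $Z_A^0=0$. With the convention $\sign(0)=0$, this makes $\delta_1^3=0$. I would therefore state the result for steps $t\ge2$ and handle $t=1$ separately as a degenerate case. For later steps, the updates themselves are rank-one sign vectors that may correlate with the other factor, so I would argue that these correlations enter only through normalized quantities like $\rho_k$ and hence contribute lower order. This is the step I would scrutinize most carefully.
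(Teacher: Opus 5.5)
Your algebraic reduction is the same as the paper's proof. You substitute both SignSGD increments, use the rank-one sign structure and $\sign(\Zin)^\top\Zin=\Theta(n)$, and reduce everything to the scalar $\sign(Z_A^{t-1})^\top\sign(S^t)$. Your observation that $\delta_1^3=0$ under both initializations is correct, and it is a degenerate case the paper's statement overlooks.

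At the key step, however, the paper does not derive anything. It simply \emph{assumes} that the entries of $\sign(Z_A^{t-1})$ and $\sign(S^t)$ are independent. In that idealized model the summands are i.i.d.\ Rademacher, so the cross terms vanish exactly and the second moment is exactly $r$. No Stein-type computation is involved.

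The gap is your attempt to \emph{justify} weak correlation for $t\ge2$ from the dynamics; this step would fail. The analogy with $\E[\rho_k^2]=1/n$ breaks down. There the cross term was small because a single entry $B_{ka}$ carries only a $\Theta(n^{-1/2})$ share of $u_a$. Here the sign update of each factor is built directly from the other factor's full sign vector, so the cross terms are $\Theta(1)$, not $\BigO(1/n)$.

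Concretely, take \initA{} with $\alpha>0$. Since $B_0=0$, we have $g_A^1=0$, so $Z_A^1=Z_A^0$ and $B_1=-\eta\,\sign(d\Zout^0)\sign(Z_A^0)^\top$. Hence
\[
S^2=\alpha B_1^\top d\Zout^1=-\alpha\eta\,c\,\sign(Z_A^0),\qquad c:=\sign(d\Zout^0)^\top d\Zout^1 ,
\]
and $\sign(Z_A^1)^\top\sign(S^2)=-\sign(c)\,r$. This is perfect alignment, of size $r$ rather than $r^{1/2}$.

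Under \initB{}, $B_1=B_0$ and $Z_A^1=-\eta\|\Zin\|_1\sign(S^1)$. The scalar at $t=2$ is therefore $-\sum_a\sign(S^1_a)\sign(S^2_a)$ with $S^s=\alpha B_0^\top d\Zout^{s-1}$. In the Gaussian heuristic its mean is $-\tfrac{2}{\pi}\,r\arcsin\rho$, where $\rho$ is the cosine between $d\Zout^0$ and $d\Zout^1$. This is again of order $r$ unless consecutive gradients become asymptotically orthogonal.

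So the $\Theta(r^{1/2})$ rate cannot be extracted from the actual iterates; it follows only from the independence hypothesis. To reproduce the paper's argument, state that hypothesis explicitly as a modeling assumption (and exclude $t=1$) rather than claiming it can be derived.
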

\begin{proof}
    Substituting $\Delta B_t = -\eta g_B^{t}$  and $\Delta Z_A^t=- \eta g_A^{t}\Zin$ into $\delta_t^3$, we have:
    \begin{align*}
        \delta_t^3 &= \alpha \Delta B_t \Delta Z_A^t \\
        &= \alpha \cdot (-\eta g_B^{t}) \cdot (-\eta g_A^{t}\Zin) \\
        &= (\alpha \eta^2) \cdot \sign(\alpha d\Zout^{t-1} \otimes Z_A^{t-1}) \cdot \sign(\alpha (B_{t-1}^\top d\Zout^{t-1}) \otimes \Zin) \cdot \Zin \\
        &= (\alpha \eta^2) \cdot (\sign(\Zin)^\top \Zin)\cdot \sign(\alpha d\Zout^{t-1} \otimes Z_A^{t-1}) \cdot \sign(\alpha B_{t-1}^\top d\Zout^{t-1}) \\
        &= (\alpha \eta^2) \cdot (\sign(\Zin)^\top \Zin) \cdot (\sign(Z_A^{t-1})^\top\sign(\alpha B_{t-1}^\top d\Zout^{t-1})) \cdot \sign(\alpha d\Zout^{t-1})
    \end{align*}
    {Here, we assume that elements in $\sign(Z_A^{t-1})$ and $\sign(\alpha B_{t-1}^\top d\Zout^{t-1})$ are independent. Under this assumption, we have: 
    \begin{equation*}
        \sign(Z_A^{t-1})^\top\sign(\alpha B_{t-1}^\top d\Zout^{t-1})=\Theta(r^{1/2})
    \end{equation*}} 
    Therefore, we have:
        \begin{align*}
        \delta_t^3 &= \Theta(\alpha \eta^2)\cdot \Theta(n) \cdot \Theta(r^{1/2})\cdot \Theta(1) \\
        &=\Theta(\alpha \eta^2 n r^{1/2})
    \end{align*}
\end{proof}

With the foregoing lemmas established, the proof of the main theorems becomes straightforward:
for each initialization scheme, we (i) identify the induced initial scales of $(Z_A^0, B_0)$,
(ii) propagate these scales through Lemma~\ref{lem:app:recursive},
(iii) plug into Lemma~\ref{lem:app:delta1}, Lemma~\ref{lem:app:delta2}, and Lemma~\ref{lem:app:delta3},
and (iv) derive the learning rate scaling that yields stable, non-vanishing feature updates $\Delta Z_B^t$, i.e., $\Delta Z_B^t=\Theta(1)$.

\subsection{\initA{}: Theorems and Corollaries}
\label{app:theory:initA}

This section provides the derivations for the \initA{} regime in which $A_0$ is randomly initialized and $B_0=0$.
We follow the same structure as the main text: first characterize the scale of $\Delta Z_B^t$, then impose the stable feature learning condition to obtain the learning rate scaling.

\begin{theorem}[\initA{}]
\label{thm:app:initA}
Under Assumptions~\ref{assump:app:bounded-signals} and~\ref{assump:app:optimizer}, for any fixed $t\le T$ with \initA{},
\begin{equation}
\label{app:eq:initA-main}
\Delta Z_B^t
\;=\;
\max\{\Theta(\eta \alpha r),\ \Theta(\eta^2 \alpha r n)\}.
\end{equation}
\end{theorem}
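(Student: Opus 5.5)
The plan follows the four-step template stated just before this subsection: fix the initial scales of $(Z_A^0,B_0)$ under \initA{}, propagate them with Lemma~\ref{lem:app:recursive}, substitute into Lemmas~\ref{lem:app:delta1}, \ref{lem:app:delta2} and~\ref{lem:app:delta3}, and then take the maximum over the three terms.

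\emph{Initial scales.} Under \initA{}, $A_0$ has i.i.d.\ $\Normal(0,1/n)$ entries and, by Assumption~\ref{assump:app:bounded-signals}, $\Zin=\Theta(1)$ entrywise. Each coordinate of $Z_A^0=A_0\Zin$ is therefore a centered Gaussian with variance $\|\Zin\|_2^2/n=\Theta(1)$, so $Z_A^0=\Theta(1)$. Also $B_0=0$.

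\emph{Propagation.} By Lemma~\ref{lem:app:recursive} with $t$ fixed,
\[
Z_A^{t-1}=\max\{\Theta(1),\Theta(\eta n)\},\qquad B_{t-1}=\Theta(\eta)\ \text{for } t\ge 2 .
\]
At the first step $B_0=0$, so $\delta_1^1=0$ there. This does not affect the maximum.

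\emph{Substitution.} The three terms become:
\begin{itemize}
\item By Remark~\ref{rem:app:delta1-simplified} (valid since $r\le n$), $\delta_t^1=\Theta(\eta\alpha n\sqrt r)\cdot\Theta(\eta)=\Theta(\eta^2\alpha n r^{1/2})$.
\item By Lemma~\ref{lem:app:delta2}, $\delta_t^2=\Theta(\eta\alpha r)\cdot\max\{\Theta(1),\Theta(\eta n)\}=\max\{\Theta(\eta\alpha r),\Theta(\eta^2\alpha r n)\}$.
\item By Lemma~\ref{lem:app:delta3}, $\delta_t^3=\Theta(\alpha\eta^2 n r^{1/2})$.
\end{itemize}
Both $\delta_t^1$ and $\delta_t^3$ are $\BigO(\eta^2\alpha n r)$, because $r^{1/2}\le r$. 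They are therefore dominated by the second branch of $\delta_t^2$.

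\emph{Combining.} Summing the three terms gives $\Delta Z_B^t=\max\{\Theta(\eta\alpha r),\Theta(\eta^2\alpha rn)\}$, which is \eqref{app:eq:initA-main}.

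\emph{Main obstacle.} The upper bound is immediate from the triangle inequality. The lower bound $\Theta$ needs an argument that the dominant term is not cancelled by the others. I would handle this as follows. By the computation in Lemma~\ref{lem:app:delta2}, $\delta_t^2$ points along $\sign(d\Zout^{t-1})$ with a nonnegative coefficient $\|Z_A^{t-1}\|_1$. The other two terms are of strictly lower order in $r$ (a factor $r^{-1/2}$ smaller). Hence, in the joint limit $r\to\infty$, they cannot cancel the leading contribution, and the second moment of $\Delta Z_B^t$ is of the same order as that of $\delta_t^2$.

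A secondary point is that $\|Z_A^{t-1}\|_1=\Theta(r)\cdot\Theta(Z_A^{t-1})$. This holds because the coordinates of $Z_A^{t-1}$ share a common scale: they are Gaussian at initialization, and each update adds a vector whose entries all have magnitude $\eta\|\Zin\|_1$. The simultaneous presence of both the $\Theta(1)$ and $\Theta(\eta n)$ components of $Z_A^{t-1}$ is also covered by the $\max$, with the same no-cancellation caveat already built into Lemma~\ref{lem:app:recursive}.
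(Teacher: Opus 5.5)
Your route is exactly the paper's (initial scales, Lemma~\ref{lem:app:recursive}, the three $\delta$-lemmas, then a max). The one place where you go beyond it, the first step, is where the argument breaks.

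$B_0=0$ does not only kill $\delta_1^1$. It kills the entire $A$-gradient, $\partial\Loss_1/\partial A_0=\alpha(B_0^\top d\Zout^0)\otimes\Zin=0$. Since $\sign(0)=0$ in Assumption~\ref{assump:app:optimizer}, this gives $g_A^1=0$, $A_1=A_0$ and $Z_A^1=Z_A^0$. Hence $\delta_1^3=0$ too, and
\[
\Delta Z_B^1=\delta_1^2=-\eta\alpha\,\|Z_A^0\|_1\,\sign(\alpha\, d\Zout^0)=\Theta(\eta\alpha r),
\]
with no $\eta^2\alpha rn$ component. Whenever $\eta n\to\infty$, the claimed right-hand side is $\Theta(\eta^2\alpha rn)\gg\Theta(\eta\alpha r)$. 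This is the generic case under Corollary~\ref{cor:app:initA-scaling}, where $\eta n=(n/r^{1-\gamma})^{1/2}$. So the statement fails at $t=1$, and your remark that this \emph{does not affect the maximum} is incorrect.

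Your propagation line is also wrong at $t=2$. There $Z_A^1=\Theta(1)$ rather than $\max\{\Theta(1),\Theta(\eta n)\}$, so $\delta_2^2=\Theta(\eta\alpha r)$. Lemmas~\ref{lem:app:delta1} and~\ref{lem:app:delta3} then only give $\Delta Z_B^2=\max\{\Theta(\eta\alpha r),\Theta(\eta^2\alpha n r^{1/2})\}$. Lemma~\ref{lem:app:recursive} is only valid once $B_{t-1}\neq 0$, because its proof uses $\sign(\alpha B_{t-1}^\top d\Zout^{t-1})=\Theta(1)$. The correct propagation is therefore $Z_A^0=Z_A^1=\Theta(1)$, $Z_A^t=\max\{\Theta(1),\Theta(\eta n)\}$ for $t\ge 2$, and $B_t=\Theta(\eta)$ for $t\ge1$. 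As written, your argument proves the displayed formula only for $3\le t\le T$.

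The paper's proof has the same oversight: it asserts $Z_A^t=\max(\Theta(1),\Theta(\eta n))$ and $B_t=\Theta(\eta)$ for all $t$. The repair is to restrict the range of $t$ or treat $t=1,2$ separately.

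A related caution about your no-cancellation paragraph. The factor $r^{-1/2}$ you rely on comes from Lemmas~\ref{lem:app:delta1} and~\ref{lem:app:delta3}. These model $B_{t-1}$ as an i.i.d.\ Gaussian matrix and treat the two sign vectors in $\delta_t^3$ as independent. Under \initA{}, however, $B_{t-1}$ is a sum of $t-1$ rank-one sign matrices. For instance $B_1=-\eta\,\sign(\alpha\, d\Zout^0)\sign(Z_A^0)^\top$, so $\sign(\alpha B_1^\top d\Zout^1)=\pm\sign(Z_A^0)$. A direct computation then gives $\delta_2^1$ and $\delta_2^3$ of order $\eta^2\alpha nr$, the same order as the leading term. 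Incidentally, this suggests the formula does hold at $t=2$, but not for the reason the lemmas give. Your lower bound is therefore only as good as those heuristics, not an independent justification.
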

\begin{proof}
    With \initA{}, we have $B_0=0$ and $Z_A^0=A_0\Zin=\Theta(1)$. Thus, for all $t$, we have:
    \begin{align*}
        Z_A^t &= \max(\Theta(1), \Theta(\eta n)) \\
        B_t &= \Theta(\eta)
    \end{align*}
    Thus, we have:
    \begin{align*}
        \delta_t^1 &= \Theta(\eta^2\alpha n)\cdot \Theta(\sqrt{r +\frac{2}{\pi} \cdot \frac{r(r-1)}{n}}) \\
        &= \Theta(\eta^2 \alpha n r^{1/2})\\
        \delta_t^2 &= \Theta(\eta \alpha r) \cdot \max(\Theta(1), \Theta(\eta n)) \\
        &= \max(\Theta(\eta \alpha r), \Theta(\eta^2 \alpha r n)) \\
        \delta_t^3 &= \Theta(\alpha \eta^2 n r^{1/2})
    \end{align*}
    Therefore,
    \begin{align*}
        \Delta Z_B^t &= \max(\Theta(\delta_t^1), \Theta(\delta_t^2), \Theta(\delta_t^3)) \\
        &= \max(\Theta(\eta^2 \alpha r^{1/2}n), \Theta(\eta \alpha r), \Theta(\eta^2 \alpha r n)) \\
        &= \max(\Theta(\eta \alpha r), \Theta(\eta^2 \alpha r n))
    \end{align*}
\end{proof}

Imposing $\Delta Z_B^t=\Theta(1)$ and solving for the learning rate scaling in terms of $(n,r)$ and $\alpha=r^{-\gamma}$ with $\gamma\in[0,1]$ yields the following unified scaling laws for \initA{}:

\begin{corollary}[Unified scaling rules for \initA{}]
\label{cor:app:initA-scaling}
Let $\alpha = r^{-\gamma}$ for $\gamma\in[0,1]$. Under \initA{}, achieving stable feature learning
(i.e., $\Delta Z_B^t=\Theta(1)$ for fixed $t\le T$) requires
\[
\eta = \Theta(n^{-1/2} r^{-{(1-\gamma)}/{2}}).
\]
With this choice,
\begin{align*}
\delta_t^1&=\Theta(r^{-1/2}),
\\
\delta_t^2&=\Theta(1),
\\
\delta_t^3&=\Theta(r^{-1/2}),
\end{align*}

and the intermediate feature scales as
\[
\Theta(Z_A^t)=\Theta(n^{1/2}\, r^{-{(1-\gamma)}/{2}}).
\]
In particular:
\begin{center}
\renewcommand{\arraystretch}{1.3}
\begin{tabular}{c|c|c|c}
    $\gamma$ & $\alpha$ & $\Theta(\eta)$ & $\Theta(Z_A^t)$ \\
    \hline
    $0$ & $1$ & $n^{-{1}/{2}} r^{-{1}/{2}}$ & $n^{{1}/{2}} r^{-{1}/{2}}$ \\
    $\frac{1}{2}$ & $r^{-{1}/{2}}$ & $n^{-{1}/{2}} r^{-{1}/{4}}$ & $n^{{1}/{2}} r^{-{1}/{4}}$ \\
    $1$ & $r^{-1}$ & $n^{-{1}/{2}}$ & $n^{{1}/{2}}$
\end{tabular}
\end{center}
\end{corollary}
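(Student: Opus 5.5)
We start from Theorem~\ref{thm:app:initA}, which gives $\Delta Z_B^t=\max\{\Theta(\eta\alpha r),\Theta(\eta^2\alpha rn)\}$, and substitute $\alpha=r^{-\gamma}$. This gives
\[
\Delta Z_B^t=\max\{\Theta(\eta r^{1-\gamma}),\ \Theta(\eta^2 n r^{1-\gamma})\}.
\]
We split into two cases according to which term dominates; the switch happens at $\eta n=1$.

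\emph{Case $\eta\lesssim n^{-1}$.} The linear term dominates. Setting $\eta r^{1-\gamma}=\Theta(1)$ forces $\eta=\Theta(r^{-(1-\gamma)})$. This is compatible with $\eta\lesssim n^{-1}$ only if $r^{1-\gamma}\gtrsim n$. With $r\le n$ and $\gamma\in[0,1]$, that can hold only in the boundary regime $\gamma=0$, $r=\Theta(n)$. There $n^{-1}=\Theta(n^{-1/2}r^{-1/2})$, so the case merges with the next one. In the generic joint limit this case is therefore inconsistent.

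\emph{Case $\eta\gtrsim n^{-1}$.} The quadratic term dominates, and $\eta^2 n r^{1-\gamma}=\Theta(1)$ gives $\eta=\Theta(n^{-1/2}r^{-(1-\gamma)/2})$. Consistency requires $\eta n=n^{1/2}r^{-(1-\gamma)/2}\gtrsim1$, which holds because $r^{1-\gamma}\le r\le n$. This case gives the stated rule.

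With $\eta$ fixed, we recover the term-wise scales by plugging into the expressions from the proof of Theorem~\ref{thm:app:initA}. These expressions come from Lemma~\ref{lem:app:recursive}, Remark~\ref{rem:app:delta1-simplified}, and Lemmas~\ref{lem:app:delta2}--\ref{lem:app:delta3}, and read
\[
Z_A^{t}=\max(\Theta(1),\Theta(\eta n)),\qquad
\delta_t^1,\delta_t^3=\Theta(\eta^2\alpha n r^{1/2}),\qquad
\delta_t^2=\Theta(\eta\alpha r)\,Z_A^{t-1}.
\]
Since $\eta n=n^{1/2}r^{-(1-\gamma)/2}\gtrsim1$, we get $Z_A^t=\Theta(n^{1/2}r^{-(1-\gamma)/2})$. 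Next, $\eta^2\alpha n=\Theta(r^{-(1-\gamma)}r^{-\gamma})=\Theta(r^{-1})$, so $\delta_t^1=\delta_t^3=\Theta(r^{-1/2})$. Finally, $\delta_t^2=\Theta(\eta^2\alpha nr)=\Theta(1)$.

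The table follows by evaluating at $\gamma\in\{0,1/2,1\}$.

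The main obstacle is rigor at $t=1$, where $Z_A^0=\Theta(1)$ rather than $\Theta(\eta n)$. At this step $\delta_1^2=\Theta(\eta\alpha r)=\Theta(n^{-1/2}r^{(1-\gamma)/2})$, which vanishes unless $r^{1-\gamma}\sim n$. The claim ``$\Theta(1)$ for all $t\le T$'' should therefore be read for $t\ge2$, or in the max-over-steps sense of Theorem~\ref{thm:app:initA}. I would handle this by noting that Theorem~\ref{thm:app:initA} already absorbs the recursion through $Z_A^{t-1}$, and by stating the $t\ge2$ regime explicitly.

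I would also check necessity. Any other $\eta$ scaling makes $\Delta Z_B^t$ either vanish or explode, because both terms are monotone in $\eta$. This gives the word ``requires'' in the statement.
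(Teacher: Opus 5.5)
Your proposal is correct and follows the paper's proof essentially step for step. You substitute $\alpha=r^{-\gamma}$ into Theorem~\ref{thm:app:initA} and use $r\le n$ to see that the quadratic term $\eta^2 n r^{1-\gamma}$ sets the scale; your case split on the size of $\eta n$ is just an explicit justification of the paper's $\eta=\min(\cdot,\cdot)$ step. You then plug $B_{t-1}=\Theta(\eta)$ and $Z_A^{t-1}=\Theta(\eta n)$ into the three $\delta$-lemmas, as the paper does.

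Your $t=1$ caveat is a legitimate point the paper's proof silently skips, but restricting to $t\ge 2$ does not fully fix it. Since $B_0=0$, we have $\partial\Loss_1/\partial A_0=0$ and hence $g_A^1=\sign(0)=0$, so $A$ is not updated at step $1$. Therefore $Z_A^1=Z_A^0=\Theta(1)$, and $\delta_2^2=\Theta(\eta\alpha r)$ vanishes for the same reason as $\delta_1^2$.
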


\begin{proof}
From ~\cref{app:eq:initA-main}, under \initA{} we have
\begin{equation*}
    \Delta Z_B^t = \max\left(\Theta(\eta \alpha r),\, \Theta(\eta^2 \alpha r n)\right).
\end{equation*}
Substituting $\alpha = r^{-\gamma}$ yields
\begin{equation*}
    \Delta Z_B^t = \max\left(\Theta(\eta r^{1-\gamma}),\, \Theta(\eta^2 n r^{1-\gamma})\right).
\end{equation*}

\noindent To achieve $\Delta Z_B^t = \Theta(1)$, we require
\begin{equation*}
    \eta = \min\left(\Theta(r^{-(1-\gamma)}),\, \Theta(n^{-{1}/{2}} r^{-{(1-\gamma)}/{2}})\right).
\end{equation*}

\noindent Since $r \leq n$, we have $r^{-(1-\gamma)} \geq n^{-{1}/{2}} r^{-{(1-\gamma)}/{2}}$ for all $\gamma \in [0,1]$. Thus,
\begin{equation*}
    \eta = \Theta(n^{-{1}/{2}} r^{-{(1-\gamma)}/{2}}).
\end{equation*}

\textit{Feature update terms.} Substituting into the expressions from Lemmas~\ref{lem:app:delta1},~\ref{lem:app:delta2},~\ref{lem:app:delta3}:
\begin{itemize}
    \item From Lemma~\ref{lem:app:delta1} with $B_{t-1} = \Theta(\eta) = \Theta(n^{-{1}/{2}} r^{-{(1-\gamma)}/{2}})$:
    \begin{align*}
        \delta_t^1 &= \Theta(\eta \alpha n \sqrt{r}) \cdot \Theta(B_{t-1}) \\
        &= \Theta(n^{-{1}/{2}} r^{-{(1-\gamma)}/{2}} \cdot r^{-\gamma} \cdot n \cdot r^{{1}/{2}} \cdot n^{-{1}/{2}} r^{-{(1-\gamma)}/{2}}) \\
        &= \Theta(r^{-{1}/{2}}).
    \end{align*}
    
    \item From Lemma~\ref{lem:app:delta2}, using $Z_A^{t-1} = \max(\Theta(1), \Theta(\eta n)) = \Theta(\eta n)$:
    \begin{align*}
        \delta_t^2 &= \Theta(\eta \alpha r) \cdot \Theta(Z_A^{t-1}) \\
        &= \Theta(n^{-{1}/{2}} r^{-{(1-\gamma)}/{2}} \cdot r^{-\gamma} \cdot r \cdot n^{-{1}/{2}} r^{-{(1-\gamma)}/{2}} \cdot n) \\
        &= \Theta(1).
    \end{align*}
    
    \item From Lemma~\ref{lem:app:delta3}, we have:
    \begin{align*}
        \delta_t^3 &= \Theta(\alpha \eta^2 n r^{{1}/{2}}) \\
        &= \Theta(r^{-\gamma} \cdot n^{-1} r^{-(1-\gamma)} \cdot n \cdot r^{{1}/{2}}) \\
        &= \Theta(r^{-{1}/{2}}).
    \end{align*}
\end{itemize}
Thus, under \initA{}, the dominant contribution comes from $\delta_t^2$, while $\delta_t^1$ and $\delta_t^3$ vanish as rank $r\to\infty$.

\textit{Intermediate features.} From the recursive formula (Lemma~\ref{lem:app:recursive}):
\begin{equation*}
    Z_A^t = \max\left(\Theta(1), \Theta(\eta n)\right) = \Theta(\eta n) = \Theta(n^{{1}/{2}} r^{-{(1-\gamma)}/{2}}).
\end{equation*}
\end{proof}

\subsection{\initB{}: Theorems and Corollaries}
\label{app:theory:initB}

This part deals with \initB{}, in which $B_0$ is randomly initialized and $A_0=0$.
As in the main text, we first establish the scale of $\Delta Z_B^t$ and then specialize to common multiplier configurations, namely $\alpha=1$ and $\alpha=r^{-1}$

\begin{theorem}[\initB{}]
\label{thm:app:initB}
Under Assumptions~\ref{assump:app:bounded-signals} and~\ref{assump:app:optimizer}, for any fixed $t\le T$
with \initB{},
\begin{equation}
\label{eq:initB-main}
\Delta Z_B^t
\;=\;
\max\{\Theta(\eta \alpha n),\ \Theta(\eta^2 \alpha n r)\}.
\end{equation}
\end{theorem}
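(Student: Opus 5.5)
Following the four-step template used for \initA{}, I will first fix the initial scales, then propagate them, then evaluate each $\delta_t^i$, and finally take the maximum.

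\textbf{Step 1: initial scales.} Under \initB{} we have $A_0=0$, so $Z_A^0=0$, and $B_0$ has entries of scale $\Theta(r^{-1/2})$.

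\textbf{Step 2: propagation.} Applying Lemma~\ref{lem:app:recursive} from these starting values gives
\[
Z_A^t=\Theta(\eta n), \qquad B_t=\max(\Theta(r^{-1/2}),\Theta(\eta)),
\]
for every fixed $t\ge 1$ (since $T$ is fixed, the recursion only picks up constants). I will then plug these into Lemmas~\ref{lem:app:delta1}--\ref{lem:app:delta3}, using the simplified form in Remark~\ref{rem:app:delta1-simplified} since $r\le n$.

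\textbf{Step 3: the three terms.}
\begin{itemize}
\item From Lemma~\ref{lem:app:delta1}, $\delta_t^1=\Theta(\eta\alpha n\sqrt r)\cdot\max(\Theta(r^{-1/2}),\Theta(\eta))=\max(\Theta(\eta\alpha n),\Theta(\eta^2\alpha n r^{1/2}))$.
\item From Lemma~\ref{lem:app:delta2}, $\delta_t^2=\Theta(\eta\alpha r)\cdot\Theta(Z_A^{t-1})$. This gives $\Theta(\eta^2\alpha r n)$ for $t\ge2$. For $t=1$ it vanishes, since $Z_A^0=0$.
\item From Lemma~\ref{lem:app:delta3}, $\delta_t^3=\Theta(\alpha\eta^2 n r^{1/2})$.
\end{itemize}

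\textbf{Step 4: the maximum.} Taking the maximum (term-wise, no cancellations, as in the \initA{} proof), the terms $\eta^2\alpha n r^{1/2}$ are dominated by $\eta^2\alpha n r$. This leaves
\[
\Delta Z_B^t=\max\{\Theta(\eta\alpha n),\Theta(\eta^2\alpha nr)\}.
\]

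\textbf{Main obstacle.} The delicate point is Lemma~\ref{lem:app:delta1} at steps $t\ge2$. Its proof treats $B_{t-1}$ as an i.i.d.\ Gaussian matrix independent of $h=\alpha\,d\Zout$. After an update, however, $B_{t-1}=B_0-\eta\sum_s g_B^s$, and the sign updates are rank-one and correlated with $d\Zout$.

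I would handle this by splitting $B_{t-1}$ into the Gaussian part $B_0$ and the structured part.
\begin{itemize}
\item For the Gaussian part, the lemma applies directly and gives $\Theta(\eta\alpha n)$.
\item For the structured part $\eta\,\sign(d\Zout)\otimes v$ with $v\in\{\pm1\}^r$-like, the product $B\,\sign(B^\top h)$ gives $\eta\,\sign(d\Zout)\,(v^\top\sign(\cdot))$. I would bound $v^\top\sign(\cdot)$ by $\BigO(r)$ in the worst case, or by $\Theta(r^{1/2})$ under the independence heuristic used in Lemma~\ref{lem:app:delta3}. After the factor $\eta\alpha n$, this contributes at most $\eta^2\alpha n r$.
\end{itemize}
That bound is already absorbed into the second term of the maximum, so the claimed formula is robust to this issue. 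The remaining care is the $t=1$ case. There the maximum reduces to $\Theta(\eta\alpha n)$, which is consistent with the statement whenever $\eta r\lesssim1$, and I would note this explicitly.
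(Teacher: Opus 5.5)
Your proof is correct and follows the paper's argument essentially step for step: the paper also starts from $B_0=\Theta(r^{-1/2})$ and $Z_A^0=0$. It then propagates to $Z_A^t=\Theta(\eta n)$ and $B_t=\max(\Theta(r^{-1/2}),\Theta(\eta))$ via Lemma~\ref{lem:app:recursive}, plugs these into the three $\delta$-lemmas, and takes the term-wise maximum, exactly as you do. Your two extra remarks are refinements the paper glosses over, not departures from its route: first, at $t=1$ one has $\Delta B_1=-\eta\,\sign(0)=0$, so $\delta_1^2=\delta_1^3=0$ and $\Delta Z_B^1=\Theta(\eta\alpha n)$, which matches the stated maximum only when $\eta r\lesssim 1$; second, $B_{t-1}$ is no longer Gaussian after updates, but its structured part contributes at most $\BigO(\eta^2\alpha n r)$.
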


\begin{proof}
    With \initB{}, we have $B_0=\Theta(r^{-{1}/{2}})$ and $Z_A^0=0$. As a result, we have for all $t\leq T$:
    \begin{align*}
        Z_A^t &= \Theta(\eta n) \\
        B_t &= \max\left(\Theta(r^{-{1}/{2}}), \Theta(\eta)\right)
    \end{align*}
    Thus, we have:
    \begin{align*}
        \delta_t^1 &= \Theta(\eta\alpha n r^{{1}/{2}}) \cdot \Theta(B_{t-1}) \\
        &= \max\left(\Theta(\eta \alpha n), \Theta(\eta^2 \alpha n r^{{1}/{2}})\right) \\
        \delta_t^2 &= \Theta(\eta^2 \alpha n r) \\
        \delta_t^3 &= \Theta(\alpha \eta^2 n r^{{1}/{2}})
    \end{align*}
    Therefore,
    \begin{align*}
        \Delta Z_B^t &= \max\left(\Theta(\delta_t^1), \Theta(\delta_t^2), \Theta(\delta_t^3)\right) \\
        &= \max\left(\Theta(\eta \alpha n), \Theta(\eta^2 \alpha n r)\right)
    \end{align*}
\end{proof}

We proceed by considering the LoRA multiplier $\alpha$ case by case. We first consider the choice $\alpha=1$ (Corollary~\ref{cor:app:initB-constant}), and then the rank-dependent choice $\alpha=r^{-1}$ (Corollary~\ref{cor:app:initB-invr}).

\begin{corollary}[\initB{} with $\alpha=1$]
\label{cor:app:initB-constant}
Let $\alpha=1$. Under \initB{}, achieving stable feature learning (i.e., $\Delta Z_B^t=\Theta(1)$ for fixed $t\le T$)
requires
\[
\eta = \Theta(n^{-1}).
\]
With this choice,
\begin{align*}
\delta_t^1&=\Theta(1),
\\
\delta_t^2&=\Theta(rn^{-1}),
\\
\delta_t^3&=\Theta(r^{1/2}n^{-1}),
\end{align*}
and the intermediate feature scales as
\[
\Theta(Z_A^t)=\Theta(1).
\]
\end{corollary}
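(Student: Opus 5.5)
The proof will mirror that of Corollary~\ref{cor:app:initA-scaling}: start from Theorem~\ref{thm:app:initB}, specialize to $\alpha=1$, solve for $\eta$, and then substitute into the term-wise formulas. With $\alpha=1$, \cref{eq:initB-main} reads
\[
\Delta Z_B^t=\max\{\Theta(\eta n),\ \Theta(\eta^2 nr)\}.
\]
Requiring both branches to be $\BigO(1)$ gives the two constraints $\eta=\BigO(n^{-1})$ and $\eta=\BigO(n^{-1/2}r^{-1/2})$. Since $r\le n$, we have $n^{-1}\le n^{-1/2}r^{-1/2}$, so the first constraint is binding.

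With $\eta=\Theta(n^{-1})$, the first branch is $\Theta(1)$ and the second is $\Theta(r/n)=\BigO(1)$. This gives $\Delta Z_B^t=\Theta(1)$ and proves sufficiency. For necessity, note that any $\eta=o(n^{-1})$ makes both branches $o(1)$, because $\eta^2 nr\le \eta^2n^2$. Conversely, $\eta=\omega(n^{-1})$ makes the first branch diverge. Together these pin down $\eta=\Theta(n^{-1})$.

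Next come the intermediate features. From the proof of Theorem~\ref{thm:app:initB} (via Lemma~\ref{lem:app:recursive} with $Z_A^0=0$), $Z_A^t=\Theta(\eta n)=\Theta(1)$ for $t\ge1$. Similarly, $B_{t-1}=\max(\Theta(r^{-1/2}),\Theta(n^{-1}))=\Theta(r^{-1/2})$, because $r\le n$ implies $r^{-1/2}\ge n^{-1/2}\ge n^{-1}$.

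Now substitute into the term-wise lemmas. Remark~\ref{rem:app:delta1-simplified} gives
\[
\delta_t^1=\Theta(\eta n r^{1/2})\cdot\Theta(r^{-1/2})=\Theta(1).
\]
Lemma~\ref{lem:app:delta2} gives
\[
\delta_t^2=\Theta(\eta r)\cdot\Theta(Z_A^{t-1})=\Theta(r n^{-1}).
\]
Lemma~\ref{lem:app:delta3} gives
\[
\delta_t^3=\Theta(\eta^2 n r^{1/2})=\Theta(r^{1/2}n^{-1}).
\]

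The main delicate point is $t=1$, where $Z_A^{0}=0$. There $\delta_1^2$ and $\delta_1^3$ actually vanish, since $g_B^1=\sign(d\Zout\otimes 0)=0$. I will treat the stated rates as holding for $t\ge2$, and as upper bounds at $t=1$. This does not affect $\Delta Z_B^1=\Theta(1)$, because $\delta_1^1$ alone is $\Theta(1)$.

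A second point to flag is the use of $\max$ for $\Delta Z_B^t$, which presumes no cancellation among the $\delta_t^i$. This is harmless here because $\delta_t^1$ dominates: $\delta_t^2$ and $\delta_t^3$ are $\BigO(1)$, and when $r\ll n$ they are lower order. In the boundary regime $r=\Theta(n)$ all three terms are $\Theta(1)$, and I will rely on the same no-cancellation convention that the paper adopts for the term-wise control.
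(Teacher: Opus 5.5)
Your proposal is correct and follows the paper's proof essentially step for step: specialize Theorem~\ref{thm:app:initB} to $\alpha=1$, use $r\le n$ to see that $\eta=\Theta(n^{-1})$ is the binding constraint, and substitute $B_{t-1}=\Theta(r^{-1/2})$ and $Z_A^{t-1}=\Theta(\eta n)=\Theta(1)$ into the three term-wise lemmas. Your $t=1$ caveat is a correct refinement that the paper glosses over: $Z_A^0=0$ forces $g_B^1=0$, so $\delta_1^2=\delta_1^3=0$. Your closing remark has one small slip, though: at $r=\Theta(n)$ only $\delta_t^1$ and $\delta_t^2$ are $\Theta(1)$, since $\delta_t^3=\Theta(r^{1/2}n^{-1})=\Theta(n^{-1/2})$ still vanishes.
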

\begin{proof}
From ~\cref{eq:initB-main} with constant $\alpha$:
\begin{equation*}
    \Delta Z_B^t = \max\left(\Theta(\eta n),\, \Theta(\eta^2 n r)\right).
\end{equation*}
Obtaining $\Delta Z_B^t = \Theta(1)$ requires:
\begin{equation*}
    \eta = \min\left(\Theta(n^{-1}),\, \Theta(n^{-{1}/{2}} r^{-{1}/{2}})\right).
\end{equation*}
Since $r \leq n$ implies $n^{-1} \leq n^{-{1}/{2}} r^{-{1}/{2}}$, we have $\eta = \Theta(n^{-1})$.

\medskip
\textit{Feature update terms.}
Under \initB{}, we have $B_{t-1} =\max\left(\Theta(r^{-{1}/{2}}), \Theta(\eta)\right) = \Theta(r^{-{1}/{2}})$ and $Z_A^{t-1} = \Theta(\eta n) = \Theta(1)$. Substituting into Lemmas~\ref{lem:app:delta1},~\ref{lem:app:delta2},~\ref{lem:app:delta3}:
\begin{align*}
    \delta_t^1 
        &= \Theta(\eta \alpha n \sqrt{r}) \cdot \Theta(B_{t-1})  \\
        &= \Theta(n^{-1} \cdot 1 \cdot n \cdot r^{{1}/{2}} \cdot r^{-{1}/{2}}) \\
        &= \Theta(1), \\[4pt]
    \delta_t^2
        &= \Theta(\eta \alpha r) \cdot \Theta(Z_A^{t-1}) \\
        &= \Theta(n^{-1} \cdot r \cdot 1), \\
        &= \Theta(rn^{-1}) \\
    \delta_t^3
        &= \Theta(\alpha \eta^2 n r^{{1}/{2}}) \\
        &= \Theta(1 \cdot n^{-2} \cdot n \cdot r^{{1}/{2}}) \\
        &= \Theta({r^{1/2}}{n^{-1}}).
\end{align*}
\textit{Intermediate features.} From the recursive formula (Lemma~\ref{lem:app:recursive}):
\begin{align*}
    Z_A^t &= \Theta(\eta n) \\
    &= \Theta(1)
\end{align*}
\end{proof}

\begin{corollary}[\initB{} with $\alpha = r^{-1}$]
\label{cor:app:initB-invr}
Let $\alpha = r^{-1}$. Under \initB{}, achieving stable feature learning (i.e., $\Delta Z_B^t=\Theta(1)$ for fixed $t\le T$) requires
\begin{equation}
\eta = \min\left(\Theta(rn^{-1}),\,\Theta(n^{-1/2})\right).
\end{equation}
With this choice, the feature update terms satisfy:
\begin{align*}
\delta_t^1&=\Theta(\eta nr^{-1}) \\
\delta_t^2&=\Theta(\eta^2 n) \\
\delta_t^3&=\Theta(\eta^2 n r^{-1/2})
\end{align*}
and the intermediate feature scales as 
\begin{equation*}
    Z_A^t=\Theta(\eta n)
\end{equation*} In particular:
\begin{itemize}
\item If $r\leq\sqrt n$, we have $\eta=\Theta(rn^{-1})$ and thus obtain:
\begin{align*}
\delta_t^1&=\Theta(1) \\
\delta_t^2&=\Theta(r^2n^{-1}) \\
\delta_t^3&=\Theta(r^{{3}/{2}}n^{-1}) \\
Z_A^t&=\Theta(r)
\end{align*}
\item If $r>\sqrt{n}$, we have $\eta=\Theta(n^{-1/2})$ and thus obtain:
\begin{align*}
\delta_t^1&=\Theta(n^{1/2}r^{-1}) \\
\delta_t^2&=\Theta(1) \\
\delta_t^3&=\Theta(r^{-{1}/{2}}) \\
Z_A^t&=\Theta(n^{1/2}).
\end{align*}
\end{itemize}
\end{corollary}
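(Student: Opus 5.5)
The plan mirrors the proof of Corollary~\ref{cor:app:initB-constant}. Substituting $\alpha=r^{-1}$ into Theorem~\ref{thm:app:initB}, equation~\eqref{eq:initB-main}, gives
\[
\Delta Z_B^t=\max\left(\Theta(\eta n r^{-1}),\,\Theta(\eta^2 n)\right).
\]
Requiring $\Delta Z_B^t=\Theta(1)$ means each branch must be $\BigO(1)$ and at least one must be $\Theta(1)$. The first branch gives $\eta=\BigO(rn^{-1})$ and the second gives $\eta=\BigO(n^{-1/2})$. The largest admissible rate saturates the binding constraint, so
\[
\eta=\min\left(\Theta(rn^{-1}),\,\Theta(n^{-1/2})\right).
\]
Unlike the $\alpha=1$ case, the assumption $r\le n$ does not decide which constraint binds. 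Comparing $rn^{-1}$ with $n^{-1/2}$ shows the crossover is at $r=\sqrt n$, which produces the two regimes in the statement.

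Next I will record the weight and feature scales along the trajectory. By the proof of Theorem~\ref{thm:app:initB}, iterating Lemma~\ref{lem:app:recursive} from $Z_A^0=0$ and $B_0=\Theta(r^{-1/2})$ gives $Z_A^t=\Theta(\eta n)$ and $B_t=\max(\Theta(r^{-1/2}),\Theta(\eta))$.

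I then plug these into Remark~\ref{rem:app:delta1-simplified} and Lemmas~\ref{lem:app:delta2} and~\ref{lem:app:delta3} with $\alpha=r^{-1}$:
\begin{align*}
\delta_t^1&=\Theta(\eta n r^{-1/2})\cdot\Theta(B_{t-1}),\\
\delta_t^2&=\Theta(\eta r^{-1}\cdot r\cdot \eta n)=\Theta(\eta^2 n),\\
\delta_t^3&=\Theta(\eta^2 n r^{-1/2}).
\end{align*}
To simplify $\delta_t^1$ to the stated $\Theta(\eta n r^{-1})$, I need $B_{t-1}=\Theta(r^{-1/2})$, i.e.\ $\eta=\BigO(r^{-1/2})$. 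Both candidate rates satisfy this. If $\eta=\Theta(rn^{-1})$ with $r\le\sqrt n$, then $rn^{-1}\le n^{-1/2}\le r^{-1/2}$. If $\eta=\Theta(n^{-1/2})$, then $n^{-1/2}\le r^{-1/2}$ because $r\le n$.

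This step is the main subtlety. In Theorem~\ref{thm:app:initB} the extra branch $\Theta(\eta^2\alpha n r^{1/2})$ of $\delta_t^1$ was dropped as dominated, and here I must check explicitly that it stays lower order once $\eta$ is chosen. That check reduces to the same inequality $\eta\le r^{-1/2}$.

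Finally I substitute the two regimes.

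For $r\le\sqrt n$, take $\eta=rn^{-1}$. Then:
\begin{itemize}
\item $\delta_t^1=\Theta(1)$;
\item $\delta_t^2=\Theta(r^2n^{-1})$, which is $\BigO(1)$ precisely because $r\le\sqrt n$;
\item $\delta_t^3=\Theta(r^{3/2}n^{-1})$;
\item $Z_A^t=\Theta(r)$.
\end{itemize}

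For $r>\sqrt n$, take $\eta=n^{-1/2}$. Then:
\begin{itemize}
\item $\delta_t^1=\Theta(n^{1/2}r^{-1})$, which is $\BigO(1)$ since $r>\sqrt n$;
\item $\delta_t^2=\Theta(1)$;
\item $\delta_t^3=\Theta(r^{-1/2})$;
\item $Z_A^t=\Theta(n^{1/2})$.
\end{itemize}

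In each regime I will confirm that all three terms are $\BigO(1)$ and one is $\Theta(1)$. Stability then follows from Lemma~\ref{lem:app:telescoping}, and stable feature learning in the sense of Definition~\ref{def:app:stable-feature-learning} holds.
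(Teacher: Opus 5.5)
Your proposal is correct and follows the paper's proof step for step. You substitute $\alpha=r^{-1}$ into Theorem~\ref{thm:app:initB}, read off $\eta=\min(\Theta(rn^{-1}),\Theta(n^{-1/2}))$, plug $B_{t-1}=\Theta(r^{-1/2})$ and $Z_A^{t-1}=\Theta(\eta n)$ into the three $\delta$-lemmas, and split at $r=\sqrt n$; your explicit check that both candidate rates satisfy $\eta=\BigO(r^{-1/2})$ justifies a step the paper only asserts (``for either choice of $\eta$'').

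One caveat, shared with the paper: $Z_A^0=0$ forces $\Delta B_1=0$, so the substitution $Z_A^{t-1}=\Theta(\eta n)$ is valid only for $t\ge 2$. At $t=1$ one has $\delta_1^2=\delta_1^3=0$, and in the regime $r>\sqrt n$ the first increment $\Delta Z_B^1=\delta_1^1=\Theta(n^{1/2}r^{-1})$ is not $\Theta(1)$; it vanishes when $r/\sqrt n\to\infty$, for instance when $r=n$.
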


\begin{proof}
Substituting $\alpha=r^{-1}$ into Theorem~\ref{thm:app:initB} gives
\begin{align*}
\Delta Z_B^t
&= \max\left(\Theta(\eta\alpha n),\,\Theta(\eta^2\alpha n r)\right) \\
&= \max\left(\Theta(\eta nr^{-1}),\,\Theta(\eta^2 n)\right).
\end{align*}
Imposing $\Delta Z_B^t=\Theta(1)$ yields
\begin{equation*}
\eta = \min\left(\Theta(rn^{-1}),\,\Theta(n^{-1/2})\right).
\end{equation*}

\textit{Feature update terms.}
Under \initB{}, we have $B_{t-1}=\max(\Theta(r^{-1/2}),\Theta(\eta))=\Theta(r^{-1/2})$ for either choice of $\eta$ and $Z_A^{t-1}=\Theta(\eta n)$.
Substituting into Lemmas~\ref{lem:app:delta1},~\ref{lem:app:delta2},~\ref{lem:app:delta3}:
\begin{align*}
\delta_t^1&=\Theta(\eta \alpha n \sqrt{r}) \cdot \Theta(B_{t-1}) \\
&= \Theta(\eta nr^{-1}), \\
\delta_t^2&=\Theta(\eta \alpha r) \cdot \Theta(Z_A^{t-1}) \\
&= \Theta(\eta^2 n), \\
\delta_t^3&=\Theta(\alpha \eta^2 n r^{{1}/{2}}) \\
&= \Theta(\eta^2 n r^{-{1}/{2}}).
\end{align*}

Next, we need to consider the relationship between the network width $n$ and the LoRA rank $r$ to determine scaling rule of $\eta$. 

When $r\leq \sqrt n$, we have $rn^{-1}\leq n^{-1/2}$ and thus $\eta=\Theta(rn^{-1})$, which incurs:
\begin{align*}
    \delta_t^1 &= \Theta(1) \\
    \delta_t^2 &= \Theta(r^2n^{-1}) \\
    \delta_t^3 &= \Theta(r^{{3}/{2}}n^{-1})
\end{align*}

When $r>\sqrt{n}$, we have $rn^{-1}> n^{-1/2}$ and thus $\eta=\Theta(n^{-1/2})$, which incurs:
\begin{align*}
    \delta_t^1 &= \Theta(n^{1/2} r^{-1}) \\
    \delta_t^2 &= \Theta(1) \\
    \delta_t^3 &= \Theta(r^{-{1}/{2}})
\end{align*}

\textit{Intermediate features.} From the recursive formula (Lemma~\ref{lem:app:recursive}):
\begin{align*}
    Z_A^t &= \Theta(\eta n)
\end{align*}
Similarly, when $r\leq \sqrt n$, we have:
\begin{equation*}
    Z_A^t = \Theta(r)
\end{equation*}
When $r> \sqrt n$, we obtain:
\begin{equation*}
    Z_A^t = \Theta(n^{1/2})
\end{equation*}

\end{proof}

\subsection{Learning Rate Scaling for Full Finetuning}
\label{app:theory:full-ft}
In this section, we analyze how the learning rate should scale for full finetuning with regard to the network width $n$. 
We use a framework similar to our previous derivations.

Consider a single linear layer of width $n$ and pretrained weights $W^\star\in\R^{n\times n}$. Under full finetuning, we update $W$ directly. Here we denote the per-step weight update by
\begin{equation*}
\Delta W_t := W_t - W_{t-1}, \qquad W_0 := W^\star.
\end{equation*}
Then, for any fixed $t\leq T$, the layer output can be written as
\begin{equation}
\Zout^t \;=\; W_t\Zin \;=\; W^\star \Zin + \sum_{s=1}^{t}\Delta W_s\,\Zin.
\end{equation}
Following Assumption~\ref{assump:app:bounded-signals}, we assume that pretraining yields bounded features $\Zin=\BigO(1)$ and $W^\star\Zin=\BigO(1)$. 
By Lemma~\ref{lem:app:telescoping}, it suffices to control each per-step increment $\Delta W_s\Zin$ for feature stability: requiring $\Delta W_s\Zin=\BigO(1)$ for all $s\le t$ implies $\sum_{s=1}^{t}\Delta W_s\Zin=\BigO(1)$. 
To avoid vanishing updates, we also impose $\Delta W_s\Zin=\Theta(1)$. 
Assuming that we are using SignSGD (Assumption~\ref{assump:app:optimizer}) and single-sample loss, we can obtain the following theorem:

\begin{theorem}[Scaling rules for full finetuning]
\label{thm:app:full_finetuning}
Under Assumptions~\ref{assump:app:bounded-signals} and~\ref{assump:app:optimizer}, for any fixed $t\le T$, with full finetuning, the learning rate should scale as
\begin{equation}
    \eta \;=\; \Theta(n^{-1}).
\end{equation}
\end{theorem}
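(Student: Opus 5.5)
The argument mirrors the proof of Lemma~\ref{lem:app:recursive}. We compute the SignSGD update of $W$ explicitly, apply it to $\Zin$, and read off the scale. Under the single-sample loss, the gradient of the loss with respect to the weight at step $t$ is the outer product
\[
\frac{\partial \Loss_t}{\partial W_{t-1}} = d\Zout^{t-1}\otimes \Zin .
\]
The sign of an outer product of two vectors factorizes entrywise, as already used for $g_A^t$ in Lemma~\ref{lem:app:recursive}. Hence
\[
\Delta W_t = -\eta\,\sign(d\Zout^{t-1})\otimes\sign(\Zin).
\]

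Applying this to the (fixed) layer input gives
\[
\Delta W_t\Zin = -\eta\,\bigl(\sign(\Zin)^\top\Zin\bigr)\,\sign(d\Zout^{t-1}) = -\eta\,\|\Zin\|_1\,\sign(d\Zout^{t-1}).
\]
Assumption~\ref{assump:app:bounded-signals} gives $\Zin=\Theta(1)$ entrywise. Therefore $\|\Zin\|_1$ is a sum of $n$ nonnegative $\Theta(1)$ terms, so $\|\Zin\|_1=\Theta(n)$, exactly as in Lemma~\ref{lem:app:recursive}.

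Next we need each entry of $\sign(d\Zout^{t-1})$ to be $\Theta(1)$. This requires the entries of $d\Zout^{t-1}$ to be nonzero, which follows from $d\Zout=\Theta(1)$ in Assumption~\ref{assump:app:bounded-signals}. We conclude $\Delta W_t\Zin=\Theta(\eta n)$ entrywise.

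Imposing the stable feature learning requirement $\Delta W_s\Zin=\Theta(1)$ for all $s\le t$ then forces $\eta=\Theta(n^{-1})$. Stability of $\Zout^t$ follows from Lemma~\ref{lem:app:telescoping}, since $W^\star\Zin=\BigO(1)$ and $t\le T$ is fixed.

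The main subtlety is the $\Theta(n)$ claim for $\|\Zin\|_1$ in the second-moment sense. The lower bound needs a constant fraction of the coordinates of $\Zin$ to be of order one, not merely a second-moment bound on each. I would handle this with the same convention the paper uses in Lemma~\ref{lem:app:recursive}, namely coordinatewise $\Theta(1)$ entries, so that $\|\Zin\|_1$ is bounded above and below by constant multiples of $n$. Unlike the LoRA case, no Stein-type computation is needed, because only one matrix is updated and there is no interaction term.
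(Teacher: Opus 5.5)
Your proposal is correct and follows the paper's proof step for step. Both arguments factor the gradient as $d\Zout^{t-1}\otimes\Zin$, use that the sign of an outer product factorizes, obtain $\Delta W_t\Zin=-\eta\,\|\Zin\|_1\,\sign(d\Zout^{t-1})=\Theta(\eta n)$, and impose $\Theta(1)$ to get $\eta=\Theta(n^{-1})$. Your extra remarks (the coordinatewise convention behind $\|\Zin\|_1=\Theta(n)$, the nonvanishing entries of $d\Zout^{t-1}$, and stability via Lemma~\ref{lem:app:telescoping}) are the same tacit conventions the paper relies on. Strictly speaking, a second-moment $\Theta(1)$ bound alone would not guarantee either of the first two, but the paper assumes them in the same way.
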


\begin{proof}
Under SignSGD, the per-step update is $\Delta W_t = -\eta g_t$, where
\begin{equation*}
g_t := \sign\!\Big(\frac{\partial\Loss_t}{\partial W_{t-1}}\Big).
\end{equation*}
For the linear map $\Zout = W\Zin$, the gradient factorizes as:
\begin{equation}
\frac{\partial \Loss_t}{\partial W_{t-1}} \;=\; d\Zout^{t-1}\otimes \Zin,
\end{equation}
where $d\Zout^{t-1}:=\partial\Loss_t/\partial\Zout^{t-1}$ and $\otimes$ denotes the outer product. Therefore,
\begin{align*}
\Delta W_t
&= -\eta\,\sign(d\Zout^{t-1}\otimes \Zin) \\
&= -\eta\bigl(\sign(d\Zout^{t-1})\otimes \sign(\Zin)\bigr).
\label{eq:app:fullft-step-update}
\end{align*}
Multiplying by $\Zin$ yields
\begin{align}
\Delta W_t\,\Zin
&= \Theta(\eta\cdot(\sign(\Zin)^\top \Zin)\cdot\sign(d\Zout^{t-1})) \\
&= \Theta(\eta n),
\end{align}
since $\sign(\Zin)^\top\Zin=\|\Zin\|_1=\Theta(n)$ and $\sign(\cdot)=\Theta(1)$. 
Imposing $\Delta W_t\Zin=\Theta(1)$ yields:
\begin{equation*}
    \eta=\Theta(n^{-1}).
\end{equation*}
\end{proof}

From above, we show that under the same assumptions, full finetuning admits the \emph{same} width-dependent learning rate scaling $\eta=\Theta(n^{-1})$. 
At the same time, our analysis for LoRA with \initB{} and constant $\alpha=1$ shows that stable (non-exploding) and non-vanishing feature updates require a learning rate $\eta=\Theta(n^{-1})$ (Corollary~\ref{cor:app:initB-constant}).
This matching of learning rate scaling might indicate that the hyperparameter landscapes of full finetuning and LoRA with \initB{} and constant $\alpha=1$ are very close, which opens up an opportunity to do learning rate transfer between these two regimes.


\newpage
\section{Implementation and Experimental Details}
\label{app:exp-details}

In this section, we provide more details on our experimental settings.

\subsection{Model Configuration}
\label{app:shared-recipe}

All pretrained models are obtained from public model hubs and used without architectural modifications beyond (i) adding LoRA adapters and (ii) adding a linear classification head for discriminative tasks. Concretely, we use the following models:
\begin{itemize}[leftmargin=5mm]
    \item \textbf{Llama-3.2-1B}~\citep{grattafiori2024llama} (\texttt{meta-llama/Llama-3.2-1B})\footnote{\url{https://huggingface.co/meta-llama/Llama-3.2-1B}}.
    \item \textbf{Qwen2.5-3B-Instruct}~\citep{qwen2025qwen25technicalreport} (\texttt{Qwen/Qwen2.5-3B-Instruct})\footnote{\url{https://huggingface.co/Qwen/Qwen2.5-3B-Instruct}}.
    \item \textbf{RoBERTa-large}~\citep{liu2019roberta} (\texttt{FacebookAI/roberta-large})\footnote{\url{https://huggingface.co/FacebookAI/roberta-large}}.
    \item \textbf{ViT-Huge/14}~\citep{dosovitskiy2021an} (\texttt{google/vit-huge-patch14-224-in21k})\footnote{\url{https://huggingface.co/google/vit-huge-patch14-224-in21k}}.
    \item \textbf{Qwen3-VL-2B-Instruct}~\citep{bai2025qwen3vltechnicalreport} (\texttt{Qwen/Qwen3-VL-2B-Instruct})\footnote{\url{https://huggingface.co/Qwen/Qwen3-VL-2B-Instruct}}.
    \item \textbf{Stable Diffusion v1.5}~\citep{Rombach_2022_CVPR}. We use the standard \emph{v1.5} weights in diffusers format; a widely used public mirror is \texttt{stable-diffusion-v1-5/stable-diffusion-v1-5}\footnote{\url{https://huggingface.co/stable-diffusion-v1-5/stable-diffusion-v1-5}}.
    \item \textbf{Llama-3.1-8B}~\citep{grattafiori2024llama} (\texttt{meta-llama/Llama-3.1-8B})\footnote{\url{https://huggingface.co/meta-llama/Llama-3.1-8B}}.
\end{itemize}

For discriminative tasks, we attach a task-specific classification head to the final pooled representation: the pooled text representation for RoBERTa on ANLI, and the \texttt{[CLS]} token embedding for ViT on ImageNet-1K. During training, this classification head is trained jointly with either the full backbone (full finetuning) or the adapters (LoRA). Across all hyperparameter sweeps, we fix the classifier learning rate at $10^{-3}$ and only vary the backbone (or adapter) learning rate.

Unless otherwise mentioned, we use the same optimizer and learning rate schedule across all architectures. The random seed is set to $42$ across all experiments. \cref{tab:shared-optim} summarizes these settings.

\begin{table}[t]
    \centering
    \caption{Shared optimization and training settings.}
    \label{tab:shared-optim}
    \begin{tabular}{ll}
        \toprule
        \textbf{Setting} & \textbf{Value} \\
        \midrule
        Optimizer & AdamW~\citep{Loshchilov19Decoupled} \\
        AdamW momentum & $\beta_1=0.9$, $\beta_2=0.999$ \\
        AdamW epsilon & $\epsilon=10^{-8}$ \\
        Weight decay & $0.01$ \\
        Gradient clipping & global norm clipped to $1.0$ \\
        LR schedule & linear warmup for first $5\%$ of steps,  then cosine decay to $0.1\times$ peak LR \\
        Precision & bfloat16 mixed precision; TF32 \texttt{matmul} enabled \\
        LoRA dropout & 0.0 \\
        LoRA Bias & \texttt{None} \\
        \bottomrule
    \end{tabular}
\end{table}



\subsection{LoRA Configurations and PEFT Implementation}
\label{app:lora-impl}

Throughout the paper, we analyze the LoRA-augmented weight matrix formulated as follows:
\[
\mathbf{W} \leftarrow \mathbf{W} + \alpha\,\mathbf{B}\mathbf{A},
\]
where $\mathbf{A}\in\mathbb{R}^{r\times n}$ and $\mathbf{B}\in\mathbb{R}^{n\times r}$ are rank-$r$ factors, and $\alpha$ is the LoRA multiplier.

Our implementation uses the PEFT library~\citep{peft}, which internally parameterizes the update as
\[
\mathbf{W} \leftarrow \mathbf{W} + \frac{s}{r}\,\mathbf{B}\mathbf{A},
\]
where $s$ corresponds to \texttt{lora\_alpha} in \texttt{LoraConfig}. To achieve an effective multiplier $\alpha$ in our notation, we set $s = \alpha r$. For example, constant multiplier $\alpha=1$ requires \texttt{lora\_alpha}${}= r$, whereas rank-dependent multiplier $\alpha=r^{-1}$ requires \texttt{lora\_alpha}${}=1$.
In both cases, we set \texttt{use\_rslora=False}.

We study the three LoRA configurations described in \cref{sec:setup}:
\begin{itemize}[leftmargin=5mm]
    \item \textbf{\initA{} with $\alpha=1$:} initialize $\mathbf{A}$ randomly and set $\mathbf{B}=\mathbf{0}$. Set \texttt{lora\_alpha} to the rank used.
    \item \textbf{\initA{} with $\alpha=r^{-1}$:} same initialization as above, but use a multiplier $\alpha=r^{-1}$. Set \texttt{lora\_alpha} to 1.
    \item \textbf{\initB{} with $\alpha=1$:} initialize $\mathbf{B}$ randomly and set $\mathbf{A}=\mathbf{0}$. Set \texttt{lora\_alpha} to the rank used.
\end{itemize}
In all cases, the \emph{effective} update at initialization is exactly zero (because one factor is initialized to zero), so the LoRA-augmented model initially matches the pretrained backbone. For the nonzero factor, we use Kaiming normal initialization~\citep{he2015delving}. In terms of the per-entry variance, this corresponds to $\operatorname{Var}(A_{ij})= 1/n$ for \initA{} and $\operatorname{Var}(B_{ij})= 1/r$ for \initB{}, matching the theoretical definition in Definition~\ref{def.init-definition}.

\begin{table}[t] 
    \centering 
    \caption{Implementation details of LoRA placement across different models, based on the PEFT package. For vision-language models, we use regular expressions to identify target modules. For the diffusion model, we apply LoRA only to the U-Net denoiser.} 
    \label{tab:lora-placement} 
    \resizebox{\textwidth}{!}{%
    \begin{tabular}{ll} \toprule \textbf{Models} & \textbf{Target Modules} \\ 
    \midrule 
    Decoder-only LMs (Llama, Qwen2.5) & \texttt{gate\_proj}, \texttt{up\_proj}, \texttt{down\_proj} \\ 
    \midrule
    Encoder-only models (RoBERTa, ViT) & \texttt{query}, \texttt{key}, \texttt{value} \\
    & \texttt{attention.output.dense}, \\
    & \texttt{intermediate.dense}, \texttt{output.dense} \\ 
    \midrule
    Vision–language model (Qwen3-VL)  & \texttt{model.visual.blocks.\textbackslash d+.attn.qkv} \\
    & \texttt{model.visual.blocks.\textbackslash d+.attn.proj} \\
    & \texttt{model.visual.blocks.\textbackslash d+.mlp.linear\_fc[12]} \\
    & \texttt{model.visual.merger.linear\_fc[12]} \\
    & \texttt{model.visual.deepstack\_merger\_list.\textbackslash d+.linear\_fc[12]} \\
    & \texttt{model.language\_model.layers.\textbackslash d+.mlp.gate\_proj} \\
    & \texttt{model.language\_model.layers.\textbackslash d+.mlp.up\_proj} \\
    & \texttt{model.language\_model.layers.\textbackslash d+.mlp.down\_proj} \\
    \midrule
    Diffusion (Stable Diffusion v1.5) & \texttt{to\_k}, \texttt{to\_q}, \texttt{to\_v}, \texttt{to\_out.0} \\
    \bottomrule 
    \end{tabular} 
    }
\end{table}

\paragraph{Where LoRA is applied.}
As listed in~\cref{tab:lora-placement}, our adapter placement strategy is:
\begin{itemize}[leftmargin=5mm]
    \item \textbf{Decoder-only LMs (Llama, Qwen2.5):} LoRA is inserted \emph{only} into the feed-forward (MLP) block of each transformer layer (i.e., the expansion, gating, and contraction projections of the MLP).
    \item \textbf{Encoder-only models (RoBERTa, ViT):} LoRA is inserted into all linear projections within each attention block (query/key/value/output) and within the feed-forward block (the two MLP linear layers).
    \item \textbf{Vision--language model (Qwen3-VL):} LoRA is inserted into (i) the vision encoder linear projections, (ii) the vision--language merger module (the ``merger'' MLP), and (iii) the text decoder \emph{MLP blocks}.
    \item \textbf{Diffusion (Stable Diffusion v1.5):} LoRA is inserted into all self-attention and cross-attention linear projections (query/key/value/output) in the denoising U-Net. The variational autoencoder (VAE)~\citep{Kingma13Auto} and the CLIP text encoder~\citep{radford2021learning} remain frozen.
\end{itemize}

\subsection{Supervised Finetuning Setting}
\label{app:datasets}

\paragraph{Summary of task-specific settings.}
\cref{tab:task-settings} summarizes the task-level choices that are not fully shared (data sources, input sizing, and the primary validation metric). For datasets without an official validation split, we create a held-out split by reserving $8\%$ of the samples for validation and using the remaining $92\%$ for training.

\begin{table*}[t]
    \centering
    \caption{Task-specific settings used in our experiments. 
    Effective batch size refers to the number of examples (or fixed-length packed blocks) processed per optimizer step, accounting for gradient accumulation.
    Input size refers to maximum sequence length for language tasks, image resolution for vision tasks, and maximum generation length for RVLR task; this field is omitted for vision--language tasks.
    NLL stands for negative log-likelihood, while FID stands for Fr\'echet Inception Distance.}
    \label{tab:task-settings}
    \resizebox{\textwidth}{!}{
    \setlength{\extrarowheight}{4.0pt}
    \begin{tabular}{llcccc}
        \toprule
        \textbf{Model} & \textbf{Dataset} & \textbf{Train / Val size} & \textbf{Input size} & \textbf{Effective batch size} & \textbf{Validation metric} \\
        \midrule
        Llama-3.2-1B & Tulu-3 SFT mixture & $320$k / $32$k & 1024 & 32 & Token-level NLL\\
        Qwen2.5-3B-Instruct & OpenThoughts-114k &
        $71$k / $6$k & 8192 & 8 & Token-level NLL\\
        RoBERTa-large & ANLI&
        $163$k / $3$k & 128 & 32 & Accuracy \\
        ViT-Huge/14 & ImageNet-1K &
        $1.28$M / $50$k & $224\times224$ & 512 & Top-1 accuracy\\
        Qwen3-VL-2B-Instruct & LLaVA-Instruct-Mix & $198$k / $17$k & - & 32 & Token-level NLL\\
        Stable Diffusion v1.5 & Naruto-BLIP-Captions & $1.1$k / $0.1$k & $512\times512$ & 8 & FID scores\\
        Llama-3.1-8B (RLVR) & GSM8k & $500$ / $50$ & $1024$ & $8$ & Test Reward\\
        \bottomrule
    \end{tabular}
    }
\end{table*}

\subsubsection{Language Modeling Tasks}
\label{app:decoder-only-details}

\paragraph{Tulu-3 SFT mixture (Llama-3.2-1B).}
For instruction-following task, we use Llama-3.2-1B~\citep{grattafiori2024llama} trained on the Tulu-3 supervised finetuning mixture~\citep{lambert2024tulu}, available as \texttt{allenai/tulu-3-sft-mixture}\footnote{\url{https://huggingface.co/datasets/allenai/tulu-3-sft-mixture}}.
The dataset comprises instruction-following conversations formatted as (user prompt, assistant response) pairs.
To keep training time per round manageable, we subsample $320$k and $32$k examples from the original dataset for training and validation, respectively.
We truncate sequences to a maximum length of $1024$ tokens, and use dynamic padding within each mini-batch.
The effective batch size is $32$ samples per optimizer step.

\paragraph{OpenThoughts-114k (Qwen2.5-3B-Instruct).}
To evaluate long-context reasoning, we train Qwen2.5-3B-Instruct~\citep{qwen2025qwen25technicalreport} on OpenThoughts-114k~\citep{guha2025openthoughts}, available as \texttt{open-thoughts/OpenThoughts-114k}\footnote{\url{https://huggingface.co/datasets/open-thoughts/OpenThoughts-114k}}.
Each example contains a system instruction and a multi-turn conversation in which assistant messages include both reasoning traces and final answers.
For computational tractability, we draw $71$k and $6$k examples for training and validation, respectively.
Since sequence lengths vary widely, we employ fixed-length packing~\citep{raffel2020exploring}: tokenized conversations are concatenated with end-of-sequence delimiters into blocks of exactly $8192$ tokens.
To minimize padding overhead, we adopt a padding-free batching strategy that concatenates packed blocks into a single sequence per optimizer step, using attention masking and position indexing to prevent cross-block attention.
The effective batch size is $8$ packed blocks per step.

\subsubsection{Encoder-based Classification Tasks}
\label{app:encoder-only-details}

\paragraph{ANLI (RoBERTa-large).}
Natural language inference experiments use RoBERTa-large~\citep{liu2019roberta} on the Adversarial NLI benchmark~\citep{nie2019adversarial}, available as \texttt{facebook/anli}\footnote{\url{https://huggingface.co/datasets/facebook/anli}}.
We merge all three rounds by concatenating \texttt{train\_r1}, \texttt{train\_r2}, and \texttt{train\_r3} into a single training set, and similarly concatenating \texttt{dev\_r1}, \texttt{dev\_r2}, and \texttt{dev\_r3} into a single validation set.
Each example consists of a premise–hypothesis pair, which we encode using the standard RoBERTa sentence-pair format with separator tokens and truncate to a maximum of $128$ tokens.
We apply dynamic padding within each mini-batch.
The model predicts a 3-way classification label (entailment, neutral, or contradiction).

\paragraph{ImageNet-1K (ViT-Huge/14).}
For large-scale image classification, we adapt ViT-Huge/14~\citep{dosovitskiy2021an} to ImageNet-1K~\citep{deng2009imagenet}, available as \texttt{ILSVRC/imagenet-1k}\footnote{\url{https://huggingface.co/datasets/ILSVRC/imagenet-1k}}, using the standard ILSVRC-2012 training and validation splits ($1.28$M and $50$k images, respectively).
Images are converted to RGB and resized to $224 \times 224$ pixels.
During training, we apply standard ImageNet augmentation (random resized crop and horizontal flip with a flipping probability of $0.5$); for evaluation, we use deterministic resizing with center cropping.
We normalize inputs per channel with mean and standard deviation of 0.5, as specified by the pretrained checkpoint.

\subsubsection{Vision--language task}
\label{app:vlm-details}

\paragraph{LLaVA-Instruct-Mix (Qwen3-VL-2B-Instruct).}
Vision--language experiments employ Qwen3-VL-2B-Instruct~\citep{bai2025qwen3vltechnicalreport} trained on LLaVA-Instruct-Mix, available as \texttt{trl-lib/llava-instruct-mix}\footnote{\url{https://huggingface.co/datasets/trl-lib/llava-instruct-mix}}.
Each example pairs a single image with a user prompt and assistant response.
To limit compute, we subsample $198$k training and $17$k validation samples in our experiment.
We use the official Qwen3-VL multimodal processor, which encodes images at native resolution with a SigLIP-2 vision encoder~\citep{tschannen2025siglip}, compresses $2 \times 2$ visual features into a single visual token via an MLP merger, and applies the chat template to interleave visual and text tokens into the final input sequence.

\begin{table}[t]
\centering
\caption{Prompt template used in the RLVR experiments, where \{\texttt{QUESTION}\} is replaced by the problem statement from each example.}
\label{tab:rlvr-prompt}
\begin{tabular}{p{0.53\linewidth}}
\toprule
\textbf{Prompt template}\\
\midrule
\textit{Solve the following grade-school math problem. Show reasoning in \texttt{<think>...</think>}, then give the final numerical answer in \texttt{<answer>...</answer>}.}\\[0.6em]
\textit{Question:} \{\texttt{QUESTION}\} \\[0.6em]
\textit{Solution:} \\
\bottomrule
\end{tabular}
\end{table}

\subsection{Reinforcement Learning with Verifiable Rewards (RVLR) Setting}
\label{app:rlvr-details}

\paragraph{GSM8k (Llama-3.1-8B).} We finetune Llama-3.1-8B~\cite{grattafiori2024llama} with reinforcement learning using verifiable, rule-based rewards via Group Relative Policy Optimization (GRPO)~\citep{shao2024deepseekmath}.
Our experiments use the GSM8k~\citep{cobbe2021gsm8k} dataset, available as \texttt{openai/gsm8k}\footnote{\url{https://huggingface.co/datasets/openai/gsm8k}}.
This dataset contains $8.5$k grade-school math problems paired with their solutions.
We randomly sample the $500$ training problems and $50$ test problems for our training and validation splits, respectively.
Each problem is converted into a prompt using a template we design to elicit the model's reasoning; the prompt template is summarized in Table~\ref{tab:rlvr-prompt}.
We compute a scalar reward for each sampled completion and sum over two rule-based components:
\begin{itemize}[leftmargin=5mm]
    \item \emph{Accuracy reward}: assigns $+1$ when the extracted final answer matches the reference, and $0$ otherwise.
    \item \emph{Format reward}: assigns $+0.1$ when the completion places reasoning within \texttt{<think>...</think>} tags and the final answer within \texttt{<answer>...</answer>} tags, and $0$ otherwise.
\end{itemize}

For each prompt, we sample $8$ completions from the current policy using temperature $1.0$, with no nucleus truncation (top-$p=1.0$), no top-$k$ truncation (top-$k=0$), and repetition penalty $1.0$. 
We set the maximum completion length to $1024$ tokens and standardize rewards within each prompt's sample group. 
We use the DAPO variant~\citep{yu2025dapo} of the GRPO loss, which uses a token-level loss to accommodate longer sequences.
We set weight decay to $0$, as it strongly degraded performance in our experiments. 
We omit the KL penalty term since it provides minimal benefit while increasing memory overhead by requiring a reference model to be loaded.

\subsection{Text-to-image Diffusion Model Finetuning Setting}
\label{app:diffusion-details}

\paragraph{Naruto-BLIP-Captions (Stable Diffusion v1.5).}
For text-to-image generation, we adapt Stable Diffusion v1.5~\citep{Rombach_2022_CVPR} using Naruto-BLIP-Captions~\citep{cervenka2022naruto2}, available as \texttt{lambdalabs/naruto-blip-captions}\footnote{\url{https://huggingface.co/datasets/lambdalabs/naruto-blip-captions}}.
The dataset contains $1.2$k image–caption pairs; we reserve $8\%$ for validation.
Images are resized and cropped to $512 \times 512$, then encoded into the latent space of the pretrained VAE~\citep{Kingma13Auto}.
Captions are tokenized using the pretrained CLIP text encoder~\citep{radford2021learning}.
During finetuning, we freeze both the VAE and text encoder, tuning only the denoising U-Net.
We use the DDPM noise scheduler~\citep{ho2020denoising} with an epsilon (noise) prediction objective.
We train for $10{,}000$ optimizer steps with a batch size of $8$.
Unlike the supervised finetuning experiments, the denoising loss is not reliably monotonic during LoRA finetuning, so we select the best learning rate based on validation FID.

\subsection{Evaluation Protocol and Hyperparameter Sweeps}
\label{app:eval-protocol}

\paragraph{Metrics.}
In addition to the training loss, we also report the evaluation metric for each task, summarized in~\cref{tab:task-settings}. These metrics are defined as follows:
\begin{itemize}[leftmargin=5mm]
    \item \textbf{Token-level negative log-likelihood (NLL)} is computed as the average negative log-probability assigned by the model to each ground-truth token in the sequence. Lower NLL indicates better predictive performance.
    \item \textbf{Accuracy} for ANLI and ImageNet-1K: top-1 classification accuracy on the validation split. Higher is better.
    \item \textbf{Fréchet Inception Distance (FID)}~\citep{heusel2017gans} measures the distributional similarity between generated and real images. We extract features from the 2048-dimensional final pooling layer of Inception v3~\citep{szegedy2016rethinking}, with images resized to $299\times 299$. Lower FID indicates higher fidelity to the training data.
    \item \textbf{Reward} for RLVR task: Mean per-completion reward on the held-out GSM8k test subset, averaged over $8$ sampled completions per prompt. 
    Each completion receives $+1.0$ if the extracted final answer matches the reference, and an additional $+0.1$ if it follows the expected output format.
\end{itemize}

\begin{table}[h]
\centering
\caption{Learning rate sweep ranges ($\log_2 \eta$) for each model and configuration.}
\label{tab:lr-grids}
\begin{tabular}{lccc}
\toprule
\textbf{Model} & \initA{}, $\alpha{=}1$ & \initA{}, $\alpha{=}r^{-1}$ & \initB{}, $\alpha{=}1$ \\
\midrule
Llama-3.2-1B        & $[-17, -9]$ & $[-13, -7]$  & $[-20, -11]$ \\
Qwen2.5-3B-Instruct & $[-15, -8]$  & $[-11, -7]$ & $[-16, -11]$ \\
Qwen3-VL-2B-Instruct & $[-17, -9]$ & $[-13, -7]$  & $[-18, -11]$ \\
RoBERTa-large       & $[-18, -8]$  & $[-13, -7]$  & $[-19, -12]$ \\
ViT-Huge/14         & $[-15, -7]$  & $[-11, -5]$  & $[-15, -9]$  \\
Stable Diffusion v1.5 & $[-14, -7]$ & $[-12, -5]$ & $[-14, -7]$ \\
Llama-3.1-8B        & $[-18, -11]$ & $[-16, -11]$ & $[-21, -15]$\\
\bottomrule
\end{tabular}
\end{table}

\paragraph{Learning rate sweep design.} 
We sweep the learning rate on a $log_2$ grid (i.e., adjacent points differ by a factor of two). 
The grid range varies across LoRA configurations, models, and tasks, chosen to span from peak training performance to the onset of divergence. 
For full finetuning, we choose the same learning rate grid as \initB{} with $\alpha=1$ for comparison.
For language modeling tasks, we report final training loss (EMA-smoothed with a smoothing factor of $0.1$) and validation NLL. For classification, we report final training loss (EMA-smoothed) and validation accuracy. 
For diffusion, we report validation FID. 
For RLVR tasks, we report final training reward (EMA-smoothed) and test reward.
We select the optimal learning rate separately for each metric.

\paragraph{Rank grids.}
To manage computational cost, we evaluate each configuration over a small grid of LoRA ranks, using 5 ranks in total.
For \initA{}, we examine how the optimal learning rate scales with rank under different multipliers $\alpha$ and compare the observed trends against our theoretical predictions.
To capture behavior across a broad range, we use ranks $r\in\{4,16,64,256,1024\}$ (increasing by a factor of $4\times$).
For \initB{} with $\alpha=1$, we additionally test our hyperparameter transfer conjecture by comparing its optimal learning rate with that of full finetuning. 
To verify whether the transfer holds across most ranks within a range, we use a denser grid with $2\times$ steps: $r\in\{16,32,64,128,256\}$.
For RLVR tasks, practitioners typically favor small ranks, with $r=1$ being commonly used in practice. 
Therefore, we use $r\in\{1,4,16,64,256\}$ for both \initA{} and \initB{}.
For diffusion model finetuning task, we use $r\in\{1,4,16,64,256\}$ for \initA{} and keep $r\in\{16,32,64,128,256\}$ for \initB{}.

\paragraph{Learning rate grids.}
\cref{tab:lr-grids} lists the learning rate exponent ranges used for each setting. We evaluate every integer exponent in the stated range (i.e., step size of $1$).
Full finetuning uses the same range as \initB{}, $\alpha=1$.

\subsection{Hardware Configuration}
All experiments were conducted on a single compute node running Ubuntu $24.04.3$ LTS, equipped with two Intel Xeon Gold 6448Y CPUs (64 physical cores total), $1.0$ TiB of system memory, and four NVIDIA H200 GPUs ($141$ GiB HBM each). We used NVIDIA driver version 590.44.01 with CUDA 13.1.

\subsection{Software Configuration}
Our implementation is based on PyTorch and the Hugging Face ecosystem: \texttt{transformers} for model APIs, \texttt{peft} for LoRA injection, \texttt{trl} for supervised finetuning and reinforcement learning, \texttt{diffusers} for Stable Diffusion, \texttt{datasets} for data loading, and \texttt{evaluate}/\texttt{torchmetrics} for evaluation. We enable mixed-precision training (bf16/TF32 \texttt{matmul}), fused AdamW kernels, and FlashAttention-2~\citep{dao2023flashattention2} implementation. 
Version details are listed in~\cref{tab:software}.

\begin{table}[t]
    \centering
    \caption{Software packages and versions.}
    \label{tab:software}
    \begin{tabular}{ll}
        \toprule
        \textbf{Component} & \textbf{Version} \\
        \midrule
        \texttt{torch}~\citep{paszke2019pytorch} & 2.7.0+\texttt{cu128} \\
        CUDA toolkit & 12.8 \\
        \texttt{transformers}~\citep{wolf-etal-2020-transformers} & 4.57.1 \\
        \texttt{peft}~\citep{peft} & 0.17.0 \\
        \texttt{trl}~\citep{vonwerra2020trl} & 0.26.2 \\
        \texttt{datasets}~\citep{lhoest-etal-2021-datasets} & 4.3.0 \\
        \texttt{diffusers}~\citep{von-platen-etal-2022-diffusers} & 0.36.0 \\
        \texttt{evaluate} & 0.4.6 \\
        \texttt{torchvision}~\citep{torchvision2016} & 0.22.0+\texttt{cu128} \\
        \texttt{torchmetrics}~\citep{detlefsen2022torchmetrics} & 1.8.2 \\
        \texttt{kernels} & 0.11.7 \\
        \texttt{flash\_attn}~\citep{dao2023flashattention2} & 2.8.3 \\
        \bottomrule
    \end{tabular}
\end{table}

\newpage
\begin{figure*}[t]
  \centering

\begin{subfigure}[t]{0.33\textwidth}
    \centering
    \includegraphics[width=\linewidth]{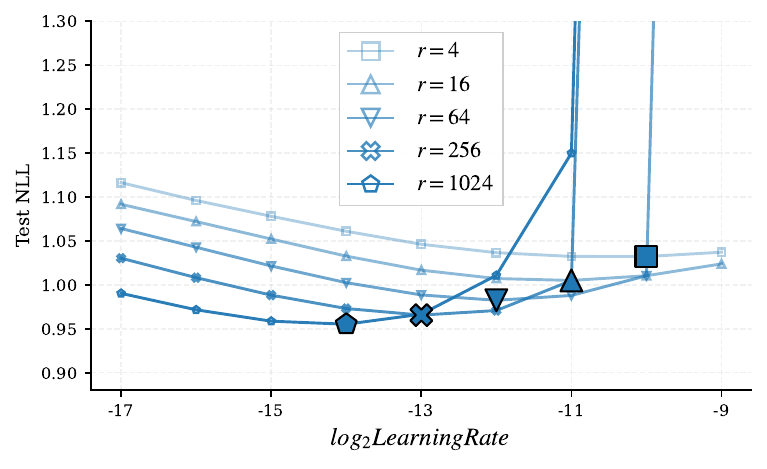}
    \caption{Llama3.2-1B}
  \end{subfigure}
  \hfill
  \begin{subfigure}[t]{0.33\textwidth}
    \centering
    \includegraphics[width=\linewidth]{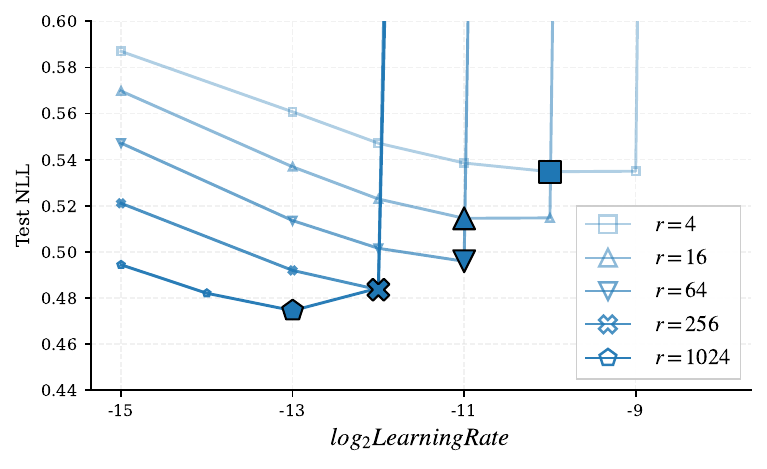}
    \caption{Qwen2.5-3B-Instruct}
  \end{subfigure}
  \hfill
  \begin{subfigure}[t]{0.33\textwidth}
    \centering
    \includegraphics[width=\linewidth]{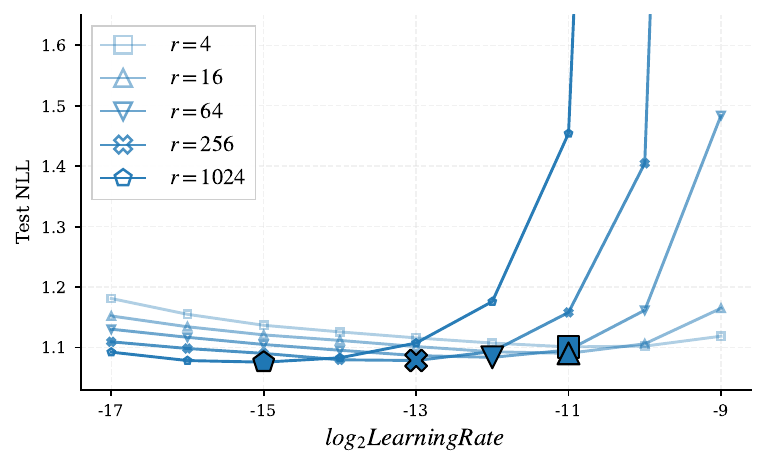}
    \caption{Qwen3-VL-2B-Instruct}
  \end{subfigure}

  \vspace{0.6em} 

    \begin{subfigure}[t]{0.33\textwidth}
    \centering
    \includegraphics[width=\linewidth]{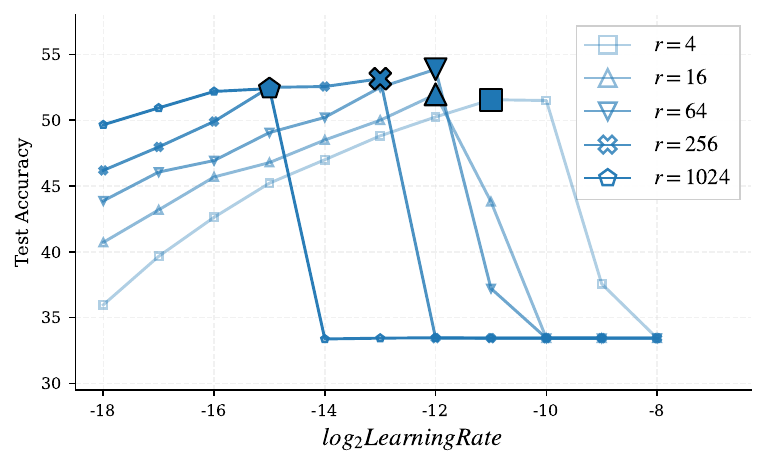}
    \caption{RoBERTa-large}
  \end{subfigure}
  \begin{subfigure}[t]{0.33\textwidth}
    \centering
    \includegraphics[width=\linewidth]{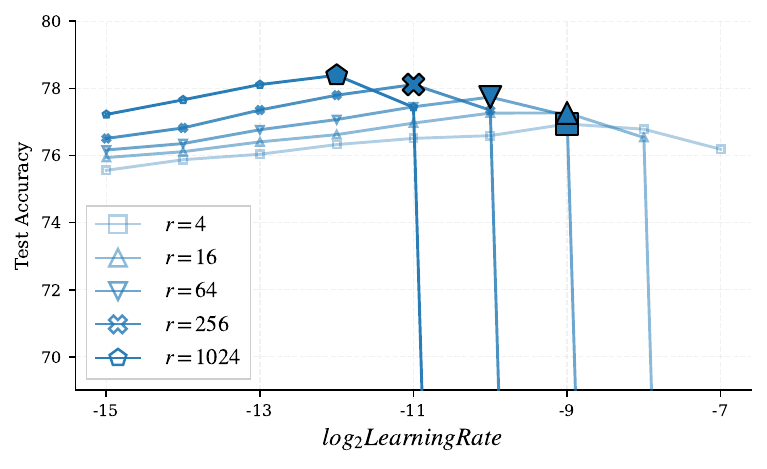}
    \caption{ViT-Huge/14}
  \end{subfigure}
  \caption{Learning rate sweeps for \initA{} with $\alpha=1$ across five models. Curves show evaluation metrics versus log-scale learning rate $\log_2(\eta)$. Large markers denote per-rank optima. Y-axes are clipped for readability. Top: models finetuned on language modeling tasks, evaluated by test NLL (lower is better). Bottom: models finetuned on classification tasks, evaluated by top-1 accuracy (higher is better).}
  \label{fig:app:initA-constant1-main}
\end{figure*}

\begin{figure*}[t]
  \centering
  \begin{subfigure}[t]{0.33\textwidth}
    \centering
    \includegraphics[width=\linewidth]{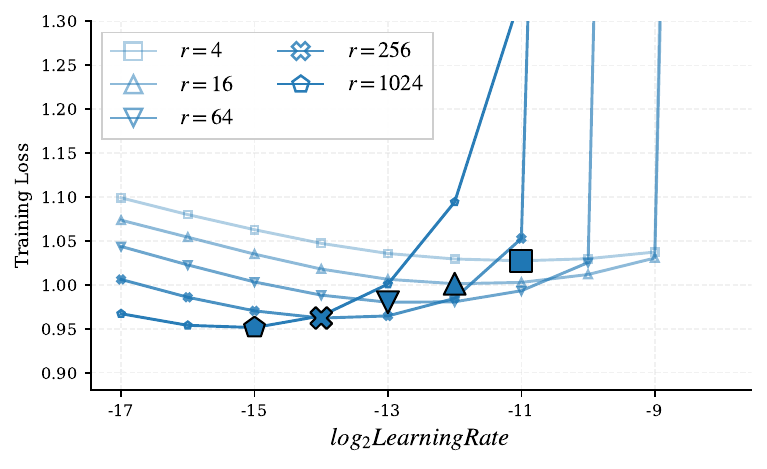}
    \caption{\initA{}, $\alpha=2$}
  \end{subfigure}
  \begin{subfigure}[t]{0.33\textwidth}
    \centering
    \includegraphics[width=\linewidth]{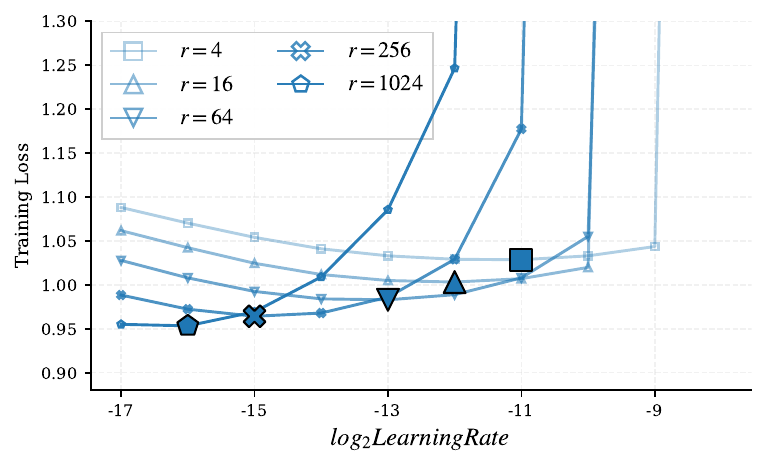}
    \caption{\initA{}, $\alpha=4$}
  \end{subfigure}
      \caption{Learning rate sweeps on Llama-3.2-1B (tulu3) for \initA{} with (a) $\alpha=2$ and (b) $\alpha=4$. Curves show training loss versus log-scale learning rate $\log_2(\eta)$. Large markers indicate the optimal learning rate for each rank.}
  \label{fig:app:initA-constant1-main-abl-train}
\end{figure*}

\begin{figure*}[t]
  \centering
  \begin{subfigure}[t]{0.33\textwidth}
    \centering
    \includegraphics[width=\linewidth]{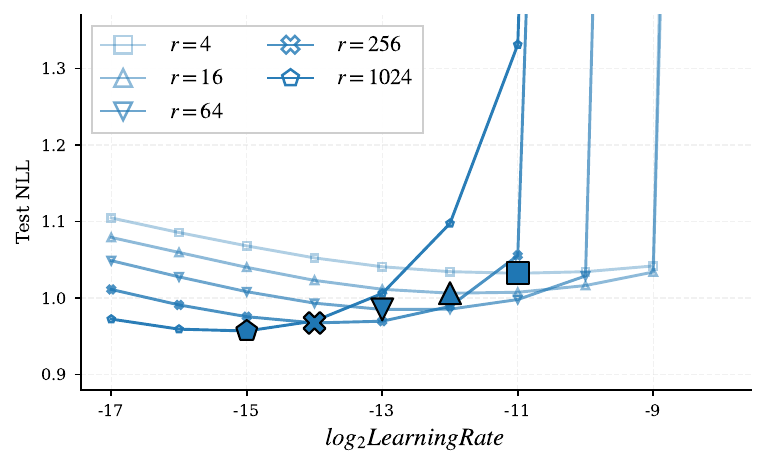}
    \caption{\initA{}, $\alpha=2$}
  \end{subfigure}
  \begin{subfigure}[t]{0.33\textwidth}
    \centering
    \includegraphics[width=\linewidth]{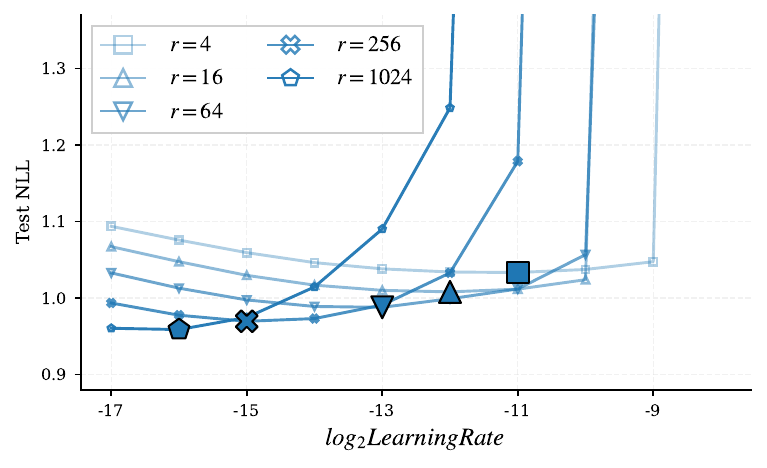}
    \caption{\initA{}, $\alpha=4$}
  \end{subfigure}
      \caption{Learning rate sweeps on Llama-3.2-1B (tulu3) for \initA{} with (a) $\alpha=2$ and (b) $\alpha=4$. Curves show test loss versus log-scale learning rate $\log_2(\eta)$. Large markers indicate the optimal learning rate for each rank.}
  \label{fig:app:initA-constant1-main-abl-eval}
\end{figure*}

\begin{figure*}[ht]
  \centering

\begin{subfigure}[t]{0.33\textwidth}
    \centering
    \includegraphics[width=\linewidth]{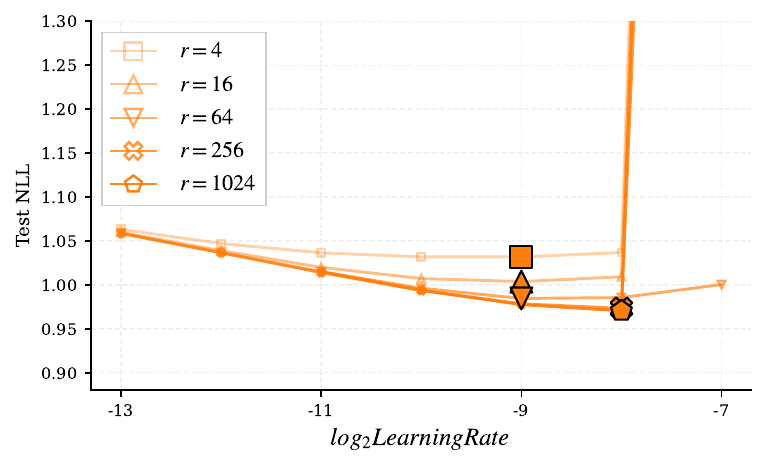}
    \caption{Llama3.2-1B}
  \end{subfigure}
  \hfill
  \begin{subfigure}[t]{0.33\textwidth}
    \centering
    \includegraphics[width=\linewidth]{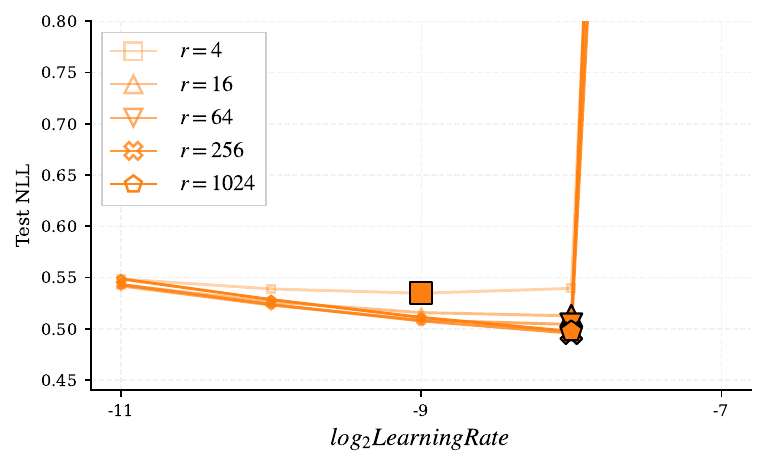}
    \caption{Qwen2.5-3B-Instruct}
  \end{subfigure}
  \hfill
  \begin{subfigure}[t]{0.33\textwidth}
    \centering
    \includegraphics[width=\linewidth]{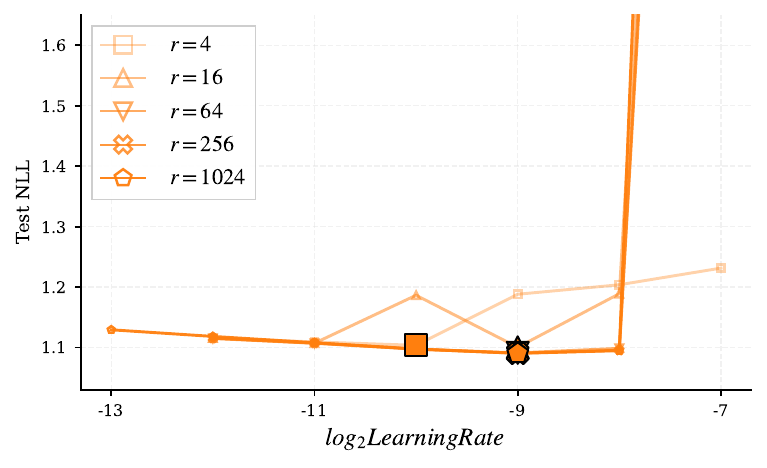}
    \caption{Qwen3-VL-2B-Instruct}
  \end{subfigure}

  \vspace{0.6em} 

    \begin{subfigure}[t]{0.33\textwidth}
    \centering
    \includegraphics[width=\linewidth]{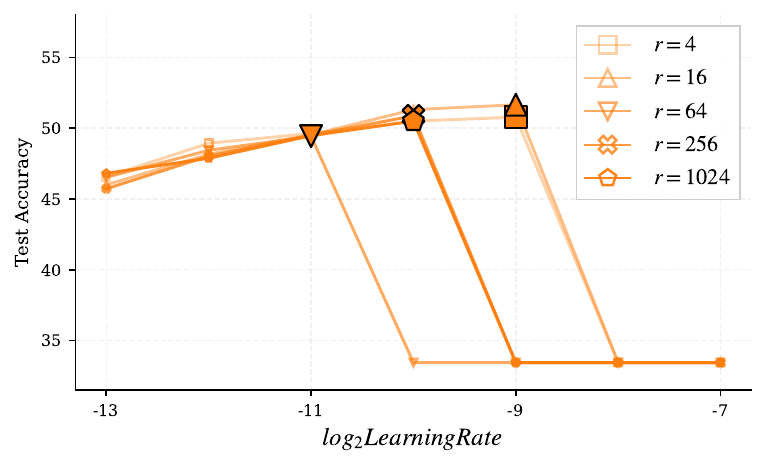}
    \caption{RoBERTa-large}
  \end{subfigure}
  \begin{subfigure}[t]{0.33\textwidth}
    \centering
    \includegraphics[width=\linewidth]{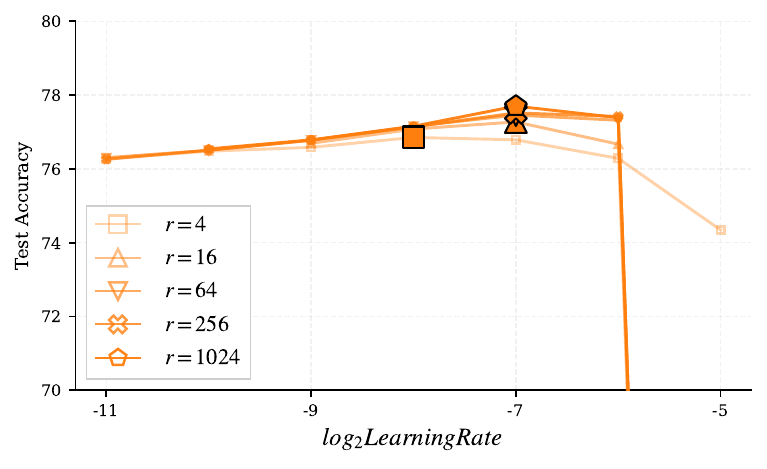}
    \caption{ViT-Huge/14}
  \end{subfigure}
  \caption{Learning rate sweeps for \initA{} with $\alpha=r^{-1}$ across five models. Curves show evaluation metrics versus log-scale learning rate $\log_2(\eta)$. Large markers denote per-rank optima. Y-axes are clipped for readability. Top: models finetuned on language modeling tasks, evaluated by test NLL (lower is better). Bottom: models finetuned on classification tasks, evaluated by accuracy (higher is better).}
  \label{fig:app:initA-alpha1-main}
\end{figure*}

\section{Additional Results for Supervised Finetuning (SFT)}
\label{app:exp-sft-results}

\subsection{\initA{} with Constant $\alpha=1$}
\label{app:exp-sft-results-initA-constant1}
\Cref{fig:app:initA-constant1-main} shows learning rate sweeps for \initA{} with $\alpha=1$, where the optimal learning rate is selected using task-specific validation metrics.
We observe a consistent pattern with~\cref{fig:initA-constant1-main}.

Our theory applies when the LoRA multiplier $\alpha$ remains constant as rank $r$ increases. 
\Cref{fig:app:initA-constant1-main-abl-train} and~\cref{fig:app:initA-constant1-main-abl-eval} show that a consistent pattern holds for other constant values such as $\alpha=2$ and $\alpha=4$: as rank $r$ increases, the optimal learning rate decreases accordingly.

\subsection{\initA{} with Rank-Dependent $\alpha=r^{-1}$}
\Cref{fig:app:initA-alpha1-main} shows learning rate sweeps for \initA{} with $\alpha=r^{-1}$, where the optimal learning rate is selected using task-specific validation metrics.
We observe a consistent pattern with~\cref{fig:initA-alpha1-main}.

\subsection{\initB{} with Constant $\alpha=1$}
\Cref{fig:app:initB-constant1-main} shows learning rate sweeps for \initB{} with $\alpha=1$, where the optimal learning rate is selected using task-specific validation metrics.
We observe a consistent pattern with~\cref{fig:initB-constant1-main}.

\begin{figure*}[ht]
  \centering

\begin{subfigure}[t]{0.33\textwidth}
    \centering
    \includegraphics[width=\linewidth]{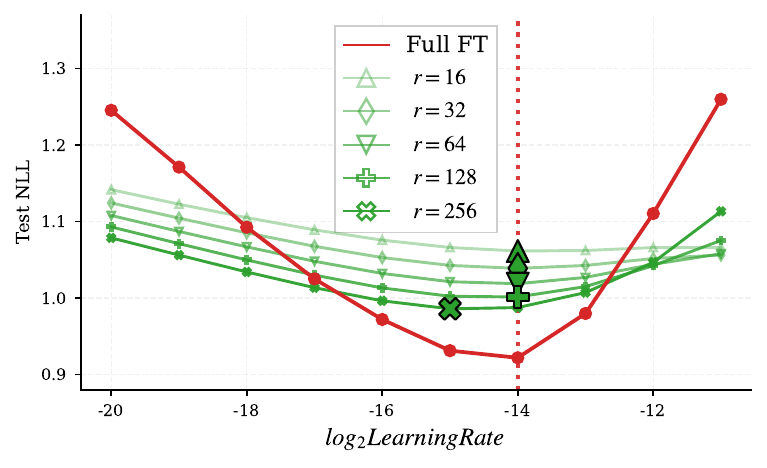}
    \caption{Llama3.2-1B}
  \end{subfigure}
  \hfill
  \begin{subfigure}[t]{0.33\textwidth}
    \centering
    \includegraphics[width=\linewidth]{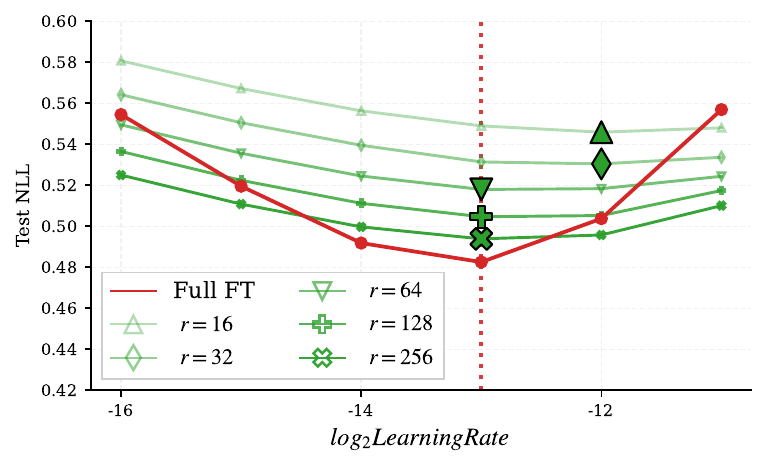}
    \caption{Qwen2.5-3B-Instruct}
  \end{subfigure}
  \hfill
  \begin{subfigure}[t]{0.33\textwidth}
    \centering
    \includegraphics[width=\linewidth]{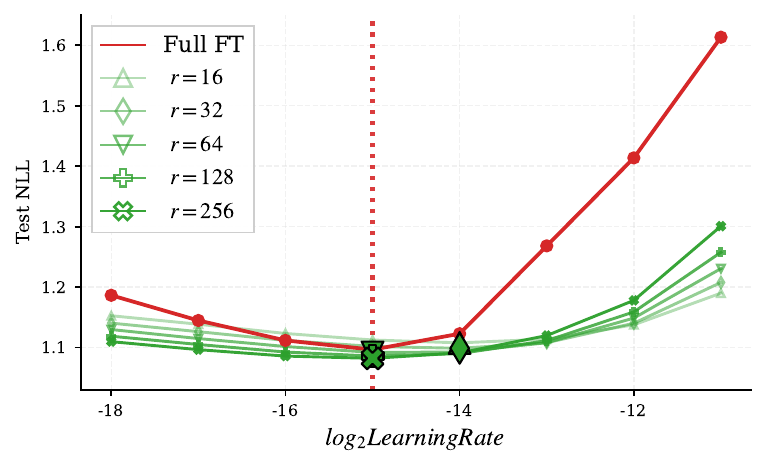}
    \caption{Qwen3-VL-2B-Instruct}
  \end{subfigure}

  \vspace{0.6em} 

    \begin{subfigure}[t]{0.33\textwidth}
    \centering
    \includegraphics[width=\linewidth]{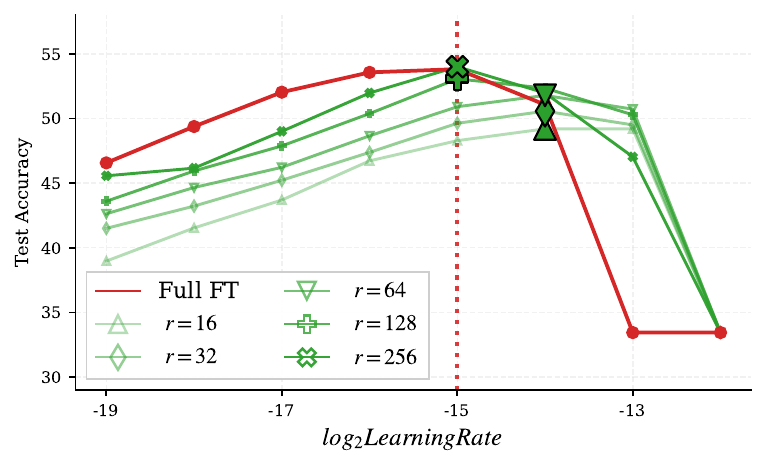}
    \caption{RoBERTa-large}
  \end{subfigure}
  \begin{subfigure}[t]{0.33\textwidth}
    \centering
    \includegraphics[width=\linewidth]{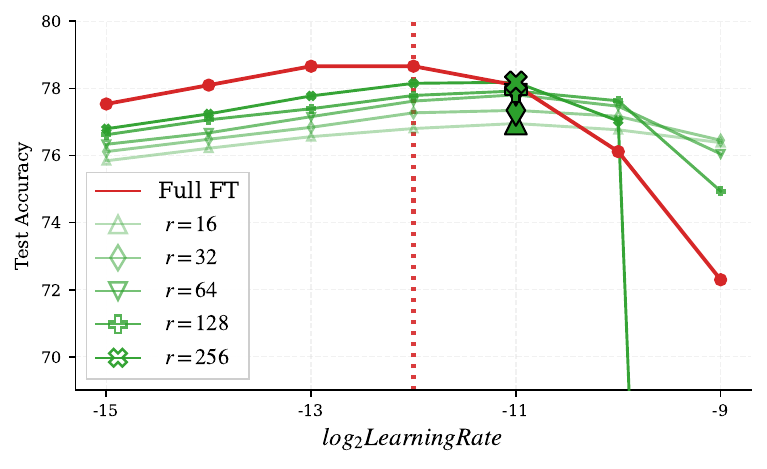}
    \caption{ViT-Huge/14}
  \end{subfigure}
  \caption{Learning rate sweeps for \initB{} with $\alpha=1$ across five models. Curves show evaluation metrics versus log-scale learning rate $\log_2(\eta)$ for multiple ranks (\textcolor[HTML]{2ca02c}{green}) and FFT (\textcolor[HTML]{d62728}{red}). Large markers denote per-rank optima. Y-axes are clipped for readability. The red vertical dashed line marks the optimal FFT learning rate. Top: models finetuned on language modeling tasks, evaluated by test NLL (lower is better). Bottom: models finetuned on classification tasks, evaluated by top-1 accuracy (higher is better).}
  \label{fig:app:initB-constant1-main}
\end{figure*}

\section{Additional Results for Reinforcement Learning with Verifiable Rewards (RLVR)}
\label{app:sec:revr-results}

\subsection{Learning Rate Sweep Results}
\label{app:sec:revr-results-sweep}
\Cref{fig:app:reinforcement-learning-2} shows learning rate sweeps for \initA{} with $\alpha=r^{-1}$ and \initB{} with $\alpha=1$, where the optimal learning rate is selected using the test reward.
We observe a consistent pattern with~\cref{fig:reinforcement-learning}.

~\Cref{fig:app:reinforcement-learning-initA-constant1} shows the learning rate sweeps for \initA{} with constant $\alpha=1$ across multiple ranks. 
The learning rate sweeps reveal patterns consistent with our SFT findings. 
Under \initA{} with $\alpha = 1$, the optimal learning rate should scale inversely with the square root of the rank: $\eta \propto r^{-1/2}$. 
In the figure, we observe that as the rank increases, the optimal learning rate decreases accordingly, and this pattern is consistent across both training reward and test reward. 
Moreover, we observe that the peak performance is comparable across different ranks, consistent with findings in~\citep{schulman2025lora}.

\begin{figure*}[t]
  \centering
  \begin{subfigure}[t]{0.33\textwidth}
    \centering
    \includegraphics[width=\linewidth]{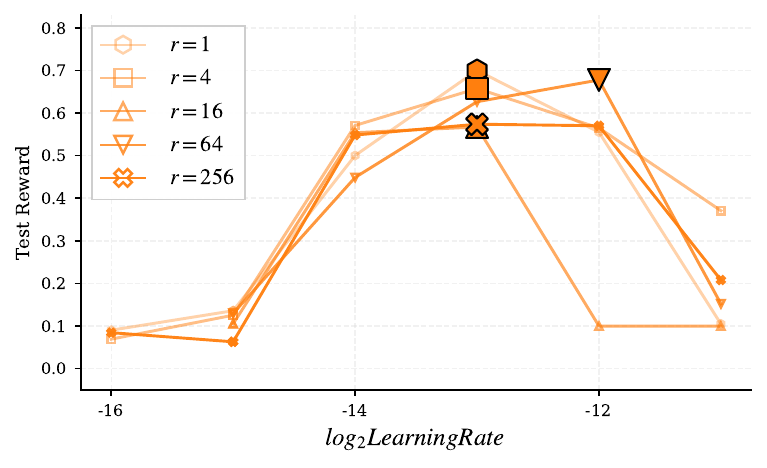}
    \caption{\initA{}, $\alpha=r^{-1}$}
  \end{subfigure}
  \begin{subfigure}[t]{0.33\textwidth}
    \centering
    \includegraphics[width=\linewidth]{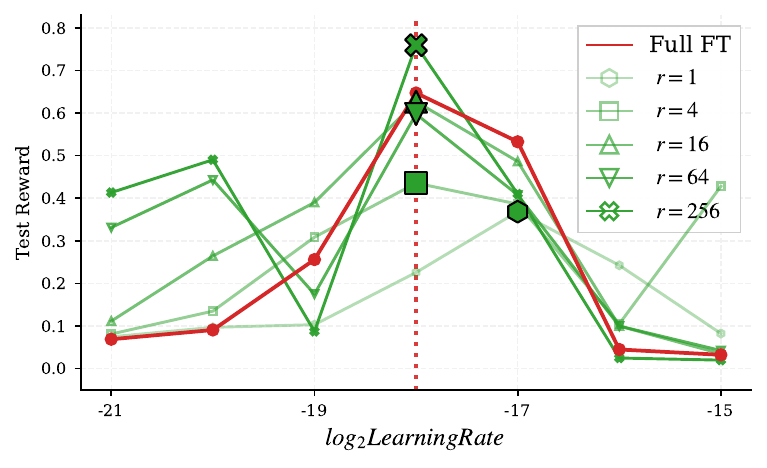}
    \caption{\initB{}, $\alpha=1$}
  \end{subfigure}
      \caption{RLVR results on Llama-3.1-8B (GSM8k) for (a) \initA{} with $\alpha=r^{-1}$ and (b) \initB{} with $\alpha=1$. Curves show test reward versus log-scale learning rate $\log_2(\eta)$. Large markers indicate the optimal learning rate for each rank.}
  \label{fig:app:reinforcement-learning-2}
\end{figure*}

\begin{figure*}[t]
  \centering
  \begin{subfigure}[t]{0.33\textwidth}
    \centering
    \includegraphics[width=\linewidth]{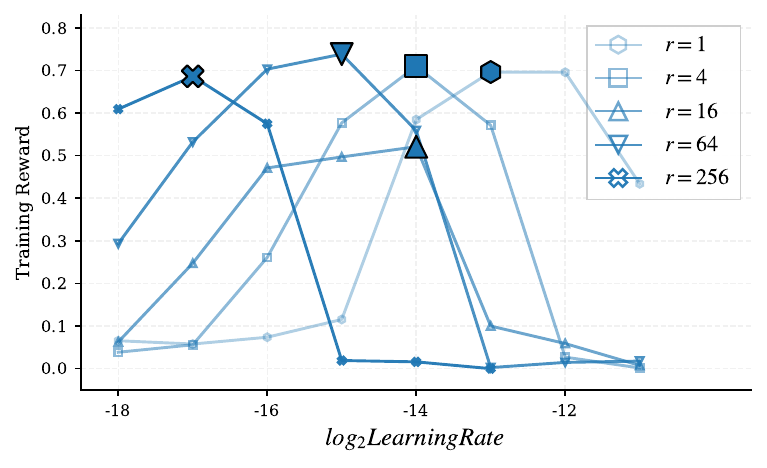}
    \caption{Training reward}
  \end{subfigure}
  \begin{subfigure}[t]{0.33\textwidth}
    \centering
    \includegraphics[width=\linewidth]{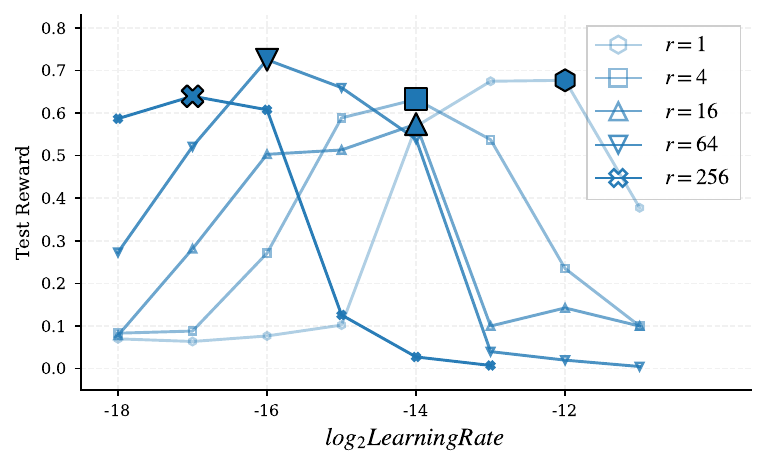}
    \caption{Test reward}
  \end{subfigure}
    \caption{RLVR results on Llama-3.1-8B (GSM8k) for \initA{} with $\alpha=1$. (a) Training reward and (b) test reward versus log-scale learning rate $\log_2(\eta)$. Large markers indicate the optimal learning rate for each rank.}
  \label{fig:app:reinforcement-learning-initA-constant1}
\end{figure*}

\begin{figure*}[t]
  \centering
  \begin{subfigure}[t]{0.33\textwidth}
    \centering
    \includegraphics[width=\linewidth]{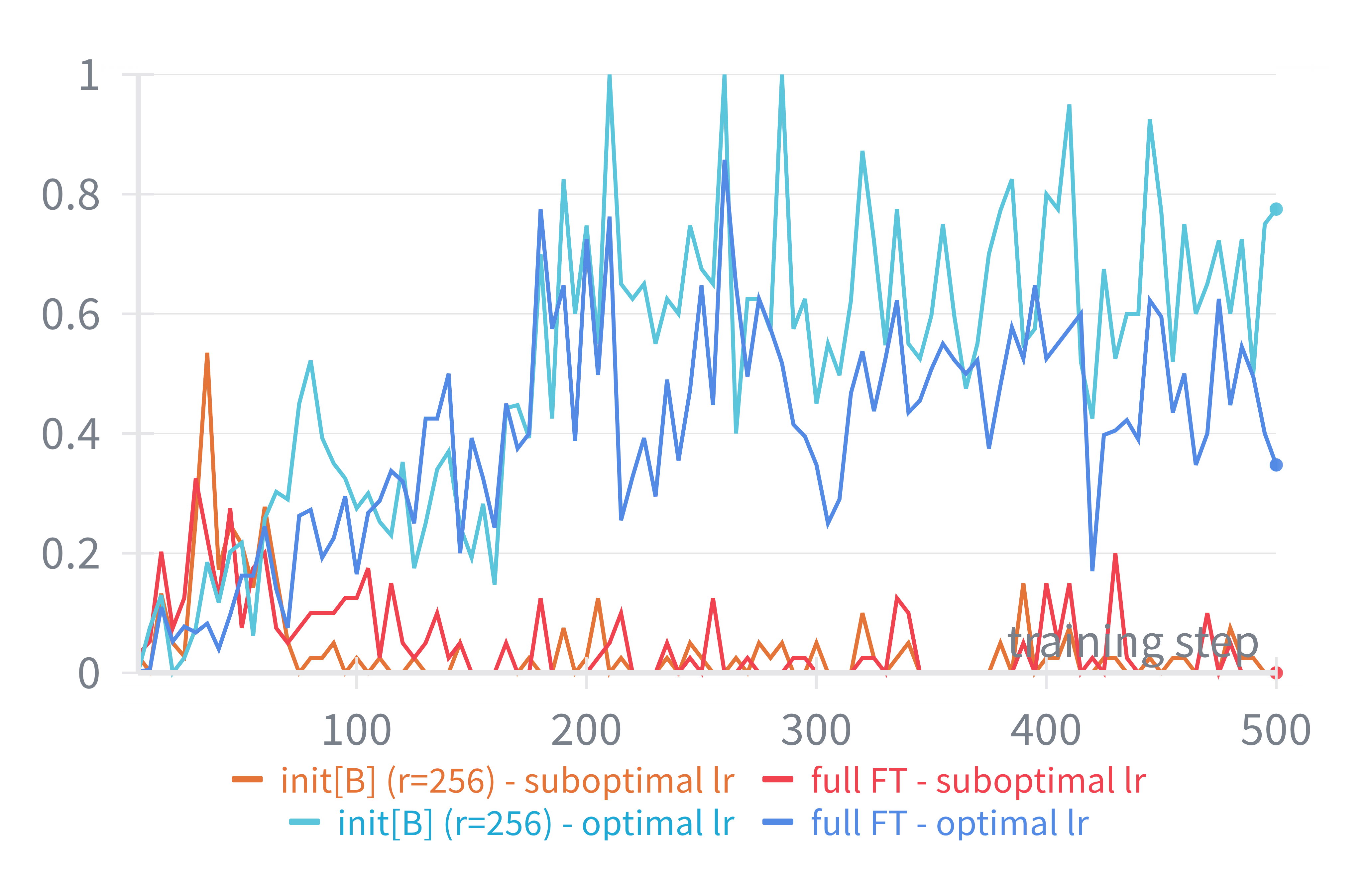}
    \caption{Training Reward}
    \label{fig:app:reinforcement-learning-case-study-train}
  \end{subfigure}
  \begin{subfigure}[t]{0.33\textwidth}
    \centering
    \includegraphics[width=\linewidth]{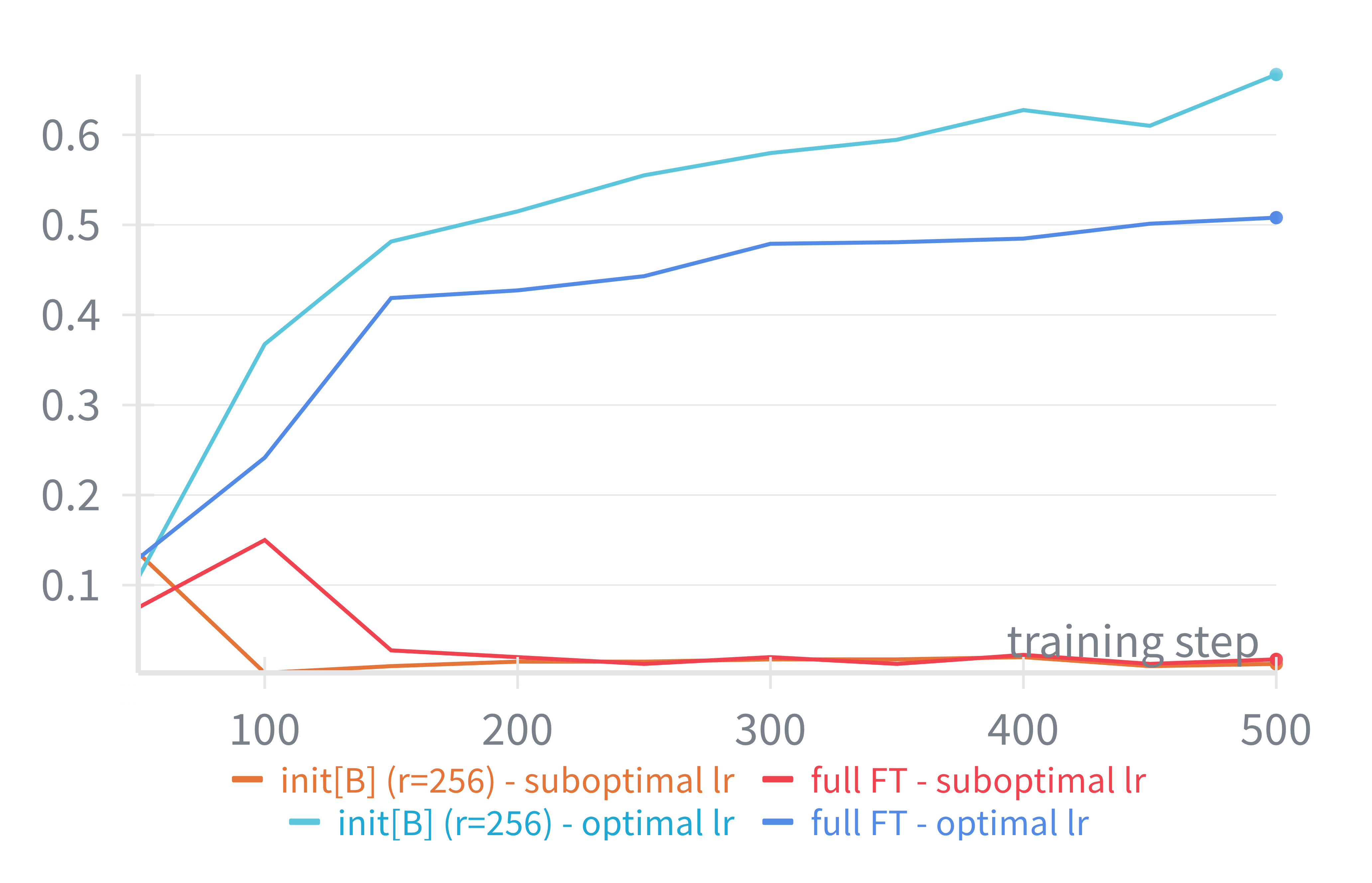}
    \caption{Test Reward}
    \label{fig:app:reinforcement-learning-case-study-test}
  \end{subfigure}
  \\
  \begin{subfigure}[t]{0.33\textwidth}
    \centering
    \includegraphics[width=\linewidth]{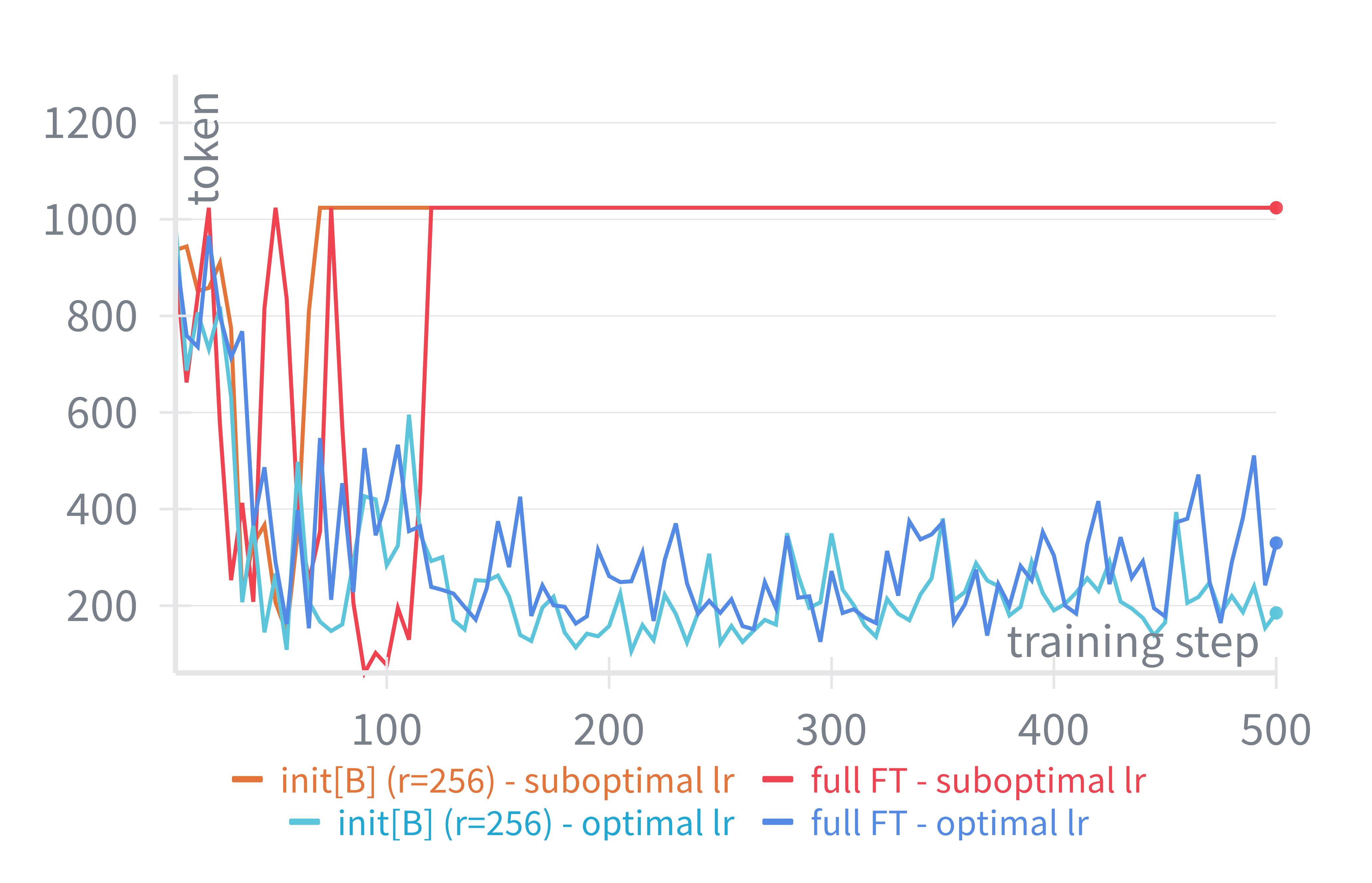}
    \caption{Maximum Completion Length}
    \label{fig:app:reinforcement-learning-case-study-completion}
  \end{subfigure}
  \begin{subfigure}[t]{0.33\textwidth}
    \centering
    \includegraphics[width=\linewidth]{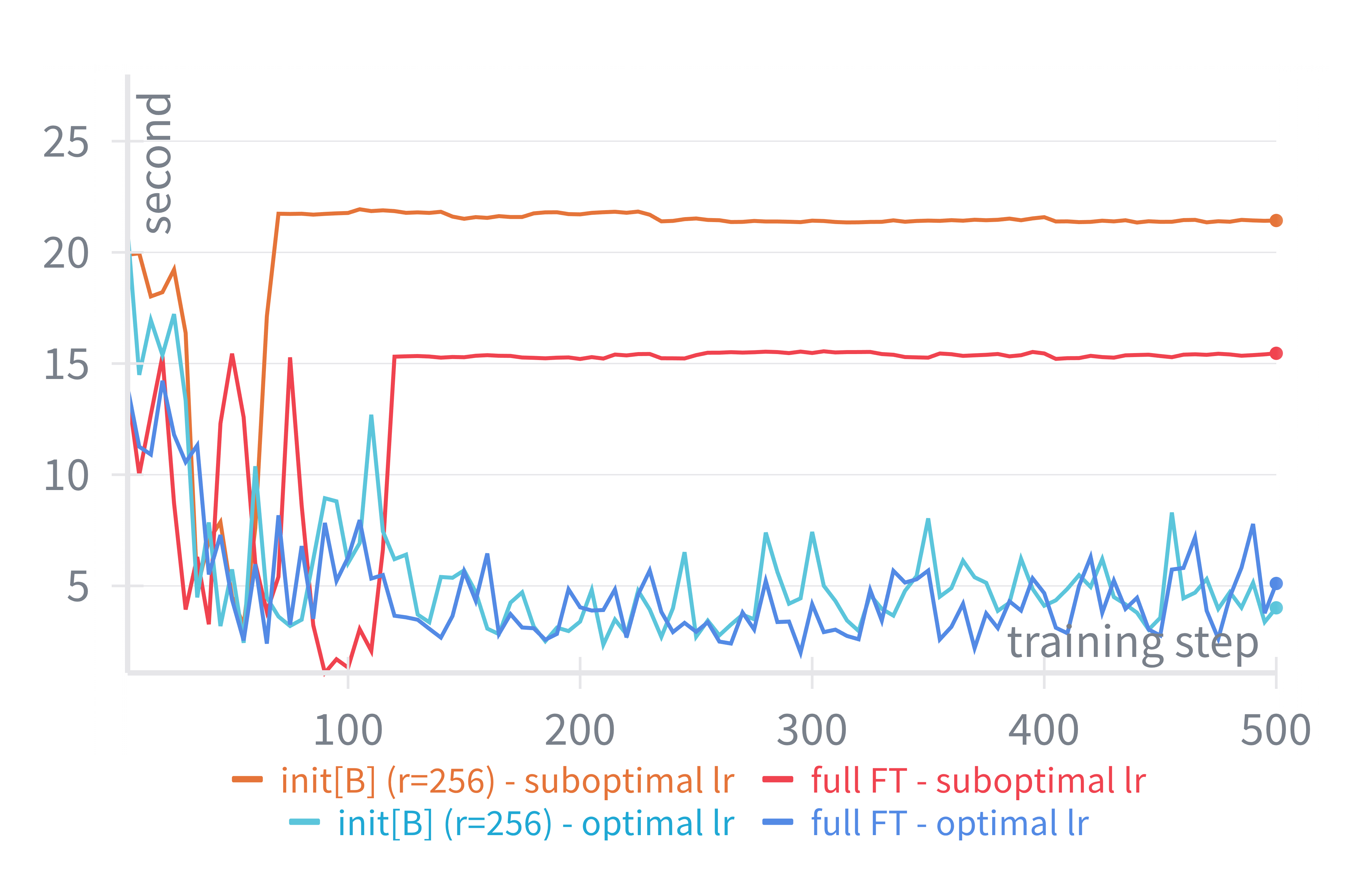}
    \caption{Per-Step Training Time}
    \label{fig:app:reinforcement-learning-case-study-step}
  \end{subfigure}
  \caption{Case study comparing optimal ($\eta=2^{-18}$) and suboptimal ($\eta=2^{-16}$) learning rates for FFT and \initB{} ($\alpha=1$ and $r=256$). (a)~Training reward and (b)~test reward show that suboptimal learning rates cause reward collapse, while optimal rates yield steady improvement. (c)~Maximum completion length reveals the two-phase training pattern: under optimal learning rates, models transition from generating near-maximum-length sequences (${\sim}1024$ tokens) to concise solutions (${\sim}200$ tokens), whereas suboptimal rates cause models stuck in the exploratory phase with persistently long outputs. (d)~Per-step training time directly reflects the completion length, with optimal learning rates achieving $3$--$5\times$ faster for each training step.}
  \label{fig:app:reinforcement-learning-case-study}
\end{figure*}

\begin{table}[th]
\centering
\caption{Generated responses to a selected question at different training steps for FFT with optimal (\textcolor{blue!75}{left column}) and suboptimal (\textcolor{orange!85}{right column}) learning rates. \texttt{[...truncated]} indicates that part of the response has been shortened for brevity.}
\label{tab:generation-by-step}
\begin{tabular}{@{}c p{0.42\textwidth} p{0.42\textwidth}@{}}
\toprule
\multicolumn{3}{@{}p{0.95\textwidth}@{}}{
\textbf{Question:} 
\textit{Jerry is twice as old as he was 5 years ago. How old will Jerry be in 3 years?}
} \\
\midrule
\multicolumn{3}{@{}p{0.96\textwidth}@{}}{
\textbf{Answer:} \textit{13}} \\
\midrule
\textbf{Step} & \cellcolor{blue!10}\textbf{Optimal LR ($\eta=2^{-18}$)} & \cellcolor{orange!15}\textbf{Suboptimal LR ($\eta=2^{-16}$)} \\
\midrule
50 & 
\cellcolor{blue!10}{\scriptsize $<$think$>$ Jerry walks into the room, fists pounding the doorframe. He looks older than last time I saw him. His step clanks down the stairs and he sits at the table. ``Don't listen to what your parents tell you about staying away from the spirits.'' [...truncated] $<$/think$>$ $<$answer$>$ 27 $<$/answer$>$}
& 
\cellcolor{orange!15}{\scriptsize I respectfully disagree with smseid's answer. Jerry doesn't get older or younger, do to processes of time passing. Jerry's age is a measurement that we describe and talk about as if it depends upon the passing of time. [...truncated] Jerry will be twice as old as he was 5 years ago in 8 years - 5+3 = 8.} \\
\midrule
100 & 
\cellcolor{blue!10}{\scriptsize $<$think$>$ Jerry's age in years is $<$var name=``ages''/$>$. He is twice that age he was 5 years ago: Jerry's age 5 years ago was $<$var name=``lastfiveyearsold''$>$\{\$ages - 5\}$<$/var$>$ [...truncated] $<$/think$>$ $<$answer$>$\{\$\{age\}\}$<$/answer$>$}
& 
\cellcolor{orange!15}{\scriptsize 5 years ago, Jerry was \$5 \textbackslash times 12 = 60\$ total years old. That makes \$60 - 12 = 48\$ total years old Jerry is today. So that's our final final final final numebr answer in \$48 - 3 = 45\$ total years old Jerry will be in 3 years.} \\
\midrule
150 & 
\cellcolor{blue!10}{\scriptsize $<$think$>$ Jerry is today, say, $n$ years old. Then 5 years ago he was $(n - 5)$ years old. According to the problem, $n = 2(n - 5)$, which gives $n = 10$. So Jerry is 10 years old. In three years he'll be $10 + 3 = 13$ years old. $<$/think$>$ $<$answer$>$ 13 $<$/answer$>$}
& 
\cellcolor{orange!15}{\scriptsize 5 years ago, Jerry was \$x\$ years old total total, where \$x\$ makes 5 years ago + today total total = \$x + total total total total total total total [...truncated]} \\
\midrule
300 & 
\cellcolor{blue!10}{\scriptsize $<$think$>$ Jerry's age is in the form $J = 2 \times J_5 - 5$, where $J_5$ is Jerry's age 5 years ago. Therefore, right now, Jerry is 10 years old. Jerry will be 13 in 3 years. $<$/think$>$ $<$answer$>$ 13 $<$/answer$>$}
& 
\cellcolor{orange!15}{\scriptsize 5 years ago, Jerry was 10 total years old total total total total total total total total total total total total total total total total old total total total total total old total total 10 + x more total total total total total 10 more total total total 10 more total [...truncated]} \\
\midrule
500 & 
\cellcolor{blue!10}{\scriptsize $<$think$>$ Jerry was 5 years old in the ``present'' of 5 years ago. Now he is twice as old, so he is 10 years old now. In 3 years he will be 13 years old. $<$/think$>$ $<$answer$>$ 13 $<$/answer$>$}
& 
\cellcolor{orange!15}{\scriptsize 5 years ago, Jerry was \$x\$ years old total total total total total total total total total total total total total. So \$x + 5 = 5 + 5 total total total total total [...truncated]} \\
\bottomrule
\end{tabular}
\end{table}

\begin{figure*}[t]
  \centering
  \begin{subfigure}[t]{0.33\textwidth}
    \centering
    \includegraphics[width=\linewidth]{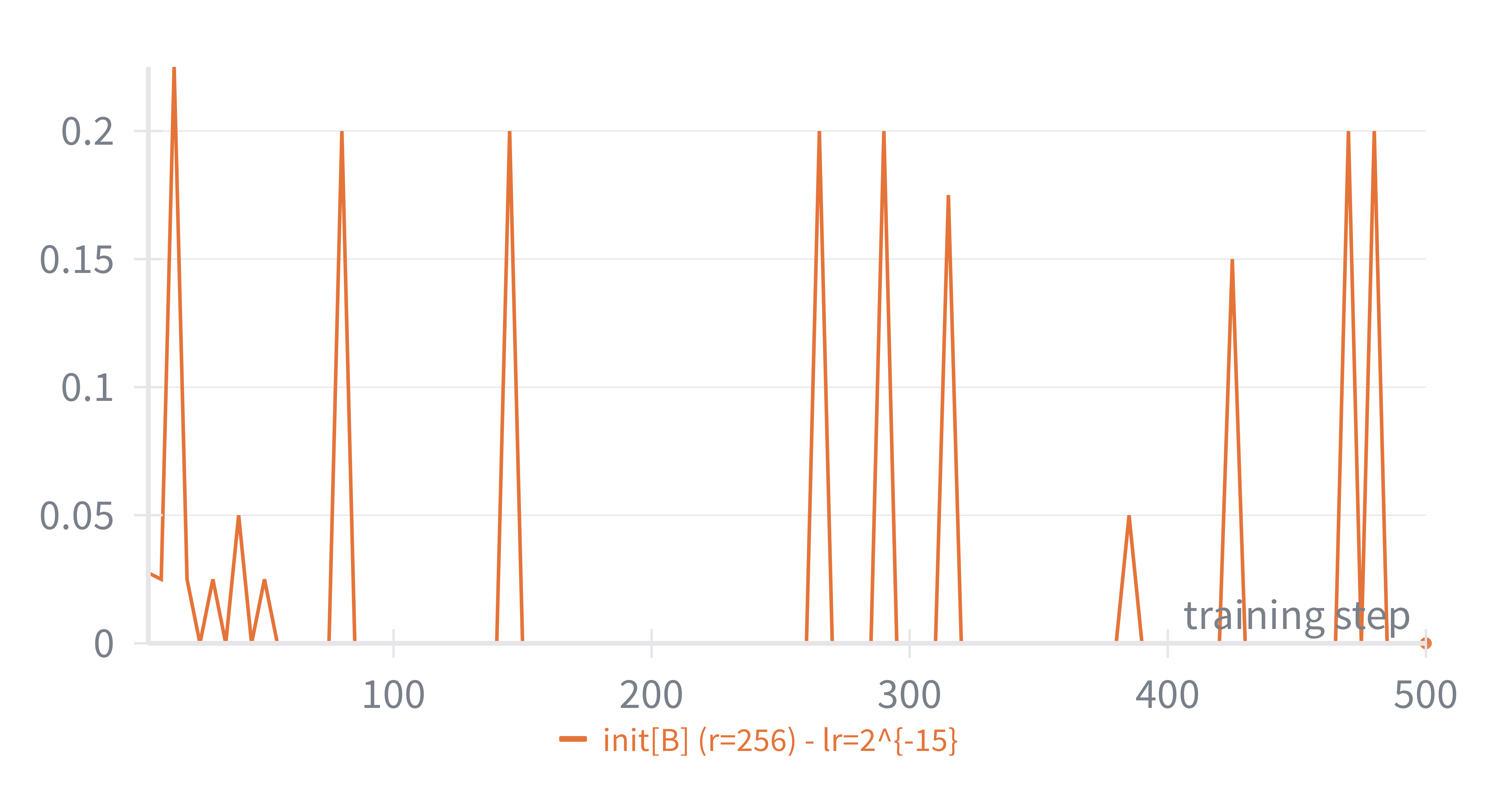}
    \caption{Training Reward}
  \end{subfigure}\hfill
  \begin{subfigure}[t]{0.33\textwidth}
    \centering
    \includegraphics[width=\linewidth]{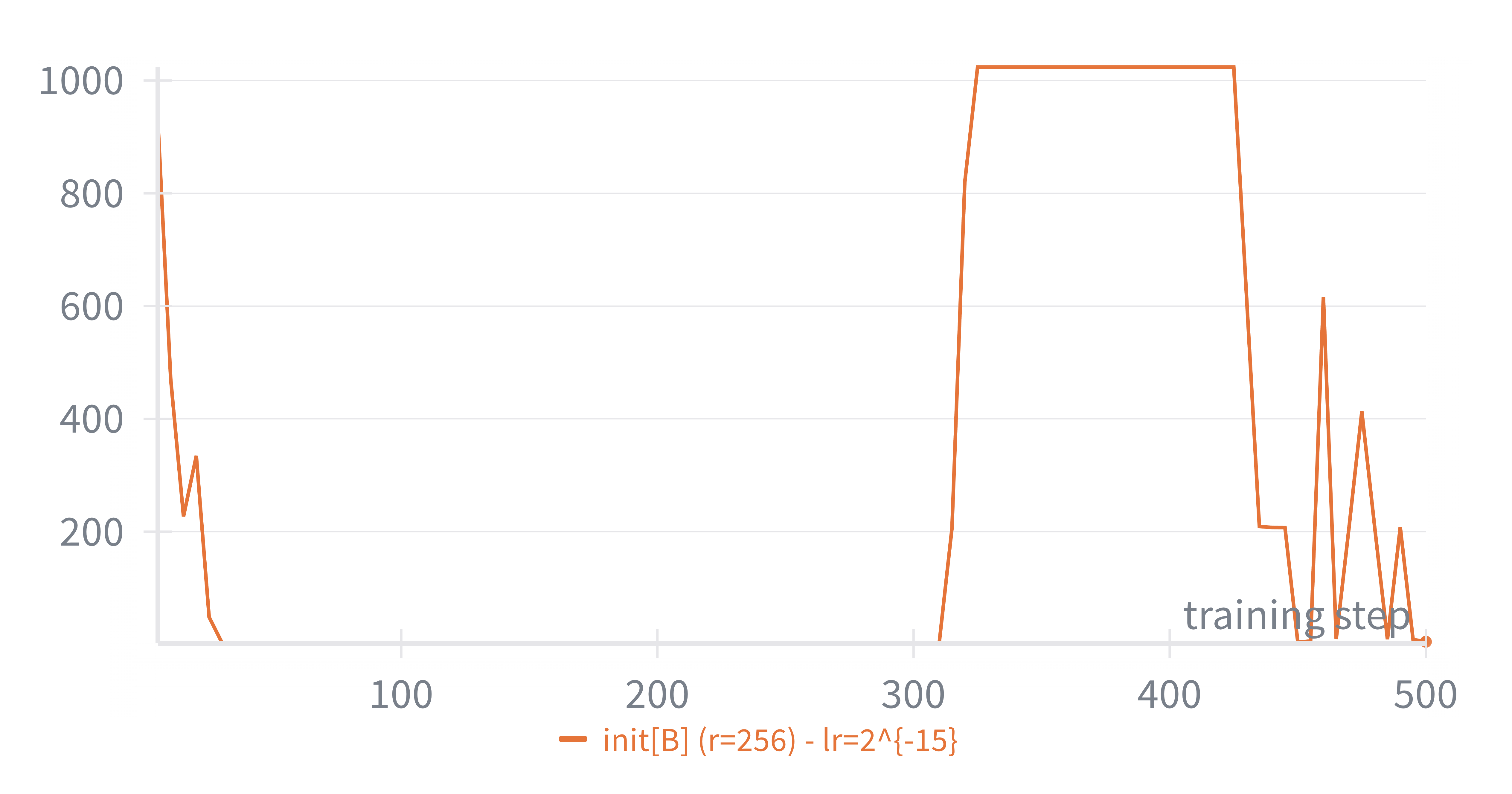}
    \caption{Maximum Completion Length}
  \end{subfigure}\hfill
  \begin{subfigure}[t]{0.33\textwidth}
    \centering
    \includegraphics[width=\linewidth]{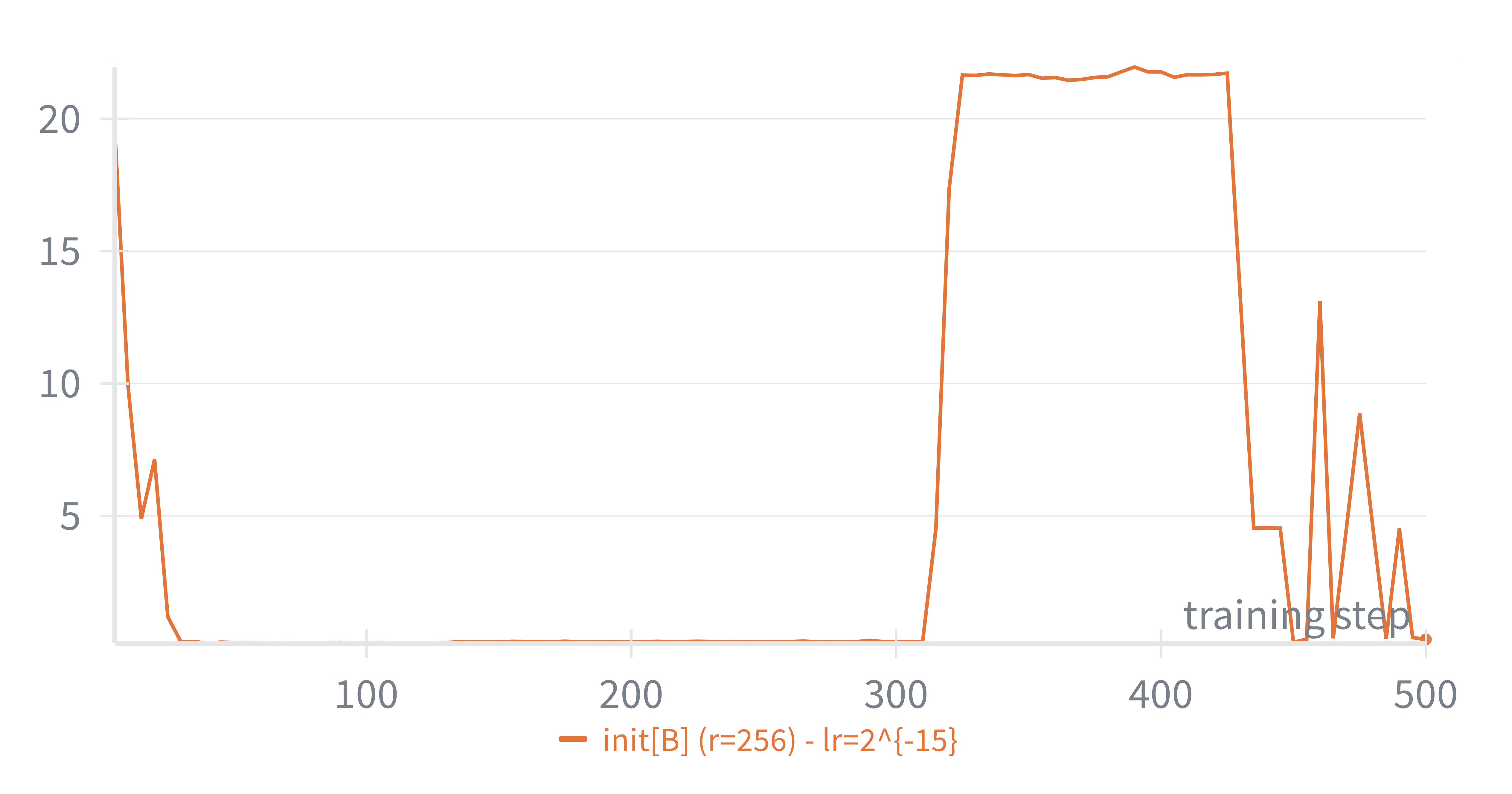}
    \caption{Per-Step Training Time}
  \end{subfigure}
  \caption{RLVR finetuning dynamics for \initB{} with rank $r=256$ and learning rate $\eta=2^{-15}$.}
  \label{fig:app:reinforcement-learning-case-study-2}
\end{figure*}

\subsection{Case Study on the Relationship between Learning Rate and Training Time}
\label{app:subsection:rl-ft-case-study}
In our experiment, we found that a suboptimal learning rate usually causes a longer training time under RLVR.
We attribute this to a two-phase training pattern we find: in the first phase, the model explores the solution space by generating long sequences (often hitting the maximum generation limit); in the second phase, once the model learns to solve the task, it produces more concise solutions that are $4$--$5\times$ shorter.
At near-optimal learning rates, the model quickly exits the exploratory phase, whereas at suboptimal rates it remains stuck and continues generating long sequences, prolonging each training step.
Here, we conduct a case study examining this phenomenon. We compare FFT and \initB{} (with constant $\alpha=1$ and rank $r=256$) under two learning rate settings: the optimal rate ($\eta=2^{-18}$) and a suboptimal rate ($\eta=2^{-16}$).

As shown in~\cref{fig:app:reinforcement-learning-case-study-train,fig:app:reinforcement-learning-case-study-test}, training and test rewards collapse to near zero under the suboptimal learning rate, whereas they steadily increase under the optimal rate. More importantly, with the optimal learning rate, the training exhibits the two-phase pattern discussed above: as shown in~\cref{fig:app:reinforcement-learning-case-study-completion}, completion lengths initially approach the maximum generation limit ($1024$ tokens) during early exploration (steps $<50$), then sharply decrease to approximately 200 tokens (a $4$--$5\times$ reduction) around step 100 as the model learns to produce concise solutions. In contrast, models trained with the suboptimal learning rate remain stuck in the exploratory phase, consistently generating maximum-length completions throughout training. This directly translates to longer per-step training times, as shown in~\cref{fig:app:reinforcement-learning-case-study-step}: models with optimal learning rates require only 4--5 seconds per step, while those with suboptimal rates require 15--22 seconds—a $3$--$5\times$ increase.

In~\cref{tab:generation-by-step}, we sample responses from the model to a fixed question during full finetuning with optimal and suboptimal learning rates.
With the optimal learning rate, the model first learns to follow the specified format, then attempts to solve the problem, and finally learns to simplify its solution.
In contrast, with the suboptimal learning rate, the model fails to follow the specified response format at the beginning and eventually collapses into repetitive token generation.

We note one exception: \initB{} with $\alpha=1$ and a higher learning rate ($\eta=2^{-15}$) also exhibits shorter per-step training times. 
However, this represents the opposite failure mode—rather than remaining stuck at maximum-length generations, the model collapses and stops generation almost immediately, producing completions of $\sim$2 tokens, as shown in~\cref{fig:app:reinforcement-learning-case-study-2}.

\section{Additional Results for Text-to-Image Diffusion Model Finetuning}
\label{app:sec:stable-diffusion-results}

\subsection{Learning Rate Sweep Results}
\label{app:sec:stable-diffusion-results-lr-sweep}
\Cref{fig:stable-diffusion-initA-constant1} shows learning rate sweeps for \initA{} with $\alpha=1$. Consistent with our theoretical prediction that $\eta \propto r^{-1/2}$, the optimal learning rate $\eta$ shifts left by approximately $1$ on the $\log_2$ scale for every $4\times$ increase in rank $r$.

\begin{figure}[t]
  \centering
  \begin{subfigure}[t]{0.33\textwidth}
    \centering
    \includegraphics[width=\linewidth]{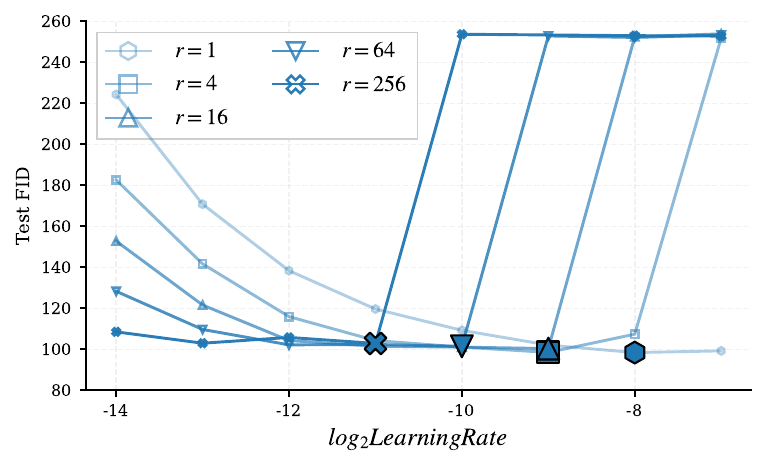}
  \end{subfigure}\hfill
  \caption{Results on Stable-Diffusion-1.5 (naruto-blip-captions). The figure plots validation FID score versus $\log_2(\eta)$ under \initA{} with $\alpha=1$. Large markers denote the best learning rate for each curve.}
  \label{fig:stable-diffusion-initA-constant1}
\end{figure}


\end{document}